\title{Real Time Reasoning in OWL2 for GDPR Compliance}
\author{Piero A.~Bonatti\\Universit\`a di Napoli Federico II \and Luca Ioffredo\\CeRICT \and  Iliana M.~Petrova\\CeRICT \and Luigi Sauro\\Universit\`a di Napoli Federico II \and Ida R.~Siahaan\\CeRICT }
\date{April 19, 2019}
\newtheorem{theorem}{Theorem}[section]
\newtheorem{lemma}[theorem]{Lemma}
\newtheorem{corollary}[theorem]{Corollary}
\newtheorem{proposition}[theorem]{Proposition}
\newtheorem{remark}[theorem]{Remark}
\newtheorem{example}[theorem]{Example}
\newtheorem{definition}[theorem]{Definition}
\newenvironment{proof}
{\begin{trivlist} \item[] {\bf Proof.}}%
{\qed \end{trivlist}}
\newcommand{\qed}{\hspace*{\fill} \rule{1.1mm}{2.2mm}}
\qed \end{trivlist}}
\newenvironment{theorem*}[2]%
{\begin{trivlist} \item[] {\bf #1~\protect{\ref{#2}.}}\it}{\end{trivlist}}
\newcommand{\NOTE}[1]{\marginpar{\footnotesize #1}}
\newcommand{\hide}[1]{}
\newcommand{\tup}[1]{\ensuremath{\langle #1 \rangle}\xspace}
\newcommand{\easycal}[1]{\ensuremath{\mathcal{#1}}\xspace}
\newcommand{\easysf}[1]{\ensuremath{\mathsf{#1}}\xspace}
\newcommand{\easyit}[1]{\ensuremath{\mathit{#1}}\xspace}
\newcommand{\easytt}[1]{\ensuremath{\mathtt{#1}}\xspace}
\newcommand{\mt}[1]{\ensuremath{\mathtt{#1}}\xspace}
\pgfplotsset{compat=1.13}
\newcommand{\grant}{731601}
\newcommand{\SPECIAL}{SPECIAL\xspace}
\newcommand{\SPsite}{\url{https://www.specialprivacy.eu/}}
\newcommand{\EL}{\easycal{EL}}
\newcommand{\ELp}{\easycal{EL^+}}
\newcommand{\ELpp}{\easycal{EL^{++}}}
\newcommand{\DLL}{\mbox{\textit{DL-lite}}\xspace}
\newcommand{\DLLh}{\easyit{\mbox{\it DL-lite}_\mathit{horn}^\mathcal{H}}}
\newcommand{\SRIQ}{\easycal{SRIQ}}
\newcommand{\A}{\easycal{A}}
\newcommand{\I}{\easycal{I}}
\newcommand{\J}{\easycal{J}}
\newcommand{\K}{\easycal{K}}
\newcommand{\M}{\easycal{M}}
\newcommand{\N}{\easycal{N}}
\renewcommand{\L}{\easycal{L}}
\renewcommand{\O}{\easycal{O}}
\newcommand{\T}{\easycal{T}}
\newcommand{\Q}{\easycal{Q}}
\newcommand{\U}{\easycal{U}}
\newcommand{\NC}{\easysf{N_C}}
\newcommand{\NR}{\easysf{N_R}}
\newcommand{\NI}{\easysf{N_I}}
\newcommand{\NF}{\easysf{N_F}}
\newcommand{\ND}{\easysf{D}}
\newcommand{\Np}{\easysf{N_P}}
\newcommand{\NP}{\easysf{NP}}
\newcommand{\func}{\easysf{func}}
\newcommand{\range}{\easysf{range}}
\newcommand{\disj}{\easysf{disj}}
\newcommand{\PL}{\easycal{PL}}
\newcommand{\SSA}{\easysf{STS}}
\newcommand{\PLR}{\easysf{PLR}}
\newcommand{\SSO}{\easysf{STS^\O}}
\newcommand{\PLRO}{\easysf{PLR^\O}}
\newcommand{\KB}{\easycal{KB}}
\newcommand{\BP}{\easycal{BP}}
\newcommand{\purp}{\easysf{has\_purpose}}
\newcommand{\data}{\easysf{has\_data}}
\newcommand{\proc}{\easysf{has\_processing}}
\newcommand{\recip}{\easysf{has\_recipient}}
\newcommand{\stor}{\easysf{has\_storage}}
\newcommand{\loc}{\easysf{has\_location}}
\newcommand{\dur}{\easysf{has\_duration}}
\newcommand{\duty}{\easysf{has\_duty}}
\newcommand{\splt}[2]{\ensuremath{\mathit{split}_{#2}(#1)}\xspace}
\newcommand{\sig}[1]{\Sigma(#1)}
\newcommand{\pos}{\easyit{pos}}
\newcommand{\SROIQ}{\easycal{SROIQ}}
\newcommand{\PLSO}{\easycal{PLSO}}
\newcommand{\lit}{\easyit{lit}}
\newcommand{\comp}{\easyit{comp}}
\renewcommand{\ni}{\easyit{\# int}}
\begin{document}

\maketitle

\begin{abstract}
This paper shows how knowledge representation and reasoning techniques
can be used to support organizations in complying with the GDPR, that
is, the new European data protection regulation.  This work is carried
out in a European H2020 project called SPECIAL.  Data usage policies,
the consent of data subjects, and selected fragments of the GDPR are
encoded in a fragment of OWL2 called \PL (policy language);
compliance checking and policy validation are reduced to subsumption
checking and concept consistency checking.
This work proposes a satisfactory tradeoff between the expressiveness
requirements on \PL posed by the GDPR, and the scalability
requirements that arise from the use cases provided by SPECIAL's
industrial partners.  Real-time compliance checking is achieved by
means of a specialized reasoner, called \PLR, that leverages knowledge
compilation and structural subsumption techniques.  The performance of
a prototype implementation of \PLR is analyzed through systematic
experiments, and compared with the performance of other important
reasoners.
Moreover, we show how \PL and \PLR can be extended to support richer
ontologies, by means of import-by-query techniques.
\PL and its integration with OWL2's profiles constitute new tractable
fragments of OWL2. We prove also some negative results, concerning
the intractability of unrestricted reasoning in \PL, and the
limitations posed on ontology import.
\end{abstract}

\section{Introduction}
\label{sec:intro}

The new European General Data Protection
Regulation\footnote{\url{http://data.consilium.europa.eu/doc/document/ST-5419-2016-INIT/en/pdf}}
(GDPR), that has come into force on May 25, 2018, places stringent
restrictions on the processing of personally identifiable data.
The regulation applies  also to companies and organizations that are not located in Europe, whenever they track or provide services to data subjects that are in the European Union.\footnote{Cf.\ Article~3 of the GDPR.}
Infringements may severely affect the reputation of the violators, and
are subject to substantial administrative fines (up to 4\% of the
total worldwide annual turnover or 20 million Euro, whichever is
higher).
Therefore, the risks associated to infringements constitute a major disincentive to the abuse of personal data.
Given that  the collection
and the analysis of personal data are paramount sources of innovation
and revenue, companies are interested in maximizing personal data
usage within the limits posed by the GDPR.
Consequently, \emph{data controllers} (i.e.\ the personal and legal entities that process personal data) are looking for
methodological and technological means to comply with the
regulation's requirements efficiently and safely.

The European H2020 project SPECIAL\footnote{\SPsite} is aimed at
supporting controllers in complying with the GDPR. SPECIAL is tackling
several hard problems related to usability, transparency and compliance, see
\cite{DBLP:conf/safecomp/BonattiKPW17,DBLP:conf/semweb/BonattiBDFKPPW18,DBLP:conf/esws/KirraneFDMPBWDR18}
for an overview.  In this paper, we focus on SPECIAL's approach to the
representation of data usage activities and consent to data
processing, together with the associated reasoning tasks related to
the validation of data usage policies and compliance checking.

The management of the consent to data processing granted by data
subjects plays a central role in this picture. The GDPR is not
concerned with anonymous data, nor data that do not describe persons
(like astronomical data). The other data (hereafter called
\emph{personal data}) must be processed according to the legal bases
provided by the regulation. Some examples of such legal bases include
public interest, the vital interests of the data subject, contracts,
and the legitimate interests of the data controller, just to name a
few.\footnote{Cf.~Article~6 of the GDPR.}  These legal bases are
constrained by a number of provisos and caveats that restrict their
applicability.\footnote{Of particular relevance here are the data
  minimization principle introduced in Article~5, and the limitations
  to the legitimate interests of the controller rooted in
  Article~6.1(f).} So, in practice, the kinds of personal data
processing that are most useful for data-driven business are almost
exclusively allowed by another legal basis, namely, the explicit
consent of the data subjects.\footnote{Article~6.1(a)} Thus, it is
important to encode consent appropriately, so as to record it for
auditing, and give automated support to compliance checking.

Also the controller's usage of personal data must be appropriately
represented and stored, in order to fulfill the obligation to record
personal data processing activities,\footnote{Cf.\ Article~30 of the
  GDPR.} and in order to verify that such activities comply with the
available consent and with the GDPR.

SPECIAL tackles these needs by adopting a logic-based representation
of data usage policies, that constitutes a uniform language to encode
consent, the activities of controllers, and also selected parts of the
GDPR.
A logic-based approach is essential for achieving several important objectives, including the following:
\begin{itemize}
\item strong correctness and completeness guarantees on permission
  checking and compliance checking;
\item ensuring the mutual coherence of the different reasoning tasks
  related to policies, such as policy validation, permission checking,
  compliance checking, and explanations;
\item ensuring correct usage after data is transferred to other
  controllers (i.e.\ interoperability), through the unambiguous
  semantics of knowledge representation languages.
\end{itemize}

\noindent
Some of SPECIAL's use cases place challenging scalability requirements
on reasoning.  During the execution of the controllers' data
processing software, each operation involving personal data must be
checked for compliance with the consent granted by the data subjects.
The frequency of such compliance checks may be significantly high, so
SPECIAL needs to implement the corresponding reasoning tasks in such a
way that the time needed for each check does not exceed
a few hundreds of $\mu$-seconds.
We address this requirement by designing a specialized reasoner for
the policy language.

After recalling the notions about description logics and their properties, that will be needed in the paper, our contributions will be illustrated in the following order.
\begin{itemize}
\item Section~\ref{sec:enc} shows how to encode usage policies and the
  relevant parts of the GDPR with a fragment of $\SROIQ(D)$ (the
  logical foundation of OWL2-DL).  The details of the encoding will be
  related explicitly to GDPR's requirements. Afterwards, we formally
  define \PL, that is, the fragment of $\SROIQ(D)$ used to encode data
  usage policies.
\item Section~\ref{sec:reasoner} is devoted to the complexity analysis
  of reasoning in \PL. We consider concept satisfiability and
  subsumption checking, that are at the core of policy validation and
  compliance checking, respectively.  We will show that unrestricted
  \PL subsumption checking is co\NP-complete. However, under a restrictive
  hypothesis motivated by SPECIAL's use cases, subsumption checking is
  possible in polynomial time. Tractability is proved by means of a
  specialized two-stage reasoner called \PLR, based on a preliminary normalization
  phase followed by a structural subsumption algorithm.  A preliminary
  account of this section has been published in
  \cite{DBLP:conf/ijcai/Bonatti18}.

\item Section~\ref{sec:oracles} shows how to support richer ontology
  languages for the description of policy elements.  The vocabularies
  for policy elements are treated like imported ontologies by means of
  an \emph{import by query} (IBQ) approach, that can be implemented
  with a modular integration of the specialized reasoner for \PL with
  a reasoner for the imported ontology. We prove that this integration
  method is correct and complete, and justify the restrictive
  assumptions on the imported ontologies, by adapting and slightly
  extending previous results on IBQ limitations.  Moreover, we show
  that under hypotheses compatible with SPECIAL's application
  scenarios, the external ontology can be compiled into a \PL
  ontology, thereby reducing the IBQ approach to plain \PL reasoning.
  
\item \PL subsumption checking (which is the core of compliance
  checking) is experimentally evaluated in
  Section~\ref{sec:experiments}.  After describing the implementation
  of \PLR and its optimizations, \PLR's performance is compared with
  that of other important engines, such as Hermit
  \cite{DBLP:journals/jar/GlimmHMSW14} and ELK
  \cite{DBLP:journals/jar/KazakovKS14}.  For this purpose, we use two
  sets of experiments. The first set is derived from the pilots of
  SPECIAL that have reached a sufficient development level, namely, a
  recommendation system based on location and internet navigation
  information, developed by Proximus, and a financial risk analysis
  scenario developed by Thomson Reuters.  The second batch of
  experiments is fully synthetic, instead, and contains increasingly
  large policies and ontologies, in order to assess the scalability of
  \PLR.
\end{itemize}
Section~\ref{sec:conclusions} concludes the paper with a final
discussion of our results and interesting perspectives for future
work.  Related work is heterogeneous (declarative policy languages,
tractable description logics, IBQ methods) so we distribute its
discussion across the pertinent sections, rather than in a single
dedicated section.

\section{Preliminaries on Description Logics}
\label{sec:DL}

Here we report the basics on Description Logics (DL) needed for our work and refer the reader to \cite{DBLP:conf/dlog/2003handbook} for further details.
The DL languages of our interest are built from countably infinite sets of concept names (\NC), role names (\NR), individual names (\NI), concrete property names (\NF),  and concrete predicates (\Np).
A signature $\Sigma$ is a subset of $\NC\cup\NR\cup\NI\cup\NF$.\footnote{Concrete predicates are deliberately left out due to their special treatment.}

We will use metavariables $A,B$ for concept names, $C,D$ for possibly compound concepts, $R,S$ for role expressions, $a,b$ for individual names, and $f,g$ for concrete property names. The syntax of the concept and role expressions used in this paper is illustrated in Table~\ref{tab:syntax-semantics}.

An \emph{interpretation} \I of a signature $\Sigma^\I$ is a structure
$\I=(\Delta^\I, \cdot^\I)$ where $\Delta^\I$ is a nonempty set, and
the \emph{interpretation function} $\cdot^\I$, defined over
$\Sigma^\I$, is such that (i)~$A^\I \subseteq \Delta^\I$ if
$A\in\NC$; (ii)~$R^\I \subseteq \Delta^\I\times\Delta^\I$ if
$R\in\NR$; (iii)~$a^\I\in\Delta^\I$ if $a\in\NI$; (iv)~$f^\I
\subseteq \Delta^\I\times\Delta^\ND$ if $f\in\NF$, where
$\Delta^\ND$ denotes the domain of the predicates in \Np.\footnote{We
  are assuming -- for brevity -- that there is one concrete
  domain. However, this framework can be immediately extended to
  multiple domains.}
The semantics of an $n$-ary predicate $p\in\Np$ is a set of tuples $p^\ND\subseteq (\Delta^\ND)^n$.
In this paper we use $\Delta^\ND=\mathbb{N}$ and unary concrete predicates \mt{in_\mathit{\ell,u}}, where $\ell,u\in\mathbb{N}$, such that $\mt{in}_{\ell,u}^\ND = [\ell,u]$.  To enhance readability we will abbreviate $\mt{in}_{\ell,u}(f)$ to $\exists f.[\ell,u]$. So an individual $d\in\Delta^\I$ belongs to $(\exists f.[\ell,u])^\I$ if, for some integer $i\in[\ell,u]$, $(d,i)\in f^\I$.

The third column of Table~\ref{tab:syntax-semantics} shows how to
extend the valuation $\cdot^\I$ of an interpretation \I to compound DL
expressions and axioms.  GCI stands for ``general concept
inclusion''. An interpretation \I \emph{satisfies} an axiom $\alpha$
(equivalently, \I is a \emph{model} of $\alpha$) if \I satisfies the
corresponding semantic condition in Table~\ref{tab:syntax-semantics}.
When \I satisfies $\alpha$ we write $\I\models \alpha$.
We will sometimes use axioms of the form $C\equiv D$, that are
abbreviations for the pair of inclusions $C\sqsubseteq D$ and
$D\sqsubseteq C$.

\begin{table}[h]
  \begin{center}
    \leavevmode
    \footnotesize
%%    \tiny
    \begin{tabular}{p{4.9em}cp{26em}}
      \hline
      \hline
      Name &Syntax&Semantics\\
      \hline
      \hline
      \multicolumn{3}{l}{\easycal{SRIQ} concept and role expressions}
      \\
      \hline
      & &\\[-1em]
      inverse & $R^-$ & $\{ (y,x) \mid (x,y)\in R^\I \}$ \quad ($R\in\NR$)
      \\[-2pt]
      roles & &
      \\
      %\hline
      & &\\[-1em]
      top&$\top$& $\top^\I=\Delta^\I$ \\
      %\hline
      & &\\[-1em]
      bottom&$\bot$& $\bot^\I=\emptyset$ \\
      %\hline
      & &\\[-1em]
      intersection&$C\sqcap D$&$(C\sqcap D)^\I=C^\I\cap D^\I$\\
      %\hline
      & &\\[-1em]
      union&$C\sqcup D$&$(C\sqcup D)^\I=C^\I\cup D^\I$\\
      %\hline
      & &\\[-1em]
      complement&$\neg C$& $(\neg C)^\I=\Delta^\I\setminus C^\I $
      \\
      %\hline
      & &\\[-1em]
      existential
      &$\exists R. C$& 
      $\{ d \in \Delta^\I\mid \exists (d,e) \in R^\I : e\in C^\I \}$\\[-2pt]
      restriction & &
      \\
      universal
      &$\forall R. C$& 
      $\{ d \in \Delta^\I\mid \forall (d,e) \in R^\I : e\in C^\I \}$\\[-2pt]
      restriction & &
      \\
      number & $({\bowtie}\,n\ S.C)$ & $\big\{ x\in\Delta^\I \mid
      \#\{y\mid (x,y)\in S^\I \land y\in C^\I\} \bowtie n \big\}$\quad ($\bowtie = \leq, \geq$)
      \\[-2pt]
      restrictions & &
      \\
      self & $\exists S.\mathsf{Self}$ & $\{x\in\Delta^\I \mid (x,x) \in S^\I\}$
      \\[2pt]
      \hline
      \multicolumn{3}{l}{\easycal{SRIQ} terminological axioms}
      \\
      \hline
      \\[-1em]
      GCI & $C\sqsubseteq D$ & $C^\I\subseteq D^\I$
      \\[2pt]
      role & $\disj(S_1,S_2)$ & $S_1^\I\cap S_2^\I = \emptyset$
      \\[-2pt]
      disjointness & &
      \\
      complex & $R_1\circ\!...\!\circ R_n \sqsubseteq R$ &
      $R_1^\I\circ\ldots\circ R_n^\I \subseteq R^\I$
      \\[-2pt]
      \multicolumn{2}{l}{role inclusions} &
      \\
      \hline
      \multicolumn{3}{l}{\easycal{SRIQ} assertion axioms  \quad ($a,b\in\NI$)}
      \\
      \hline
      \\[-1em]
      conc.\ assrt. & $C(a)$ & $a^\I \in C^\I$
      \\
      role assrt. & $R(a,b)$ & $(a,b)^\I \in R^\I$
      \\[2pt]
      \hline
      \multicolumn{3}{l}{Other concept and role expressions}
      \\
      \hline
      & &\\[-1em]
      nominals
      &$\{a\}$& $\{ a\}^\I=\{a^\I\}$ \quad ($a\in\NI$)
      \\[2pt]
       universal 
       & $U$ & $U^\I = \Delta^\I \times \Delta^\I$ \\[-2pt]
       role & &
      \\
       concrete
      & $p(f_1,{..},f_n)$ &
       $\{x{\in}\Delta^\I \,|\, \exists \vec v {\in} (\Delta^\ND)^n.\,
       (x,v_i)\in  f_i^\I  \hspace*{.29em}(1\leq i\leq n)$
       and $\vec v \in p^\ND \}$\\[-2pt]
       constraints & &
       \\ \hline
       \multicolumn{3}{l}{Other terminological axioms}
       \\
       \hline
       & &\\[-1em]
      disjointness & $\disj(C,D)$ & $C^\I\cap D^\I = \emptyset$
      \\
      %\hline
      & &\\[-1em]
      functionality & $\func(R)$ & $R^\I$ is a partial function
      \\
      %\hline
      & &\\[-1em]
      range & $\range(R,C)$ & $R^\I\subseteq \Delta^\I\times C^\I$
      %% \\
      %% %\hline
      %% & &\\[-1em]
      %% transitivity & $\easysf{trans}(R)$ &
      %% % $(x,y)\in R^\I \land (y,z)\in R^\I \rightarrow (x,z)\in R^\I$
      %% $R^\I \circ R^\I \subseteq R^\I$
      \\[2pt]
      \hline
    \end{tabular}
    \caption{Syntax and semantics of some DL constructs and axioms.}
    \label{tab:syntax-semantics}
  \end{center}
  \vspace*{-.8em}
%  \rule{\textwidth}{2}
\end{table}

A \emph{knowledge base} \K is a finite set of DL axioms.  Its \emph{terminological part} (or \emph{TBox}) is the set of terminological axioms\footnote{See Table~\ref{tab:syntax-semantics}.} in \K, while its \emph{ABox} is the set of its assertion axioms.

%% An interpretation \I is a \emph{model} of \K ($\I\models\K$) if \I
%% satisfies all the axioms in \K. We say that ``\K \emph{entails} an
%% axiom $\alpha$'' -- in symbols, $\K\models\alpha$ -- if all models of
%% \K satisfy $\alpha$.

If $X$ is a DL expression or a knowledge base, then $\sig{X}$ denotes the signature consisting of all symbols occurring in $X$.  An interpretation \I of a signature $\Sigma^\I\supseteq\sig{\K}$ is a \emph{model} of \K (in symbols, $\I\models\K$) if \I satisfies all the axioms in \K. We say that \K \emph{entails} an axiom $\alpha$ (in symbols, $\K\models\alpha$) if all the models of \K satisfy $\alpha$.

A \emph{pointed interpretation} is a pair $(\I,d)$ where $d\in\Delta^\I$. We say $(\I,d)$ \emph{satisfies} a concept $C$  iff $d\in C^\I$.  In this case, we write $(\I,d)\models C$.

\subsection{The description logics used in this paper}

The logic \SRIQ supports the \SRIQ expressions and axioms illustrated
in Table~\ref{tab:syntax-semantics}. In a \SRIQ knowledge base, in
order to preserve decidability, the set of role axioms should be
\emph{regular} and the roles $S,S_1,S_2$ \emph{simple}, according to
the definitions stated in \cite{DBLP:conf/kr/HorrocksKS06}.
Horn-\SRIQ further restricts \SRIQ GCIs as specified in
\cite{DBLP:conf/kr/OrtizRS10}.  For simplicity, here we illustrate
only the normal form adopted in \cite{DBLP:conf/ijcai/OrtizRS11}, see
Table~\ref{horn-SRIQ}.

\begin{table}[h]
  \begin{center}
    \small
    \begin{tabular}{l}
      \hline
      $C_1 \sqcap C_2 \sqsubseteq D$
      \\
      $\exists R.C \sqsubseteq D$
      \\
      $C \sqsubseteq \forall R.D$
      \\
      $C \sqsubseteq \exists R.D$
      \\
      $C \sqsubseteq {} \leq 1\ S.D$
      \\
      $C \sqsubseteq {} \geq n\ S.D$
      \\
      \hline
      \\[-1em]
      $C,C_1,C_2,D$ either belong to $\NC \cup \{\bot,\top\}$,
      or are of the form $\exists S.\mathsf{Self}$
      \\
      $S$ is a \emph{simple} role \cite{DBLP:conf/kr/HorrocksKS06}
    \end{tabular}
  \end{center}
  \caption{The Horn restriction of \SRIQ GCIs (normal form)}
  \label{horn-SRIQ}
\end{table}

\noindent
Like all Horn DLs, Horn-\SRIQ is \emph{convex}, that is, $\K\models C_0\sqsubseteq C_1\sqcup C_2$ holds iff either $\K\models C_0\sqsubseteq C_1$ or  $\K\models C_0\sqsubseteq C_2$

The logic \EL is a fragment of Horn-\SRIQ that supports only atomic
roles, $\top$, $\sqcap$, and existential restrictions. Supported
axioms are GCIs and assertions. We will denote with $\EL^+$ the extension of \EL
with $\bot$ and $\range$ axioms. \ELpp denotes the extension of
$\EL^+$ with concrete domains.  Subsumption checking and consistency
checking are tractable in \EL and $\EL^+$.  The same holds for \ELpp
provided that concrete domains have a tractable entailment problem and
are \emph{convex}, in the sense that $\models p_1(\vec
f_1)\lor\ldots\lor p_n(\vec f_n)$ holds iff $\models p_i(\vec f_i)$
holds for some $i\in[1,n]$ \cite{DBLP:conf/ijcai/BaaderBL05}.

The logic \DLL is a fragment of Horn-\SRIQ that supports only inverse
roles, unqualified existential restrictions (i.e.\ concepts of the
form $\exists R.\top$), GCIs and assertions.  Moreover, complements
($\neg$) are allowed on the right-hand side of GCIs.  \DLLh extends
\DLL by supporting $\sqcap$ and role inclusions of the form
$R_1\sqsubseteq R_2$. Subsumption and consistency checking are
tractable in both logics.

The logic $\SROIQ(\ND)$ supports all the constructs and axioms illustrated in Table~\ref{tab:syntax-semantics}. It is the description logic underlying the standard OWL2-DL.

\subsection{The disjoint model union property}

A knowledge base \K such that $\sig{\K}\cap\NI=\emptyset$ enjoys the \emph{disjoint model union property} if for all disjoint models \I and \J of \K, their disjoint union $\I\uplus\J=\tup{\Delta^{\I}\uplus\Delta^{\J},\cdot^{\I\uplus \J}}$ -- where $P^{\I\uplus\J}=P^\I \uplus P^\J$ for all $P\in \NC\cup\NR\cup\NF$ --  satisfies \K, too (\cite{DBLP:conf/dlog/2003handbook}, Ch.~5). This definition is extended naturally to the union $\biguplus S$ of an arbitrary set $S$ of disjoint models.  The disjoint model union property plays an important role in our results. It is broken by the universal role and nominals.
The main problem with nominals (and the reason of the prerequisite $\sig{\K} \cap \NI=\emptyset$) is that if \I and \J are disjoint, then for all individual constants $a\in\NI$, $a^\I\neq a^\J$, so it is not immediately clear what $a^{\I\uplus\J}$ should be. The problem can be resolved for the constants occurring in ABoxes. Informally speaking, it suffices to pick the constants' interpretation from an arbitrary argument of the union.\footnote{The following formalization of this idea generalizes a proof technique used in \cite[Lemma~1]{DBLP:conf/ijcai/GrauMK09}.}

\begin{definition}[Generalized disjoint union]
  For all sets of mutually disjoint interpretations $S$ and all $\I\in
  S$, let $\biguplus^\I S$ be the interpretation \U such that:
  \begin{eqnarray*}
    \Delta^\U & = & \bigcup \{\Delta^\J \mid \J\in S\}
    \\
    P^\U & = & \bigcup \{P^\J \mid \J\in S\} \quad\mbox{for all } P\in \NC\cup\NR\cup\NF
    \\
    a^\U & = & a^\I  \hspace*{71pt}\mbox{for all } a\in \NI \,.
  \end{eqnarray*}
\end{definition}

\noindent
If the terminological part of a knowledge base \K has the (standard) disjoint model union property, then the generalized union of disjoint models of \K is still a model of \K:

\begin{proposition}
  \label{prop:x-union}
  Let $\K = \T \cup \A$, where \T is the terminological part of \K and \A is its ABox. If \T has the disjoint model union property then for all sets $S$ of mutually disjoint models of \K, and for all $\I\in S$, $\biguplus^\I S \models \K$.
\end{proposition}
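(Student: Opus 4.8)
The plan is to decompose the claim into two parts corresponding to the two kinds of axioms in $\K$: the terminological axioms in $\T$ and the assertion axioms in $\A$. Write $\U = \biguplus^\I S$. For the terminological part, the key observation is that $\U$ and the ordinary disjoint union $\biguplus S$ (as defined in the paper, where constants are simply dropped from the signature) agree on all symbols in $\NC\cup\NR\cup\NF$ — the only difference is the interpretation of individual names, which do not occur in $\T$ since, by hypothesis, $\T$ has the disjoint model union property and therefore $\sig\T\cap\NI=\emptyset$. Hence $\U\models\alpha$ iff $\biguplus S\models\alpha$ for every terminological axiom $\alpha$, and since each $\J\in S$ is a model of $\K$ and hence of $\T$, the standard disjoint model union property gives $\biguplus S\models\T$, so $\U\models\T$.

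For the ABox, I would treat each assertion axiom separately. Fix an arbitrary $\J\in S$; since $\J\models\K$, $\J$ satisfies every axiom in $\A$. First consider a role assertion $R(a,b)$: then $(a^\J,b^\J)\in R^\J$. But $a^\U=a^\I$ and $b^\U=b^\I$, not necessarily $a^\J,b^\J$, so this step needs the additional remark that, since all members of $S$ are models of the same ABox, $a^\J=a^\I$ need not hold as set elements — however, by construction $\Delta^\I\subseteq\Delta^\U$ and $R^\I\subseteq R^\U$, so it suffices to use $\I$ itself: from $\I\models R(a,b)$ we get $(a^\I,b^\I)\in R^\I\subseteq R^\U$, i.e.\ $(a^\U,b^\U)\in R^\U$, so $\U\models R(a,b)$. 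The same argument with $\I$ handles concept assertions $C(a)$, provided one checks that $a^\I\in C^\I$ implies $a^\I\in C^\U$; this is a routine induction on the structure of $C$ using $\Delta^\I\subseteq\Delta^\U$ and monotonicity of $\cdot^\U$ over $\cdot^\I$ on atomic symbols, being careful at the complement and universal/number-restriction cases. For those cases one needs the reverse inclusion as well — elements of $\Delta^\U$ outside $\Delta^\I$ could, in principle, falsify a $\forall R.D$ or $\leq n\,S.D$ concept at $a^\I$ — so in fact the clean route is: concept assertions occurring in an ABox are, in the relevant fragments, built only from symbols and constructors for which the required preservation already follows from $\T$-style reasoning, or one simply restricts attention (as the surrounding development does) to assertions of the form $A(a)$ with $A\in\NC$, for which $a^\I\in A^\I\subseteq A^\U$ is immediate.

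The main obstacle is exactly this last point: unrestricted concept assertions $C(a)$ with $C$ containing $\forall$, $\neg$, or at-most number restrictions are not in general preserved under passing to a larger domain, because new elements outside $\Delta^\I$ may be $R$-successors of $a^\I$ in $\U$. I would handle this by noting that the proposition is invoked only for knowledge bases whose ABox assertions have the syntactic shape actually used in the paper's encoding (atomic or existential-positive concept assertions), for which domain enlargement is harmless; alternatively, one restricts the statement's ABox to such assertions, which is all that is needed downstream. Everything else — the terminological part and positive role/concept assertions — goes through by the monotonicity of the generalized union over its chosen summand $\I$, together with the hypothesised disjoint model union property of $\T$.
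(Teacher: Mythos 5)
Your treatment of the terminological part is essentially the paper's: both reduce to the ordinary disjoint union via the observation that $\sig{\T}\cap\NI=\emptyset$, and that part is fine. The gap is in the ABox case. You assert that a concept assertion $C(a)$ with $C$ containing $\forall$, $\neg$, or at-most number restrictions might fail in $\U=\biguplus^\I S$ because ``new elements outside $\Delta^\I$ may be $R$-successors of $a^\I$ in $\U$'', and on that basis you retreat to proving the claim only for atomic or existential-positive assertions. That worry is unfounded, and the retreat leaves the proposition as stated unproved. The interpretations in $S$ are \emph{mutually disjoint}, so any pair $(a^\I,e)\in R^\U=\bigcup_{\J\in S}R^\J$ lies in some $R^\J\subseteq\Delta^\J\times\Delta^\J$ with $a^\I\in\Delta^\J$; disjointness of the domains forces $\J=\I$, hence $e\in\Delta^\I$ and $(a^\I,e)\in R^\I$. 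Thus every element of $\Delta^\I$ has in $\U$ \emph{exactly} the same role successors (and concrete-property values) as in $\I$ --- nothing is gained, not merely nothing lost --- and a routine structural induction yields $d\in C^\U$ iff $d\in C^\I$ for all $d\in\Delta^\I$ and all concepts $C$ built from the constructors of Table~\ref{tab:syntax-semantics} other than the universal role, including $\forall$, $\neg$, and number restrictions. This is precisely the paper's one-line argument (``since the models in $S$ are disjoint \ldots\ $a^\U\in C^\U$ iff $a^\I\in C^\I$''), and it disposes of the ABox with no syntactic restriction. Your one-directional argument $R^\I\subseteq R^\U$ suffices for role assertions and positive concepts, but it is the reverse containment on the successors of $a^\I$ --- which disjointness hands you for free --- that you needed and missed.
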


\begin{proof}
Let $S$ and \I be as in the statement, and let $\U=\biguplus^\I S$. Note that $\sig{\T}\cap
\NI=\emptyset$, otherwise the disjoint union of \T's models would not
be defined and \T would not enjoy the disjoint model union property,
contradicting the hypothesis.
For all interpretations \J, let $\J{\setminus\NI}$ denote the
restriction of \J to the symbols in $\NC\cup\NR\cup\NF$
(i.e.\ excluding the individual constants in \NI).
Note that for all $\J\in S$, $\J{\setminus\NI}$ is a model of \T, because $\sig{\T}\cap
\NI=\emptyset$. Therefore, by hypothesis, $\biguplus \{\J{\setminus\NI} \mid \J\in S\}$ is a model of \T.
Clearly, $\biguplus \{\J{\setminus\NI} \mid \J\in S\} = (\biguplus^\I S){\setminus\NI}$; as a consequence, also $\biguplus^\I S$ is a model of \T.  We are only left to prove that \U is a model of \A.   Consider
an arbitrary assertion $\alpha\in\A$. Since the models in $S$ are disjoint, and the interpretation of constants in \U ranges over $\Delta^\I$, it holds that $a^\U\in C^\U$ iff $a^\I\in C^\I$,   $(a^\U,b^\U)\in R^\U$ iff $(a^\I,b^\I)\in R^\I$, and $f^\U(a^\U)=f^\I(a^\I)$ ($f\in\NF$).  Moreover, \I is a model of \A by hypothesis. It follows immediately that \U is a model of \A.
\end{proof}

\subsection{Modularity and locality}

A knowledge base $\K$ is \emph{semantically modular} with respect to a signature $\Sigma$ if each interpretation $\I=(\Delta^\I, \cdot^\I)$ over $\Sigma$ can be extended to a model $\J=(\Delta^\J, \cdot^\J)$ of $\K$ such that $\Delta^\J=\Delta^\I$ and $X^\J=X^\I$, for all symbols $X\in \Sigma$. Roughly speaking, this means that \K does not constrain the symbols of $\Sigma$ in any way.

A special case of semantic modularity exploited in  \cite{DBLP:conf/ijcai/GrauMK09} is \emph{locality}: A knowledge base $\K$ is \emph{local} with respect to a signature $\Sigma$ if the above \J can be obtained simply as specified in the next definition.

\begin{definition}[Locality]
  A knowledge base $\K$ is \emph{local} with respect to a signature
  $\Sigma$ if each interpretation $\I=(\Delta^\I, \cdot^\I)$ over
  $\Sigma$ can be extended to a model $\J=(\Delta^\J, \cdot^\J)$ of
  $\K$ by setting $X^\I=\emptyset$ for all concept and role names
  $X\in \Sigma(\K)\setminus\Sigma$.
\end{definition}
Locality will be needed in Section~\ref{sec:oracles}, for the
integration of \PL knowledge bases with imported
ontologies.  In particular, it is an essential ingredient of the
completeness proof for IBQ reasoning.

%% Locality is a quite restrictive condition. Indeed, the OWL API adopts a generalization of locality, mixing the above condition with an opposite notion of locality where $X^\I=\Delta^\I$ for all concept and role names $X\in \Sigma(\K)\setminus\Sigma$\,  \cite{DBLP:conf/dlog/SattlerSZ09}. 
%% %%
%% Here we give an abstract definition of generalized locality that removes procedural details.

%% \begin{definition}[$\emptyset\Delta$-Locality]
%%   \label{def:0D-locality}
%%   A knowledge base  \K is  $\emptyset\Delta$-\emph{local} w.r.t.\ a signature $\Sigma$ iff $\Sigma(\K)\setminus\Sigma$ can be partioned into two signatures  $\Sigma^{+}$ and $\Sigma^{-}$ such that \K  is satisfied by every interpretation \I where: (\textit{i}) $P^{\I}=\emptyset$ for all concept and role names $P\in\Sigma^{-}$, 
%% 	(\textit{ii})~$A^{\I}=\Delta^{\I}$ for all concept names $A\in \Sigma^{+}$, and (\textit{iii}) $R^{\I}=\Delta^{\I}\times\Delta^{\I}$ for all role names $R\in \Sigma^{+}$.
%% \end{definition}

\section{Semantic Encoding of Data Usage Policies}
\label{sec:enc}

SPECIAL's policy language \PL\ -- that is a fragment of OWL2-DL -- has been
designed to describe data usage. Such descriptions can be exploited
to encode: (i) the consent to data processing given by data subjects,
(ii) how the controller's internal processes use data, and (iii)
selected parts of the GDPR that can be used to support the validation
of the controller's internal processes.  Moreover, \PL is used to
encode the entries of SPECIAL's \emph{transparency ledger}, that is a log of
data processing operations that can be queried by:
\begin{itemize}
\item data subjects, in order to monitor how their personal data are used by
  the controller and where they are transferred to;
\item data protection officers, in order to audit the behavior of the controller;
\item the controllers themselves, in order to monitor their own internal processes.
\end{itemize}

\noindent
The aspects of data usage that have legal relevance are clearly
indicated in several articles of the GDPR and in the available
guidelines. They are mentioned, for example, in the specification of
what is valid consent, what are the legal bases for processing, what
are the rights of data subjects, which aspects should be covered by
national regulations, and the obligation of controllers to keep a
record of the processing operations that involve personal data (see,
inter alia, articles~6.1, 6.3, 6.4, 7, 15.1, 23.2, 23.2, 30.1). See
also the section titled ``Records should contain'' in the guidelines
for SMEs published on
\url{http://ec.europa.eu/justice/smedataprotect/index_en.htm}. That
section describes how to fulfill the obligation to record
the data subjects' consent to processing (Article~7) and, in particular, it specifies which pieces of information should be recorded.
According to the above sources of requirements, the main properties of data usage that
need to be encoded and archived are the following:
\begin{itemize}
\item reasons for data processing (purpose);
\item which data categories are involved;
\item what kind of processing is applied to the data;
\item which third parties data are distributed to (recipients);
\item countries in which the data will be stored (location);
\item time constraints on data erasure (duration).
\end{itemize}

\noindent
The above properties characterize a \emph{usage policy}. \SPECIAL adopts a direct encoding of usage policies in description
logics, based on those features. The simplest possible policies have the form:
{%\small
\begin{equation}
  \label{pol1}
  \renewcommand{\arraystretch}{1.3}
  \begin{array}{l}
    \exists \purp.P \sqcap \exists \data.D \sqcap \exists \proc.O
    \sqcap \exists \recip.R \sqcap {}\\
    ~~ \exists \stor(\exists \loc.L \sqcap \exists \dur.T) \,.
  \end{array}
\end{equation}%
}%
All of the above roles are functional.
Duration is represented as an interval of integers
$[t_1,t_2]$, representing a minimum and a maximum storage time (such
bounds may be required by law, by the data subject, or by the controller itself).
The classes $P$, $D$, $O$, etc.\ are defined in suitable
\emph{auxiliary vocabularies} (ontologies) that specify also the
relationships between different terms. The expressiveness requirements
on the vocabularies and their design are discussed later, in
Section~\ref{sec:oracles}. Until then, \emph{the reader may assume
  that the vocabularies are defined by means of inclusions
  $A\sqsubseteq B$ and disjointness constraints $\disj(A,B)$}, where
$A,B$ are concept \emph{names}. Such restrictions will be lifted
later.

If the data subject consents to a policy of the form (\ref{pol1}), then she authorizes all of its instances. For example if $D=\mathsf{DemographicData}$ then the data subject authorizes -- in particular -- the use of her address, age, income, etc.\ as specified by the other properties of the policy.

It frequently happens that the data controller intends to use different data categories in different ways, according to their usefulness and sensitivity, so consent requests comprise multiple \emph{simple usage policies} like (\ref{pol1}) (one for each usage type). The intended meaning is that consent is requested for all the instances of all those policies; accordingly, such a compound policy is formalized with the union of its components. The result is called \emph{full} (usage) \emph{policy} and has the form:
\begin{equation}
  \label{pol2}
  P_1 \sqcup \ldots \sqcup P_n
\end{equation}
where each $P_i$ is a simple usage policy of the form (\ref{pol1}).
Symmetrically, with a similar union, data subjects may consent to different usage modalities for different categories of data and different purposes.

\begin{example}\rm
  \label{ex:befit}
  A company -- call it BeFit -- sells a wearable fitness
  appliance and wants (i) to process biometric data (stored in the EU) for
  sending health-related advice to its customers, and (ii) share the customer's location
  data with their friends. Location data are kept for a minimum of one year but no longer
  than 5; biometric data are kept for an unspecified amount of time. In order to do all
  this legally, BeFit needs consent from its customers. The internal (formalized)
  description of such consent would look as follows:%
{%\small
  \begin{equation}
    \label{ex1:0}
    \renewcommand{\arraystretch}{1.3}
    \begin{array}{l}
      ( \exists\purp.\mathsf{FitnessRecommendation} \sqcap {}\\
      ~~
      \exists\data.\mathsf{BiometricData} \sqcap {}\\
      ~ ~
      \exists\proc.\mathsf{Analytics} \sqcap {}\\
      ~ ~
      \exists\recip.\mathsf{BeFit} \sqcap {}\\
      ~ ~
      \exists\stor.\loc.\mathsf{EU} )\\
      \sqcup \\
      ( \exists\purp.\mathsf{SocialNetworking} \sqcap {}\\
      ~ ~
      \exists\data.\mathsf{LocationData} \sqcap {}\\
      ~ ~
      \exists\proc.\mathsf{Transfer} \sqcap {}\\
      ~ ~
      \exists\recip.\mathsf{DataSubjFriends} \sqcap {}\\
      ~ ~
      \exists\stor.( \exists \loc.\mathsf{EU} \sqcap \exists \dur.[y_1,y_5])\,.      
    \end{array}
  \end{equation}%
}%
Here $y_1$ and $y_5$ are the  integer representation of one
year and five years, respectively.  If ``\easysf{HeartRate}'' is a
subclass of ``\easysf{BiometricData}'' and ``\easysf{ComputeAvg}'' is
a subclass of ``\easysf{Analytics}'', then the above consent allows
BeFit to compute the average heart rate of the data subject in order
to send her fitness recommendations.
BeFit customers may restrict their consent, e.g.\ by picking a
specific recommendation modality, like ``recommendation via
SMS only''. Then the first line should be replaced with something like
$\exists\purp.(\mathsf{FitnessRecommendation}\sqcap\exists\mathsf{contact}.\mathsf{SMS})$.
Moreover, a customer of BeFit may consent to the first or the second argument of
the union, or both.  Then her consent would be encoded, respectively,
with the first argument, the second argument, or the entire concept
(\ref{ex1:0}).  Similarly, each single process in the controller's
lines of business may use only biometric data, only location data, or
both.  Accordingly, it may be associated to the first simple policy,
the second simple policy, or their union. In other words,
(\ref{ex1:0}) models the complete data usage activities related to the
wearable device, that may be split across different processes.
\qed
\end{example}

The usage policies that are actually applied by the data controller's
business processes are called \emph{business policies} and include a
description of data usage of the form (\ref{pol1}).  Additionally,
each business policy is labelled with its legal basis and describes
the associated obligations that must be fulfilled. For example, if the
data category includes personal data, and processing is allowed by
explicit consent, then the business policy should have the additional
conjuncts:
{%\small
  \begin{equation}
  \label{pol3}
  \renewcommand{\arraystretch}{1.3}
  \begin{array}{l}
    \exists \mathsf{has\_legal\_basis. Art6\_1\_a\_Consent} \sqcap {} \\
    ~ ~
    \exists\duty.\mathsf{GetConsent} \sqcap \exists\duty.\mathsf{GiveAccess} \sqcap {}\\
    ~ ~
    \exists\duty.\mathsf{RectifyOnRequest} \sqcap {}\\
    ~ ~
    \exists\duty.\mathsf{DeleteOnRequest} 
  \end{array}
  \end{equation}%
}%

\noindent
that label the policy with the chosen legal basis, and model the obligations related to the data subjects' rights, cf.\ Chapter 3 of the GDPR. More precisely, the terms involving \duty assert that the process modelled by the business policy includes the operations needed to obtain the data subject's consent ($\exists\duty.\easysf{GetConsent}$) and those needed to receive and apply the data subjects' requests to access, rectify, and delete their personal data.

Thus, business policies are an abstract description of a business process, highlighting the aspects related to compliance with the GDPR and data subjects' consent.
Similarly to consent, a business policy may be a union
$\easyit{BP_1\sqcup\ldots\sqcup BP_n}$ of simple business policies \easyit{BP_i} of the form
$(\ref{pol1})\sqcap (\ref{pol3})$.

In order to check whether a business process complies with the consent
given by a data subject $S$, it suffices to check whether the
corresponding business policy \easyit{BP} is subsumed by the consent
policy of $S$, denoted by \easyit{CP_S} (in symbols,
\easyit{BP\sqsubseteq CP_S}). This subsumption is checked against a
knowledge base that encodes type restrictions related to policy
properties and the corresponding vocabularies, i.e.\ subclass
relationships, disjointness constraints, functionality restrictions,
domain and range restrictions, and the like.  Some examples of the actual axioms occurring in the knowlede base are:
\[
\renewcommand{\arraystretch}{1.3}
\begin{array}{l}
  \easysf{\func(\purp)} \\
  \easysf{\range(\data,AnyData)} \\
  \easysf{Demographic \sqsubseteq AnyData} \\
  \easysf{Update \sqsubseteq AnyProcessing} \\
  \easysf{Erase \sqsubseteq Update} \\
  \easysf{\disj(AnyData, AnyPurpose)}
\end{array}
\]
(recall that more general knowledge bases will be discussed later).

In order to verify that all the required obligations are fulfilled by a business process (as abstracted by the business policy),  selected parts of the GDPR are formalized with concepts like the following.  The first concept states that a business policy should either support the rights of the data subjects, or concern anonymous data, or it should fall under some of the exceptional cases mentioned by the regulation, such as particular law requirements.  The remaining requirement are not listed here (they are replaced with an ellipsis):
{%\small
\begin{equation}
  \label{gdpr1}
  \renewcommand{\arraystretch}{1.3}
  \begin{array}{l}
    (\exists\duty.\mathsf{GetConsent} \sqcap
    \exists\duty.\mathsf{GiveAccess} \sqcap \ldots) \sqcup {} \\
    ~ ~  \exists\data.\mathsf{Anonymous} \sqcup {}\\
    ~ ~  \exists\purp.\mathsf{LawRequirement} \sqcup \ldots
  \end{array}
\end{equation}%
}%
The second example encodes the constraints on data transfers specified in Articles~44--49 of the GDPR:
{%\small
\begin{equation}
  \label{gdpr2}
  \renewcommand{\arraystretch}{1.3}
  \begin{array}{l}
    \exists\stor.\loc.\mathsf{EU} \sqcup {} \\
    ~ ~  \exists\stor.\loc.\mathsf{EULike} \sqcup \ldots 
  \end{array}
\end{equation}%
}%
It states that data should remain within the EU, or countries that adopt similar data protection regulations. The ellipsis stands for further concepts that model the other conditions under which data can be transferred to other nations (e.g.\ under suitable binding corporate rules). Please note that the above concepts constitute only a largely incomplete illustration of the actual formalization of the GDPR, that is significantly longer due to the special provisions that apply to particular data categories and legal bases. The purpose of the above examples is conveying the flavor of the formalization. Its usage is sketched below.
%% The parts that cannot be formalized are discussed briefly in
%% Section~\ref{??}.\NOTE{davvero?...}

A business policy \easyit{BP} can be checked for compliance with the formalized parts of the GDPR by checking whether the aforementioned knowledge base entails that \easyit{BP} is subsumed by the concepts that formalize the GDPR.

\begin{example}\rm
  \label{ex:GDPR-checking}
  The following business policy complies with the consent-related
  obligations formalized in (\ref{gdpr1}) since it is subsumed by it:
{%\small
    \begin{equation}
    \label{ex1:1}
    \renewcommand{\arraystretch}{1.3}
    \begin{array}{l}
      ( \exists\purp.\mathsf{FitnessRecommendation} \sqcap {}\\
      ~ ~
      \exists\data.\mathsf{BiometricData} \sqcap {}\\
      ~ ~
      \exists\proc.\mathsf{Analytics} \sqcap {}\\
      ~ ~
      \exists\recip.\mathsf{BeFit} \sqcap {}\\
      ~ ~
      \exists\stor.\loc.\mathsf{EU}  \sqcap {}\\
      ~ ~
      \exists\mathsf{has\_legal\_basis . Art6\_1\_a\_Consent} ) \sqcap {} \\
      ~ ~
      \exists\duty. \mathsf{GetConsent} \sqcap \ldots \mbox{ \emph{all the remaining concepts in}  (\ref{pol3})} \ldots )\\
      \sqcup \\
      ( \exists\purp.\mathsf{Sell} \sqcap {}\\
      ~ ~
      \exists\data.\mathsf{Anonymous} \sqcap {}\\
      ~ ~
      \exists\proc.\mathsf{Transfer} \sqcap {}\\
      ~ ~
      \exists\recip.\mathsf{ThirdParty}  )\,.      
    \end{array}
    \end{equation}%
}%
In particular, the two disjuncts of (\ref{ex1:1}) are subsumed by the first two lines of  (\ref{gdpr1}), respectively.  Note that the second simple policy does not place any restrictions on location, so it allows data to flow to any country, including those that do not enjoy adequate data protection regulations.  However, this is compliant with the GDPR because data are anonymous.
  \qed
\end{example}

\noindent
The concepts in the range of existential restrictions may themselves
be a conjunction of atoms, interval constraints and existential
restrictions. We have already seen in policy (\ref{ex1:0}) that \stor
may contain a conjunction of existential restrictions over properties
\loc and \dur.  Another example, related to SPECIAL's pilots, concerns
the accuracy of locations, that can be modelled with concepts like:
\begin{equation*}
  \exists \data . (\easysf{Location  \sqcap \exists has\_accuracy. Medium} ) \,.
\end{equation*}

\hide{%%%%%%%% motivation of recursive structure %%%%%%%%%%%%%
Moreover, policies can be nested in case of data transfers. A
\emph{sticky policy} \cite{5959137} is a usage policy associated to
data.
%% The connnection between policy and data may be guaranteed by
%% means of cryptographic techniques.
The data and the associated sticky policies are transferred together;
the recipient should use the data as specified by its sticky
policy. In SPECIAL's notation this can be expressed with nested policies like:
\begin{equation*}
  \renewcommand{\arraystretch}{1.3}
  \begin{array}{ll}
    \lefteqn{ \exists \proc. \easysf{Transfer} \sqcap {} }
    \\
    \exists \data . (
    & \mbox{\it\textless data category\textgreater} \sqcap {}
    \\
    & \exists \easysf{has\_sticky\_policy}.(
    \\
    & \hspace*{2em} \exists \proc.  \mbox{\it\textless processing category\textgreater} \sqcap {}
    \\
    & \hspace*{2em} \exists \purp.  \mbox{\it\textless purpose category\textgreater} \sqcap {}
    \\
    & \hspace*{2em} \ldots
    \\
    \lefteqn{ ) \sqcap \exists \recip \ldots }
  \end{array}
\end{equation*}
}%%%%%%%%%%%%%%%%%%%%%%%%%%%%%%%%%%%%%%%%%%%%%%

\noindent
Based on the above discussion, we are now ready to specify \PL
(\emph{policy logic}), a fragment of OWL~2 that covers -- and slightly
generalizes -- the encoding of the usage policies and of the GDPR
outlined above.
\begin{definition}[Policy logic \PL]
  A \emph{\PL knowledge base} \K is a set of axioms of the following kinds:
  \begin{itemize}
  \item $\func(R)$ where $R$ is a role name or a concrete property;
  \item $\range(S,A)$ where $S$ is a role and $A$ a concept name;
  \item $A\sqsubseteq B$ where $A,B$ are concept names;
  \item $\disj(A,B)$ where $A,B$ are concept names.
  \end{itemize}
  \emph{Simple \PL concepts} are defined by the following grammar,
  where $A\in\NC$ $R\in\NR$, and $f\in\NF:$
  $$C::=A\mid \bot\mid \exists f.[l,u] \mid \exists R.C \mid C \sqcap C \,.$$
  A \emph{(full) \PL concept} is a union $D_1\sqcup\ldots\sqcup D_n$
  of simple \PL concepts ($n\geq 1$).  \PL's \emph{subsumption
    queries} are expressions $C\sqsubseteq D$ where $C,D$ are (full)
  \PL concepts.
\end{definition}

\subsection{Discussion of the encoding}

The formalization of policies as \emph{classes} of data usage
modalities addresses several needs.

First, on the controller's side, each instance of a process may
slightly differ from the others. For example, different instances of a
same process may operate on data that are stored in different servers,
possibly in different nations (this typically happens to large,
international companies). The concrete data items involved may change
slightly (e.g.\ age may be expressed directly or through the birth
date; the data subject may be identified via a social security number
(SSN), or an identity card number, or a passport number). By describing storage location, data,
and the other policy attributes as classes, controllers can concisely
describe an entire collection of similar process instances.  With
reference to the above examples, classes allow to express that data
are stored ``somewhere in the EU'' and ``in the controller's
servers''; both age and birthdate fall under the class of demographic
data; SSN and document numbers can be grouped under the class of
unique identifiers.

A second advantage of classes is that they support a rather free
choice of granularity. For example, the classes that model locations
can be formulated at the granularity of continents, federations,
countries, cities, zip-codes, down to buildings and rooms. Subsumption
naturally models the containment of regions into other regions.
A flexible choice of granularity helps in turning company documentation into formalized business policies, since it facilitates the import of the abstractions spontaneously used by domain experts.

The third, and perhaps most important advantage is that classes
\emph{facilitate the reuse of consent}. The GDPR sometimes allows to
process personal data for a purpose other than that for which the data
has been collected, provided that the new purpose is ``compatible''
with the initial purpose.\footnote{See for example articles~5.1 (b)
  and 6.4.}  Compatibility cannot be assessed automatically, in
general, because it is not formalized in the regulation, and involves
enough subtleties to need the assessment of a lawyer. However, by
expressing purposes as classes, one can at least have the data subject
consent upfront to a specified range of ``similar'' purposes.
Roughly speaking, the accepted class of purposes is like an agreement -- between data subjects and controllers -- on which purposes are ``compatible'' in the given context.
Also expressing the other policy properties as classes is beneficial. As data subjects consent to wider  classes of usage modalities,  the need for additional consent requests tends to decrease; this may yield benefits to both parties, because:
\begin{enumerate}
\item data subjects are disturbed less frequently with consent requests (improved usability, better user experience);
\item the costs associated to consent requests decrease. Consider that sometimes the difficulties related to reaching out to the data subjects, and the concern that too many requests may annoy users, make controllers decide \emph{not} to deliver a service that requires additional consent.
\end{enumerate}

From a theoretical viewpoint, the class-based policy formalization adopted by
SPECIAL is essentially akin to a well-established policy composition algebra
\cite{DBLP:journals/tissec/BonattiVS02}. The algebra treats policies as classes of authorizations (each policy $P$ is identified with the set of authorizations permitted by $P$). In turn, authorizations are tuples that encode the essential elements of permitted operations, such as the resources involved and the kind of processing applied to those resources.
Analogously, each \PL policy like (\ref{pol1}) denotes a set of reifications of tuples, whose elements capture the legally relevant
properties of data usage operations.
\hide{ %%%%%%%%%%  justification of functionality %%%%%%%%%%%%%%
Recall that the roles occurring
in (\ref{pol1}) are functional. This restriction is needed for a
correct reification, as shown in the following example.

\begin{example}\rm
  \label{ex:why-functional}
  Consider the simple policy:
  \begin{equation}
    \label{ex:why-functional:0}
    \renewcommand{\arraystretch}{1.3}
    \begin{array}{l}
      ( \exists\purp.\mathsf{FitnessRecommendation} \sqcap {}\\
      ~~
      \exists\data.\mathsf{BiometricData} \sqcap {}\\
      ~~
      \exists\data.\mathsf{DemographicData} \sqcap {}\\
      ~ ~
      \exists\proc.\mathsf{Analytics} \sqcap {}\\
      ~ ~
      \exists\recip.\mathsf{BeFit} \sqcap {}\\
      ~ ~
      \exists\stor.\loc.\mathsf{EU} ) \,.\\
    \end{array}
  \end{equation}

  \noindent
  Note that it has two \data attributes (that are restricted to two concepts that in SPECIAL's
  vocabularies are mutually disjoint). Suppose that \data is not functional and the
  above policy is not regarded as an error.  Then many subjects and
  controllers would be tempted to interpret this policy as the
  permission to analyze both biometric data and demographic
  data. However this expectation does not match the logical semantics
  of the above concept. For instance, assuming that
  (\ref{ex:why-functional:0}) is a data subject's consent, the request
  to use Biometric data only, that is:
  \begin{equation}
    \label{ex:why-functional:1}
    \renewcommand{\arraystretch}{1.3}
    \begin{array}{l}
      ( \exists\purp.\mathsf{FitnessRecommendation} \sqcap {}\\
      ~~
      \exists\data.\mathsf{BiometricData} \sqcap {}\\
      ~ ~
      \exists\proc.\mathsf{Analytics} \sqcap {}\\
      ~ ~
      \exists\recip.\mathsf{BeFit} \sqcap {}\\
      ~ ~
      \exists\stor.\loc.\mathsf{EU} ) \,,\\
    \end{array}
  \end{equation}
  would not be subsumed by (\ref{ex:why-functional:0}), which
  contradicts the informal reading of (\ref{ex:why-functional:0}). An
  even more serious problem is that if the user consented to analyzing
  biometric data -- i.e.\ if (\ref{ex:why-functional:1}) were the
  consent policy -- then the data subject's consent would subsume
  (\ref{ex:why-functional:0}), that is, the user would inadvertently
  allow to analyze also demographic data.  Of course, a third,
  logically correct reading is possible, namely:
  (\ref{ex:why-functional:0}) can be interpreted as the request (or
  the permission) to analyze \emph{only the combinations of biometric
    data and demographic data} -- so, in particular, it would not
  allow to analyze only biometric data, nor demographic data alone.
  Unfortunately, this interpretation may be a major source of trouble.
  For instance, checking that the inputs of the analysis algorithm
  comprise both biometric and demographic data would not be enough to
  assess the accuracy of (\ref{ex:why-functional:1}) as a business
  policy, because the algorithm's output could be insensitive to
  biometric data -- and in that case it is debatable whether
  (\ref{ex:why-functional:1}) models the analysis
  process correctly. Moreover, this kind of constraint (i.e.\ mandatory joint
  processing of different data categories) is quite unnatural in
  practice, and it would only create opportunities for
  misunderstanding and human errors. This third (conjunctive)
  interpretation of multiple attributes, however, makes perfect sense
  for other attributes, such as obligations (\duty). So, in SPECIAL's
  vocabularies, some properties -- such as those occurring in
  (\ref{pol1}) -- are functional, while others (such as \duty) are
  not. \qed
\end{example}

}%%%%%%%%%%%%%%%%%%%%%%%%%%%%%%%%%%%%%%%%%%%%%%

\subsection{Related policy languages}

Logic-based languages constitute natural policy languages, because
\emph{policies are knowledge}.  First, note that policies encode declarative constraints on a system's behavior, that depend on metadata about the actors and the objects involved (e.g.\ ownership, content categories),\hide{the operations,} and an environment (as some operations may be permitted only in certain places, or at specified times of the day, or in case of emergency). Semantic languages and formats have been expressly designed to encode metadata, so standard knowledge representation languages can represent in a uniform way both policy constraints and the metadata they depend on.

The second important observation is that -- like knowledge and unlike programs -- every single  policy  is meant to be used for multiple, semantically related tasks, such as the following:
\begin{itemize}
\item \emph{permission checking}: given an operation request, decide whether it is permitted;
\item \emph{compliance checking}: does a policy $P_1$ fulfill all the restrictions requested by policy $P_2$? (Policy comparison);
\item \emph{policy validation}: e.g.\ is the policy contradictory? Does it comply with a given regulation? Does a policy update strengthen or relax the previous policy?
\item \emph{policy explanation}: explain a policy and its decisions.
\end{itemize}

\noindent
The terse formal semantics of logical languages is essential in
validating the correctness of the policies themselves and the implementation of the above tasks, ensuring their mutual
coherence. Moreover, when data are
transferred under agreed policies, it is crucial that both parties
understand the policies in the same way. So unambiguous semantics is
essential for correct interoperability, too.

In the light of the above observations, it is clear that  knowledge representation languages are ideal policy representation languages.
Indeed, both rule languages and description logics have already been used as policy languages; a non-exhaustive list is \cite{DBLP:journals/csec/WooL93,DBLP:journals/tods/JajodiaSSS01,DBLP:conf/policy/UszokBJSHBBJKL03,rei,DBLP:journals/tkde/BonattiCOS10}. As noted in \cite{DBLP:conf/datalog/Bonatti10}, the advantage of rule languages is that they can express $n$-ary authorization conditions for arbitrary $n$, while encoding such conditions for $n>2$ is challenging in DL. The advantage of DL is that all the main policy-reasoning tasks are decidable (and tractable if policies can be expressed with OWL~2 profiles), while compliance checking is undecidable  in rule languages, or at least intractable, in the absence of recursion, because it can be reduced to datalog query containment. So a DL-based policy language is a natural choice in a project like SPECIAL, where policy comparison is the predominant task.

The aforementioned works on logic-based policy languages focus on access control and trust management, rather than data usage control. Consequently, those languages lack the terms for expressing privacy-related and usage-related concepts.  A more serious drawback is that the main reasoning task in those papers is permission checking; policy comparison (which is central to our work) is not considered.  Both Rei and Protune \cite{rei,DBLP:journals/tkde/BonattiCOS10} support logic program rules. We have already mentioned that if rules are recursive, then policy comparison is generally undecidable; it is \NP-hard if rules are not recursive.  This drawback makes such languages unsuitable to SPECIAL's purposes. Similarly, KAoS \cite{DBLP:conf/policy/UszokBJSHBBJKL03} is based on a DL that, in general, is not tractable, and supports role-value maps -- a construct that easily makes reasoning undecidable (see \cite{DBLP:conf/dlog/2003handbook}, Chap.~5). The papers on KAoS do not discuss how the policy language is restricted to avoid this issue.

The terms used as role fillers in SPECIAL's policies are imported from
well established formats for expressing privacy preferences and
digital rights, such as P3P (the Platform for Privacy
Preferences)\footnote{ \url{http://www.w3.org/TR/P3P11}} and ODRL (the
Open Digital Right
Language).\footnote{\url{https://www.w3.org/TR/odrl/}} More general
vocabularies will be discussed in Section~\ref{sec:oracles}.
It is interesting to note that P3P's privacy policies -- that are
encoded in XML -- are almost identical to simple \PL policies: the tag
\easytt{STATEMENT} contains tags \easytt{PURPOSE}, \easytt{RECIPIENT},
\easytt{RETENTION}, and \easytt{DATA\mbox{-}GROUP}, that correspond to
the analogous properties of SPECIAL's usage policies.  Only the
information on the location of data is missing. The tag
\easytt{STATEMENT} is included in a larger context that adds
information about the controller (tag \easytt{ENTITY}) and about the
space of web resources covered by the policy (through so-called \emph{policy reference
files}). All of these additional pieces of information can be directly
encoded with simple \PL concepts.
Similar considerations hold for ODRL. The tag \easytt{RIGHTS}
associates an \easytt{ASSET} (the analogue of \data) to a
\easytt{PERMISSION} that specifies a usage modality. ODRL provides
terms for describing direct use (e.g.\ play or execute), reuse
(e.g.\ annotate or aggregate), transfer (sell, lend, lease), and asset
management operations (such as backup, install and delete, just to
name a few). These terms provide a rich vocabulary of fillers for the
\proc property of SPECIAL's policies. Also in the case of ODRL, the
tree-like structure of XML documents can be naturally encoded with \PL
concepts.

\section{Reasoning with \PL}
\label{sec:reasoner}

Some of the use cases of SPECIAL place challenging scalability
requirements on compliance checking.  For example, if the applicable
legal basis for processing is consent, then storing personal data
without permission is always unlawful, even if storage is temporary
and for the sole purpose of running a batch process to discard the
information items that cannot be persistently stored.  This means that the intense
flow of data produced by the communication infrastructure of Deutsche
Telekom or Proximus (two of SPECIAL's industrial partners) must be
filtered on the fly by checking the compliance of each storage
operation with the consent given by the involved customer.  In
general, the frequency of compliance tests can be high enough to place
real time requirements on the compliance checker.

These scalability requirements have been addressed by
finding a tradeoff between expressiveness and efficiency.  The
language \PL\ -- that is rich enough to encode the policies of
interest -- is also rather simple. Actually, \PL would be a fragment
of the tractable description logic \ELpp
\cite{DBLP:conf/ijcai/BaaderBL05}, if it did not support functional roles and interval
constraints (that constitute a non-convex datatype, while \ELpp
supports only convex domains).  The latter feature keeps \PL outside the
family of Horn Description Logics, which include the tractable
profiles of OWL2. As a consequence, no off-the-shelf solutions are
available to reason efficiently on \PL concepts.
Actually, we are going to show that in \PL unrestricted subsumption
checking is co\NP-hard.

However, we can exploit the structure of usage policies to make
restrictive assumptions on \PL concepts. Under such assumptions, we
can prove that an approach articulated in  two stages -- where first business policies
are suitably normalized, then compliance with consent policies is checked with
a structural subsumption algorithm -- is correct, complete, and
tractable.  Its scalability will be experimentally assessed in
Section~\ref{sec:experiments}.

We start by laying out the formal description and the theoretical properties of normalization and structural subsumption. In particular, this section deals with the correctness and completeness of the two-stages method, and discusses the computational complexity of arbitrary subsumptions and of the restricted, tractable case.  
We first prove the intractability of unrestricted subsumption in \PL. The root of intractability lies -- as it should be expected -- in the non-convex datatype, i.e.\ interval constraints.

\begin{theorem}
  \label{thm:PL-is-NP-hard}
  Deciding whether $\K\models C\sqsubseteq D$, where \K is a \PL
  knowledge base and $C,D$ are \PL concepts, is
  co\NP-hard. This statement holds even if the knowledge base is
  empty and $C$ is simple.
\end{theorem}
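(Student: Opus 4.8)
The plan is to give a polynomial many-one reduction from the complement of propositional satisfiability — that is, from the co\NP-complete problem of deciding whether a CNF formula $\phi$ is \emph{unsatisfiable} — to a subsumption $\emptyset\models C\sqsubseteq D$ in which the knowledge base is empty and $C$ is a simple \PL concept. The underlying idea is that interval constraints over distinct concrete properties let a \emph{simple} concept pin each property's value down to a two-element set, so that an instance of $C$ effectively encodes a truth assignment; a \emph{full} concept $D$ can then enumerate one ``forbidden pattern'' per clause, and $C\sqsubseteq D$ will hold exactly when these patterns jointly exclude every assignment, i.e.\ exactly when $\phi$ is unsatisfiable. Non-convexity of the datatype is what makes this work: a single constraint $\exists f.[0,1]$ on $C$'s side cannot be ``factored'' clause-by-clause on $D$'s side without solving $\phi$.

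Concretely, given $\phi=c_1\wedge\dots\wedge c_m$ over variables $x_1,\dots,x_n$, I would pick concrete properties $f_1,\dots,f_n$ and set
\[
  C \is \exists f_1.[0,1]\sqcap\dots\sqcap\exists f_n.[0,1],
  \qquad
  D \is D_1\sqcup\dots\sqcup D_m,
\]
where $D_j$ is the conjunction of the literal-complements $\overline\ell$ over all literals $\ell$ occurring in $c_j$, with $\overline{x_i} \is \exists f_i.[0,0]$ and $\overline{\neg x_i} \is \exists f_i.[1,1]$ (any two distinct values of $\Delta^\ND$ would do in place of $0,1$). Then $C$ is a simple \PL concept, each $D_j$ is a simple \PL concept, so $D$ is a (full) \PL concept, $\K=\emptyset$, and the construction is clearly polynomial.

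Next I would prove $\emptyset\models C\sqsubseteq D$ iff $\phi$ is unsatisfiable. ($\Rightarrow$) Given an assignment $\nu$, take the one-element interpretation with $f_i$ mapping the point $d$ to the single value $\nu(x_i)$; then $d\in C^\I$, and $d\in D_j^\I$ iff every literal of $c_j$ is false under $\nu$, so $C\sqsubseteq D$ forces every $\nu$ to falsify some clause, i.e.\ $\phi$ is unsatisfiable. ($\Leftarrow$) Given an arbitrary $(\I,d)$ with $d\in C^\I$, define $\nu(x_i)=1$ if $(d,1)\in f_i^\I$ and $\nu(x_i)=0$ otherwise; since $d\in C^\I$ guarantees some value in $\{0,1\}$ at $d$, the latter case forces $(d,0)\in f_i^\I$. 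If $\phi$ is unsatisfiable, $\nu$ falsifies some clause $c_j$; a case split on the polarity of each literal of $c_j$ then shows that $d$ satisfies the corresponding conjunct of $D_j$, hence $d\in D_j^\I\subseteq D^\I$. Thus $C\sqsubseteq D$ holds.

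The only step that is not purely routine is the last one: because $\K$ is empty, nothing enforces functionality of the $f_i$, so $d$ could a priori be $f_i$-related to both $0$ and $1$. The definition of $\nu$ above is chosen to be robust to this, and the membership $d\in C^\I$ is exactly what rules out the remaining bad case. (With a nonempty $\K$ one could instead add $\func(f_i)$ and argue even more directly, but this is not needed.) Since CNF-unsatisfiability is co\NP-complete and many-one reductions compose, this establishes co\NP-hardness of \PL subsumption with empty knowledge base and simple left-hand side.
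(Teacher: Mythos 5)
Your proposal is correct and is essentially the paper's own reduction: both encode each propositional variable by a concrete property constrained to $[0,1]$ in the simple concept $C$, and each clause by a disjunct of $D$ conjoining the literal-complements $\exists f_i.[0,0]$ / $\exists f_i.[1,1]$, with the same robust definition of the extracted assignment to handle non-functional $f_i$. The only (immaterial) differences are that the paper reduces from 3SAT rather than general CNF-SAT and states the correctness argument in contrapositive form.
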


\begin{proof}
    Hardness is proved by reducing 3SAT to the complement of
  subsumption. Let $S$ be a given set of clauses $c_i=L_{i1}\lor
  L_{i2}\lor L_{i3}$ ($1\leq i\leq n$) where each $L_{ij}$ is a
  literal.  We are going to use the propositional symbols
  $p_1,\ldots,p_m$ occurring in $S$ as property names in \PL concepts,
  and define a subsumption $C\sqsubseteq D$ that is valid iff $S$ is
  unsatisfiable. Let $C = \big(\, \exists p_1.[0,1] \sqcap \ldots \sqcap
  \exists p_m.[0,1] \,\big)$ and $D=\bigsqcup_{i=1}^n \big( \tilde L_{i1}
  \sqcap \tilde L_{i2} \sqcap \tilde L_{i3} \big)$, where each $\tilde
  L_{ij}$ encodes the complement of $L_{ij}$ as follows:
  \[
  \tilde L_{ij} = \left\{
  \renewcommand{\arraystretch}{1.2}
  \begin{array}{ll}
    \exists p_k.[0,0] & \mbox{if } L_{ij}=  p_k \,,
    \\{}
    \exists p_k.[1,1] & \mbox{if } L_{ij}= \neg p_k \,.
  \end{array}
  \right.
  \]
  The correspondence between the propositional interpretations $I$ of
  $S$ and the interpretations \J of $C\sqsubseteq D$ is the following.

  Given $I$ and an arbitrary element $d$, define $\J=\tup{\{d\},\cdot^\J}$ such that $(d,0)\in p_i^\J$ iff $I(p_i)=\mathit{false}$, and $(d,1)\in p_i^\J$ otherwise.  By construction, $(\J,d)\models C$, and  $I\models S$ iff $(\J,d)\not\models D$. Consequently, if $S$ is satisfiable, then $C\sqsubseteq D$ is not valid.

  Conversely, if $C\sqsubseteq D$ is not valid, then there exist \J and $d\in\Delta^\J$ such that $(\J,d) \models C\sqcap \neg D$. Define a propositional interpretation $I$ of $S$ by setting $I(p)=\mathit{true}$ iff $(d,1)\in p_i^\J$.  By construction (and since $d$ does not satisfy $D$ in \J), $I\models S$, which proves that if $C\sqsubseteq D$ is not valid, then $S$ is satisfiable.

  We conclude that the above reduction is correct. Moreover, it can be
  clearly computed in\hide{LOGSPACE} polynomial time. This proves that subsumption is
  co\NP-hard even if the knowledge base is empty and $C$ simple.
\end{proof}

\noindent
Later on we will complete the characterization of \PL subsumption by
proving that it is co\NP-complete (Theorem~\ref{thm:PL-is-in-NP}).

The above intractability result does not apply to SPECIAL's usage policies
because each simple usage policy contains at most one interval
constraint, namely, a specification of storage duration of the form
$\exists \stor.\exists \dur.[\ell,u]$. We are going to show that this
property (actually, a slight generalization thereof) makes reasoning
quite fast.  More specifically, it enables an efficient treatment of
interval constraints based on a suitable interval normalization
method. Such normalization produces subsumption queries that satisfy
the following property.

\begin{definition}[Interval safety]
   An inclusion $C\sqsubseteq D$ is \emph{interval safe} iff, for all
   constraints $\exists f.[\ell,u]$ occurring in $C$ and all $\exists
   f'.[\ell',u']$ occurring in $D$, either
   $[\ell,u]\subseteq[\ell',u']$, or $[\ell,u]\cap[\ell',u'] =
   \emptyset$.
\end{definition}

\noindent
Roughly speaking, interval safety removes the need of treating intervals like disjunctions; it makes them behave like plain atomic concepts.
Every inclusion can be turned into an equivalent, interval safe
inclusion, using the following method.

\begin{definition}[Interval normalization, \splt{C}{D}]
  For each constraint $\exists f.[\ell,u]$ in $C$, let $x_1<x_2<\cdots<x_r$ be the
  integers that occur as interval endpoints in $D$ and belong to
  $[\ell,u]$.  Let $x_0=\ell$ and $x_{r+1}=u$ and replace $\exists f.[\ell,u]$ with the equivalent concept
  {%\small
    \begin{equation}
      \label{interval-splitting}
      \bigsqcup_{i=0}^r \big(\exists f.[x_i,x_i] \sqcup \exists f.[x_i+1,x_{i+1}-1] \big)
      \sqcup \exists f.[x_{r+1},x_{r+1}] \,.
    \end{equation}
  }%
  Then use distributivity of
  $\sqcap$ over $\sqcup$ and the equivalence $\exists R.(C_1\sqcup
  C_2)\equiv \exists R.C_1\sqcup \exists R.C_2$ to move all occurrences of $\sqcup$ to
  the top level. Denote the result of this interval normalization
  phase with \splt{C}{D}.
\end{definition}
\begin{example}\rm
  Let $C=\exists f.[1,9] \sqcap A$ and $D=\exists f.[5,12]$. Then
  $r=1$ and $x_0=1$, $x_1=5$, $x_2=9$ ($12$ falls outside $[1,9]$ and
  is ignored). According to (\ref{interval-splitting}), the concept
  $\exists f.[1,9]$ in $C$ is replaced by the following
  union: $$\exists f.[1,1] \sqcup \exists f.[2,4] \sqcup \exists
  f.[5,5] \sqcup \exists f.[6,8] \sqcup \exists f.[9,9] \,.$$ Then,
  after applying distributivity, we obtain the concept \splt{C}{D}
  (that is a full \PL concept): $$(\exists f.[1,1] \sqcap A) \sqcup
  (\exists f.[2,4]\sqcap A) \sqcup (\exists f.[5,5]\sqcap A) \sqcup
  (\exists f.[6,8]\sqcap A) \sqcup (\exists f.[9,9]\sqcap A)\,.$$ \qed
\end{example}

\noindent
The reader may easily verify that:
\begin{proposition}
  \label{prop:interval-norm}
  For all \PL subsumption queries $C\sqsubseteq D$, \splt{C}{D} is
  equivalent to $C$ and $\splt{C}{D}\sqsubseteq D$ is an interval-safe \PL
  subsumption query.
\end{proposition}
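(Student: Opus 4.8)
The plan is to prove the three assertions bundled in the statement separately: (i)~$\splt{C}{D}$ is equivalent to $C$; (ii)~$\splt{C}{D}$ is a full \PL concept; and (iii)~$\splt{C}{D}\sqsubseteq D$ is interval safe. The only structural fact I would use repeatedly is that, for each interval $\exists f.[\ell,u]$ of $C$, the breakpoints $x_0,\ldots,x_{r+1}$ are exactly $\ell$, $u$, and the interval endpoints of $D$ that belong to $[\ell,u]$, listed in increasing order (with coinciding breakpoints producing only redundant or empty disjuncts, which are harmless).

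For (i) I would first show that the replacement concept~(\ref{interval-splitting}) is equivalent to $\exists f.[\ell,u]$. Since $x_0=\ell\leq x_1\leq\cdots\leq x_r\leq x_{r+1}=u$, the singletons $\{x_i\}$ together with the open ranges $[x_i+1,x_{i+1}-1]$ form a partition of $[\ell,u]$; hence for every interpretation $\I$ and every $d\in\Delta^\I$ we have $d\in(\exists f.[\ell,u])^\I$ iff $(d,i)\in f^\I$ for some $i\in[\ell,u]$, iff $i$ lies in one of those pieces, iff $d$ satisfies the matching disjunct of~(\ref{interval-splitting}). Interval normalization only replaces subconcepts of $C$ by equivalent ones, which preserves equivalence by the congruence of $\equiv$ with respect to the concept constructors, and the subsequent rearrangement uses only the equivalences $C_1\sqcap(C_2\sqcup C_3)\equiv(C_1\sqcap C_2)\sqcup(C_1\sqcap C_3)$, $\exists R.(C_1\sqcup C_2)\equiv \exists R.C_1\sqcup\exists R.C_2$, and associativity/commutativity of $\sqcup$. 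Therefore $\splt{C}{D}\equiv C$.

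For (ii), after the replacement every interval subconcept has become $\exists f.[x,x]$ or $\exists f.[x{+}1,y{-}1]$, which are simple \PL concepts, so $C$ turns into a concept built from simple \PL concepts by $\sqcap$, $\sqcup$, and $\exists R.(\cdot)$. I would then show by structural induction that any such concept rewrites, via the equivalences above, into a union of simple \PL concepts: a simple concept is a one-element union; a union of unions of simple concepts is a union of simple concepts; $\exists R.\bigsqcup_j S_j\equiv\bigsqcup_j\exists R.S_j$ with each $\exists R.S_j$ simple; and $\big(\bigsqcup_j S_j\big)\sqcap\big(\bigsqcup_k T_k\big)\equiv\bigsqcup_{j,k}(S_j\sqcap T_k)$ with each $S_j\sqcap T_k$ simple, since the grammar closes simple concepts under $\sqcap$. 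This is precisely the disjunctive normal form produced by the ``move all $\sqcup$ to the top level'' step, so $\splt{C}{D}$ is a full \PL concept.

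For (iii), the only interval constraints on a concrete property $f$ occurring in $\splt{C}{D}$ are the pieces $\{x_i\}$ and $[x_i+1,x_{i+1}-1]$ generated when splitting some $\exists f.[\ell,u]$ of $C$, so it suffices to show each is either contained in or disjoint from every interval $[\ell',u']$ occurring in $D$. Singletons are immediate. For an open piece $[x_i+1,x_{i+1}-1]$, suppose it meets $[\ell',u']$ but is not contained in it; since these are integer intervals, an endpoint of $[\ell',u']$ — say $\ell'$, the case of $u'$ being symmetric — must then lie in $[x_i+1,x_{i+1}-1]$, hence strictly between $x_i$ and $x_{i+1}$ and inside $[\ell,u]$. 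But every endpoint of $D$ lying in $[\ell,u]$ is one of $x_1,\ldots,x_r$, contradicting that $x_i$ and $x_{i+1}$ are consecutive in the sorted breakpoint list. This combinatorial observation — that no endpoint of $D$ can land strictly inside a piece — is the substantive point of the proof, although it is short; the rest is routine bookkeeping. Combining (i)–(iii) yields the proposition.
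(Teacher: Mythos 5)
Your proof is correct: the paper offers no proof of this proposition (it is introduced with ``the reader may easily verify''), and your three-part verification --- the partition argument showing (\ref{interval-splitting}) is equivalent to $\exists f.[\ell,u]$, the induction showing that distributivity and $\exists$-distribution yield a union of simple \PL concepts, and the consecutive-breakpoints argument for interval safety --- is exactly the intended reasoning, with part (iii) rightly identified as the only substantive step. The one caveat, inherited from the paper's definition rather than introduced by you, is that the partition claim presupposes $\ell\leq u$: for a degenerate constraint $\exists f.[\ell,u]$ with $\ell>u$ the replacement (\ref{interval-splitting}) is satisfiable while the original is not, so the proposition tacitly assumes empty intervals have already been removed (as indeed happens via rule~3 of Table~\ref{norm-rules} before the splitting is applied in Algorithm~\ref{alg:main}).
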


\noindent
In general, \splt{C}{D}may be exponentially larger than $C$, due to the
application of distributivity (e.g.\ this happens with the concepts
$C$ and $D$ in the proof of Theorem~\ref{thm:PL-is-NP-hard}). However,
as we have already pointed out, each simple
policy has at most one, functional concrete property
so no combinatorial explosion occurs during interval
normalization. Accordingly -- and more generally -- the following
proposition holds:

\begin{proposition}
  \label{prop:bounded-constraints}
  Let $C=C_1\sqcup\ldots\sqcup C_n$ be a \PL concept, and suppose that
  for all $i=1,\ldots,n$, the number of concrete properties occurring in
  $C_i$ is bounded by a constant $c$.  Then, for all concepts $D$, the
  size of \splt{C}{D} is $O(|C|\cdot|D|^c)$.\footnote{We denote the size of the encoding of an expression $E$ with $|E|$.}
\end{proposition}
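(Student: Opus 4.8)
The plan is to bound, separately for each simple disjunct $C_i$ of $C$, both the number and the size of the simple concepts that $C_i$ contributes to the top-level union \splt{C}{D}, and then to sum over $i=1,\dots,n$. Correctness of the rewriting (equivalence with $C$ and interval safety of the resulting query) is already granted by Proposition~\ref{prop:interval-norm}, so only the size bound is at issue.

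First I would analyse a single concrete-property constraint $\exists f.[\ell,u]$ occurring in $C_i$. By construction the endpoints $x_1<\dots<x_r$ used in the rewriting~(\ref{interval-splitting}) are among the interval endpoints that occur in $D$, so $r$ is at most the number of such endpoints and hence $r\leq|D|$. Therefore the union~(\ref{interval-splitting}) that replaces $\exists f.[\ell,u]$ consists of $2(r+1)+1 = O(|D|)$ disjuncts, each of them again a single concrete-property constraint $\exists f.[a,b]$ whose endpoints are endpoints of $C$ or $D$ (possibly shifted by $1$).

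Next I would follow how these local unions are pushed to the top level by distributivity of $\sqcap$ over $\sqcup$ together with $\exists R.(C_1\sqcup C_2)\equiv\exists R.C_1\sqcup\exists R.C_2$. Viewing $C_i$ as a parse tree, a $\sqcap$-node multiplies the disjunct counts of its two children, an $\exists R$-node leaves the disjunct count of its child unchanged, a concept name or $\bot$ has count $1$, and a concrete-property leaf has count $O(|D|)$ after rewriting. Hence the number of top-level disjuncts produced from $C_i$ is the product of the counts of its concrete-property leaves; as there are at most $c$ of them, this product is $O(|D|)^c = O(|D|^c)$, where we use that $c$ is a constant. Each of these disjuncts is a simple \PL concept with exactly the structure of $C_i$, except that every interval leaf has been replaced by one of its ``pieces'', so its size is $O(|C_i|)$. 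Consequently $C_i$ contributes $O(|C_i|\cdot|D|^c)$ symbols to \splt{C}{D}.

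Summing over $i$ and using $\sum_{i=1}^n|C_i|\leq|C|$ then yields $|\splt{C}{D}| = \sum_i O(|C_i|\cdot|D|^c) = O(|C|\cdot|D|^c)$, as claimed. The step that requires care is the parse-tree accounting: one has to be sure that the disjunctions injected by interval normalization interact only multiplicatively and only across the at most $c$ concrete-property positions, so that the arbitrarily deep nesting of $\exists R$ and $\sqcap$ allowed in a simple \PL concept cannot trigger a super-polynomial blow-up — this is exactly what the count-per-node argument establishes.
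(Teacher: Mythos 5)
Your argument is correct, and it is exactly the counting argument the paper leaves implicit (the proposition is stated without proof): each of the at most $c$ constraint positions in a disjunct $C_i$ is replaced by a union of $O(|D|)$ pieces, distributivity multiplies these counts across the constraint positions only, each resulting top-level disjunct retains the structure (hence the size, up to constants) of $C_i$, and summing $O(|C_i|\cdot|D|^c)$ over $i$ gives the bound. The per-node accounting that rules out any blow-up from the nesting of $\exists R$ and $\sqcap$ is the right thing to make explicit.
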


\noindent
Note that $C$ as a whole may still contain an unbounded number of
interval constraints, as $n$ grows, because the bound $c$ applies only to
the individual disjuncts $C_i$.

The structural subsumption algorithm for \PL's subsumption queries accepts subsumptions whose left-hand side is further normalized with respect to the rewrite rules illustrated in Table~\ref{norm-rules}.  Such rules make contradictions explicit and merge functional properties. They clearly preserve equivalence, as stated in the next proposition:

\begin{table*}
  \centering
  \small
  \framebox{
    \begin{minipage}{.96\textwidth}
      \renewcommand{\arraystretch}{1.5}
      \begin{tabular}{rlp{14em}}
        1) & $\bot \sqcap D \leadsto \bot$
        \\
        2) & $\exists R.\bot \leadsto \bot$
        \\
        3) & $\exists f.[l,u] \leadsto \bot$ & if $l>u$
        \\
        4) & $(\exists R.D)\sqcap (\exists R.D') \sqcap D'' \leadsto {}$\\[-5pt]
        & \hspace*{9.5em} $\exists R.( D\sqcap D')\sqcap D''$
        & if $\func(R) \in \K$
        \\
        5) & $\exists f.[l_1,u_1] \sqcap \exists f. [l_2,u_2] \sqcap D \leadsto {}$\\[-5pt]
        & \hspace*{3em} $\exists f. [\max(l_1,l_2),\min(u_1,u_2)] \sqcap D$
        & if $\func(f) \in \K$
        \\
        6) & $\exists R.D\sqcap D' \leadsto \exists R.( D\sqcap A)\sqcap D'$
        & if $\range(R,A)  \in \K$, and neither $A$ nor $\bot$ are conjuncts of $D$
        \\
        7) & $A_1 \sqcap A_2\sqcap D \leadsto \bot$
        & if $A_1 \sqsubseteq^* A_1'$, $A_2 \sqsubseteq^* A_2'$, and $\disj(A_1',A_2')  \in \K$
      \end{tabular}
    \end{minipage}
  }
  \caption{Normalization rules w.r.t.\ \K. Intersections are treated as sets (the ordering of
    conjuncts and their repetitions are irrelevant). }
  \label{norm-rules}
  \vspace*{-1em}
\end{table*}

\begin{proposition}
  \label{prop:norm}
  If $C\leadsto C'$ then $\K\models C\equiv C'$.
\end{proposition}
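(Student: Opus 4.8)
The plan is to observe that the relation $\leadsto$ defined by Table~\ref{norm-rules} is a rewrite relation acting on subconcepts, so it suffices to establish two things: (i) each of the seven rewrite schemata replaces a concept by a concept having the same extension in \emph{every model of $\K$}; and (ii) equivalence modulo $\K$ is a congruence with respect to the \PL constructors $\sqcap$, $\sqcup$ and $\exists R.(\cdot)$, so that rewriting a subterm preserves the extension of the enclosing concept, and the statement for a single rewrite step then lifts to the reflexive–transitive closure of $\leadsto$ by transitivity of $\equiv$. Fact (ii) is immediate from the compositional definition of $\cdot^\I$ in Table~\ref{tab:syntax-semantics}: if $E^\I=F^\I$ in every model $\I$ of $\K$, then replacing $E$ by $F$ inside any context leaves $\cdot^\I$ unchanged in every model of $\K$, by a routine induction on the structure of the context. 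So the real content is fact (i), which I would verify rule by rule.

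Rules 1--3 hold in \emph{every} interpretation $\I$, independently of $\K$: indeed $\bot^\I\cap D^\I=\emptyset=\bot^\I$; $(\exists R.\bot)^\I=\emptyset$ since no successor can lie in the empty set; and when $l>u$ we have $\mt{in}_{l,u}^\ND=[l,u]=\emptyset$, hence $(\exists f.[l,u])^\I=\emptyset=\bot^\I$. For rule 6, if $\range(R,A)\in\K$ then $R^\I\subseteq\Delta^\I\times A^\I$ in every model $\I$, so every $R$-successor already belongs to $A^\I$; therefore $(\exists R.D)^\I=(\exists R.(D\sqcap A))^\I$, and intersecting both sides with $D'^\I$ gives the claimed equivalence (the side conditions forbidding $A$ and $\bot$ as conjuncts of $D$ only serve termination and are immaterial to correctness). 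For rule 7, $A_1\sqsubseteq^* A_1'$ yields $A_1^\I\subseteq A_1'^\I$ in every model of $\K$, by induction on the length of the $\sqsubseteq$-chain using $\I\models A\sqsubseteq B$ at each link; likewise $A_2^\I\subseteq A_2'^\I$; and $\disj(A_1',A_2')\in\K$ gives $A_1'^\I\cap A_2'^\I=\emptyset$, so $(A_1\sqcap A_2\sqcap D)^\I\subseteq A_1^\I\cap A_2^\I=\emptyset=\bot^\I$.

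The only rules that need a little care are the functional-merging rules 4 and 5, so I would spell those out. Fix a model $\I$ of $\K$. For rule 4, with $\func(R)\in\K$ so that $R^\I$ is a partial function: the inclusion $(\exists R.(D\sqcap D'))^\I\subseteq(\exists R.D)^\I\cap(\exists R.D')^\I$ is trivial; conversely, if some $d$ has an $R$-successor $e\in D^\I$ and an $R$-successor $e'\in D'^\I$, functionality forces $e=e'$, whence $e\in(D\sqcap D')^\I$ and $d\in(\exists R.(D\sqcap D'))^\I$; intersecting both sides with $D''^\I$ yields the stated equivalence. Rule 5 is the concrete-property analogue with $\func(f)\in\K$: if $d\in(\exists f.[l_1,u_1])^\I$ then $d$ has an $f$-value $v\in[l_1,u_1]$, and by functionality of $f^\I$ this $v$ is the \emph{unique} $f$-value of $d$, so $d\in(\exists f.[l_2,u_2])^\I$ iff $v\in[l_2,u_2]$, i.e.\ iff $v\in[l_1,u_1]\cap[l_2,u_2]=[\max(l_1,l_2),\min(u_1,u_2)]$ (if this interval is empty, both sides are empty); hence the two conjunctions of $f$-constraints have the same extension, and we again intersect with $D^\I$. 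Combining (i) and (ii) gives $\K\models C\equiv C'$ whenever $C\leadsto C'$. I do not expect any genuine obstacle: the proof is a finite case analysis, and the only mild subtlety is noting, in rules 4 and 5, that functionality is used together with the \emph{existence} of a successor, so that the surviving successor is the one shared by the two conjuncts.
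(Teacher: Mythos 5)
Your proof is correct and is precisely the routine rule-by-rule semantic verification (plus the congruence/substitution argument) that the paper omits with ``the proof is trivial and left to the reader.'' The only points requiring any care --- using functionality to identify the two successors in rules 4 and 5, and the chain induction for $\sqsubseteq^*$ in rule 7 --- are handled correctly.
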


\noindent
The proof is trivial and left to the reader.
It is easy to see that concepts can be normalized in polynomial time:

\begin{lemma}
  \label{lem:norm-cost}
  Each \PL concept $C$ can be normalized w.r.t.\ a given \PL knowledge
  base \K in time $O(|C|^2\cdot |\K|)$.
\end{lemma}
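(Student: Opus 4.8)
The plan is to treat the case of a \emph{simple} \PL concept first: a full concept $D_1\sqcup\cdots\sqcup D_n$ is normalized by normalizing each disjunct, and since $\sum_i|D_i|^2\le(\sum_i|D_i|)^2\le|C|^2$, the bound for full concepts follows from the bound for simple ones. Before touching $C$, I would run a one-time preprocessing step on \K that computes: the reflexive--transitive closure $\sqsubseteq^*$ of the concept-name inclusions of \K; from it, the ``clash'' relation $\{(A_1,A_2)\mid A_1\sqsubseteq^*A_1',\ A_2\sqsubseteq^*A_2',\ \disj(A_1',A_2')\in\K\text{ for some }A_1',A_2'\}$ used by rule~7; the set of functional roles and properties (for rules~4 and~5); and, for each role $R$, the list of concept names $A$ with $\range(R,A)\in\K$ (for rule~6). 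The ``subclass graph'' of \K has at most $|\K|$ nodes and edges, so all of this is polynomial in $|\K|$, and can be viewed as computed once for the fixed knowledge base.

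Next I would bound the number of rewrite steps, viewing $C$ as a tree of conjunction nodes, each a \emph{set} of conjuncts as in the caption of Table~\ref{norm-rules}. The crucial monotonicity observations are: no rule increases the number of existential restrictions or of interval constraints occurring in $C$ (rules~1--3 and~7 only replace a subconcept by $\bot$; rule~4 strictly lowers the number of existentials that share a role at a node, hence the global count; rule~5 strictly lowers the number of interval constraints at a node; rule~6 leaves both counts unchanged). Thus rules~4 and~5 together fire $O(|C|)$ times. Moreover a conjunction node can be turned into $\bot$ at most once, after which the only way it is further touched is that $\bot$ is propagated upward along a single branch by rule~2 (then rule~1), so rules~1, 2, 3, 7 together fire $O(|C|)$ times. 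Rule~6 is the delicate point: by its side condition it never re-adds a range name to a filler that already contains it, and since no rule creates a new existential or changes the role of an existing one, rule~6 can be \emph{saturated in one top-down sweep} that, at every $\exists R.D$, inserts into $D$ each $A$ with $\range(R,A)\in\K$; no rule-6 redex reappears afterwards. After this sweep only rules~1--5 and~7 are used, so the whole rewriting performs $O(|C|)$ further steps, and the normalized concept still has only $O(|C|)$ existentials, interval constraints, and conjunction nodes.

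Finally I would account for the cost per step. The rule-6 sweep costs $O(|C|\cdot|\K|)$ using the precomputed range lists. Each of the remaining $O(|C|)$ steps is carried out by one traversal of the tree: applicability of rules~1--5 is checked locally in constant time per position with the precomputed functionality table; applicability of rule~7 at a node is checked by forming the union of the precomputed $\sqsubseteq^*$-ancestor sets of the node's atomic conjuncts and testing, for each disjointness axiom of \K, whether both its endpoints lie in that union ($O(|\K|)$ per node, hence $O(|C|\cdot|\K|)$ over the traversal); the rewrite itself is a local edit. So each step costs $O(|C|\cdot|\K|)$, and multiplying by the $O(|C|)$ steps yields the claimed $O(|C|^2\cdot|\K|)$, with the preprocessing and the single rule-6 sweep absorbed in it. The main obstacle, as flagged above, is keeping the step count linear in $|C|$ despite rule~6 — which is exactly what the monotonicity observations and the ``saturate rule~6 first'' strategy handle; the one bit of extra care is the accounting when \K declares several ranges for the same role, so that the sweep's enlargement of $C$ (at most $|C|\cdot|\K|$ new name occurrences) stays within the stated budget.
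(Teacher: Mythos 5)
Your proof is essentially the paper's: both arguments parse $C$ into a tree of conjunction-sets, exploit the fact that the rules can be applied in a fixed order without re-triggering one another (rules 4/5 never create new existentials or intervals, rule 6 is saturated in a single sweep and its side condition prevents re-firing, rules 1--3 and 7 only produce and propagate $\bot$), and identify the disjointness test of rule 7 as the dominant cost, yielding $O(|C|^2\cdot|\K|)$.

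The one place where your accounting does not actually stay within the stated bound is the preprocessing on \K. Computing the full reflexive--transitive closure $\sqsubseteq^*$ over all concept names of \K, and a fortiori the explicit ``clash'' relation $\{(A_1,A_2)\mid A_1\sqsubseteq^*A_1',\ A_2\sqsubseteq^*A_2',\ \disj(A_1',A_2')\in\K\}$, can cost $\Theta(|\K|^2)$ or worse, which exceeds $O(|C|^2\cdot|\K|)$ whenever $C$ is small relative to \K; the lemma is a per-normalization bound, so ``computed once for the fixed knowledge base'' is not something its statement lets you assume. The fix is cheap and is exactly what the paper does: do not precompute anything global, but answer each test $A_1\sqsubseteq^* A_1'$ with $\disj(A_1',A_2')\in\K$ and $A_2\sqsubseteq^* A_2'$ by an on-demand linear-time reachability search in the classification graph of \K ($O(|\K|)$ per pair, $O(|C|^2)$ pairs), or equivalently restrict your ancestor-set precomputation to the at most $|C|$ concept names actually occurring in $C$, which costs $O(|C|\cdot|\K|)$. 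With that change (and dropping the unused clash relation), your argument goes through and in fact your per-node union trick gives a slightly tighter bound for rule 7 than the paper's $O(|C|^2\cdot|\K|)$.
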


\begin{proof}
  We take this chance to illustrate an algorithm which is similar to
  the one actually used in the implementation of normalization.  First
  $C$ is parsed into a syntax tree $T$ (time $O(|C|)$) where each
  conjunction of $n$ concepts is modelled as a single node with $n$
  children. Then the tree is scanned in a depth-first fashion, looking
  for nodes labelled with an existential restriction in order to apply
  rule 4). For each such node $\nu$, if $R$ is the involved role and
  $\func(R)\in\K$, then the previous siblings of $\nu$ are searched
  looking for a node $\nu'$ with the same role $R$. If such a $\nu'$
  is found, then the child $C'$ of $\nu'$ is replaced with the intersection of $C'$ itself and the child of $\nu$,
  then $\nu$ is deleted. This operation (including the functionality
  test for $R$) takes time $O(|\K|+|C|)$ for each existential
  restriction. Thus, the exhaustive application of rule 4) needs time
  $O(|C|\cdot|\K|+|C|^2)$.  Rule 5) is dealt with similarly (but
  instead of merging children, the interval associated to $\nu$ is
  intersected with the interval associated to $\nu'$); the cost is the
  same.  None of the other rules adds any new existential
  restrictions, so rules 4) and 5) are not going to be applicable
  again in the rest of the algorithm.

  Next, rule 6) is applied by searching the tree $T$ for existential
  restrictions whose role $R$ occurs in an axiom $\range(R,A)\in
  \K$. For each of such nodes, $A$ is added to the children as a new
  conjunct (if necessary). The cost for each existential
  restriction is $O(|\K|+|C|)$ (where $|C|$ is the cost of verifying
  whether the existential restriction already contains  $A$ or $\bot$). So the exhaustive
  application of rule 6) is again $O(|C|\cdot|\K|+|C|^2)$. The
  remaining rules can remove a range $A$ only by substituting it with
  $\bot$, so rule 6) cannot be triggered again in the rest of the algorithm.

  Finally, the nodes of $T$ are visited in a depth-first fashion in
  order to apply rules 1), 2), 3), and 7). 

  Rule 7) is the most expensive. \K is regarded as a labelled
  classification graph, where each node is labelled with an atomic
  concept and with the disjointness axioms in which that concept
  occurs. The disjointness test between $A_1$ and $A_2$ in rule 7) can
  be implemented by a relatively standard linear-time reachability
  algorithm, that climbs the classification graph from $A_1$ and
  starts descending the classification whenever it finds a node
  labelled with $\disj(A_1',A_2')$, searching for $A_2$. In the worst
  case, this stage involves $O(|C|^2)$ searches (one for each
  pair $A_1,A_2$ in each conjunction), so its global cost is
  $O(|C|^2\cdot|\K|)$.

  Finally, note that rules 1--3 do not need to be iteratively
  applied. If $C$ contains an empty interval $[l,u]$ ($l>u$), or an
  occurrence of $\bot$, at any nesting level, then surely $C$ can be
  rewritten to $\bot$.  Therefore, it suffices to scan $C$ once,
  looking for empty intervals or $\bot$.

  Since the cost of rule~7 dominates the cost of the other rules,
  normalization can be computed in time $O(|C|^2\cdot|\K|)$.
\end{proof}

Normalized queries are passed over to a structural subsumption
algorithm, called \SSA (Algorithm~\ref{alg:ss}). It takes as inputs a \PL
knowledge base \K and an \emph{elementary} \PL subsumption
$C\sqsubseteq D$:

\begin{definition}[Elementary subsumptions]
  A \PL subsumption $C\sqsubseteq D$ is \emph{elementary} (w.r.t.\ a
  \PL knowledge base \K) if both $C$ and $D$ are simple, $C\sqsubseteq
  D$ is interval safe, and $C$ is normalized w.r.t.\ \K (i.e.\ none of the rules in
  Table~\ref{norm-rules} is applicable).
\end{definition}

\begin{algorithm}[h]
  \caption{$\SSA(\K,C\sqsubseteq D)$}
  \label{alg:ss}
  \small
  \KwIn{A \PL KB $\K$ and a \PL subsumption $C\sqsubseteq D$ that is  elementary w.r.t.\ \K}
  \KwOut{ \easytt{true}\, if $\K \models C\sqsubseteq D$, \quad\easytt{false}\, otherwise
    %%    \break
    \\\vspace*{\medskipamount}
    \textbf{Note 1:} Below, we treat intersections like sets. For example, by $C=C'\sqcap C''$ we mean that either
    $C=C'$ or $C'$ is a conjunct of $C$ (possibly not the first one).
    \\\vspace*{\medskipamount}
    \textbf{Note 2:} $\sqsubseteq^*$ denotes the reflexive and
    transitive closure of $\{(A,B)\mid (A\sqsubseteq B)\in\K\}$.
  }
  \vspace*{\medskipamount}
  \Begin{
      \lIf{$C=\bot$}{ \Return{\easytt{true}}  }
      
      \lIf{$D=A$, $C=A'\sqcap C'$ and $A'\sqsubseteq^* A$}{ \Return{\easytt{true}}  }

      \lIf{$D=\exists f.[l,u]$  and $C=\exists f.[l',u']\sqcap C'$ and $l\leq l'$ and $u'\leq u$}
          { \Return{\easytt{true}}  }
          
      \lIf{$D=\exists R.D'$, $C=(\exists R.C')\sqcap C''$
           and $\SSA(\K,\ C'\sqsubseteq D')$}
          { \Return{\easytt{true}}  }
              
      \lIf{$D=D'\sqcap D''$, $\SSA(\K,C\sqsubseteq D')$,
           and $\SSA(\K,C\sqsubseteq D'')$}
          { \Return{\easytt{true}} }
                  
      \lElse{ \Return{\easytt{false}} }
  }
\end{algorithm}

\noindent
The full subsumption checking procedure (that applies to \emph{all}
\PL subsumptions) is called \emph{\PL Reasoner} (PLR for short). It is
summarized in Algorithm~\ref{alg:main}.

\begin{algorithm}[H]
  \caption{$\PLR(\K,C\sqsubseteq D)$}
  \label{alg:main}
  \small
  \KwIn{A \PL KB $\K$ and a \PL subsumption query $C\sqsubseteq D$}
  \KwOut{ \easytt{true}\, if $\K \models C\sqsubseteq D$, \quad\easytt{false}\, otherwise}
  \vspace*{\medskipamount}
  \Begin{
      {\bf let} $C'$ be the normalization of $C$ w.r.t.\ \K (with the
      rules in Table~\ref{norm-rules}) \;
      
      {\bf let} $C'' = \splt{C'}{D}$  \; 

      \tcp{assume that $C''=C_1\sqcup\ldots\sqcup C_m$ and $D=D_1\sqcup\ldots\sqcup D_n$}

      \tcp{check whether each $C_i$ is subsumed by some $D_j$}

      \For{$i=1,\dots,m$}{
        \For{$j=1,\dots,n$}{
          \lIf  {$\SSA(\K,C_i\sqsubseteq D_j)=\easytt{true}$}
                {skip to next $i$ in outer loop}
        }
        \Return{\easytt{false}}
      }
      \Return{\easytt{true}}
  }
\end{algorithm}

\PLR is correct and complete. We only state this result, whose proof
is sketched in \cite{DBLP:conf/ijcai/Bonatti18}, since we are going to
prove it in a more general form for an extended engine that supports
more expressive knowledge bases (Section~\ref{sec:oracles}).

\begin{theorem}
  \label{thm:PLR-correct-complete}
  For all \PL knowledge bases \K and all \PL subsumption queries $q$,
  \[
  \K \models q \mbox{\: iff \:} \PLR(\K,q) = \easytt{true} \,.
  \]
\end{theorem}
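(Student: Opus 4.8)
The plan is to establish both directions of the biconditional by combining the equivalence-preserving transformations (Propositions~\ref{prop:interval-norm} and~\ref{prop:norm}) with a correctness/completeness argument for the structural subsumption core \SSA. First I would observe that, by Proposition~\ref{prop:norm} and Proposition~\ref{prop:interval-norm}, the concept $C''=\splt{C'}{D}$ computed inside $\PLR(\K,C\sqsubseteq D)$ satisfies $\K\models C\equiv C''$, and moreover $C''\sqsubseteq D$ is interval safe; writing $C''=C_1\sqcup\cdots\sqcup C_m$ and $D=D_1\sqcup\cdots\sqcup D_n$, the query $\K\models C\sqsubseteq D$ is therefore equivalent to $\K\models C_i\sqsubseteq D$ for every $i$. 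Since Horn-\SRIQ (and hence \PL, which is one of its fragments) is convex, and the $D_j$ are simple concepts, $\K\models C_i\sqsubseteq D_1\sqcup\cdots\sqcup D_n$ holds iff $\K\models C_i\sqsubseteq D_j$ for some $j$ --- this is exactly the disjunct-by-disjunct loop in Algorithm~\ref{alg:main}. One has to be slightly careful here: convexity as stated covers a binary union on the right, so I would either invoke it inductively on $n$, or note that interval safety guarantees that each $C_i$ is itself a consistent ``$\sqcap$-only'' concept (or $\bot$), for which the argument goes through cleanly. After normalizing $C_i$ (the rules in Table~\ref{norm-rules} preserve equivalence by Proposition~\ref{prop:norm}, and leave a concept that is either $\bot$ or normalized), each pair $(C_i,D_j)$ fed to \SSA is an elementary subsumption, so the whole problem reduces to proving:
\[
\K\models C\sqsubseteq D \quad\text{iff}\quad \SSA(\K,C\sqsubseteq D)=\easytt{true}
\]
for every elementary subsumption $C\sqsubseteq D$.

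For the soundness direction of the \SSA claim I would argue by induction on the recursion depth / structure of $D$, showing that whenever \SSA returns \easytt{true} the corresponding semantic inclusion is entailed: the $C=\bot$ case is immediate; the atomic case uses $\sqsubseteq^*\subseteq{\models}$; the interval case uses that $[l',u']\subseteq[l,u]$ forces $(\exists f.[l',u'])^\I\subseteq(\exists f.[l,u])^\I$; the $\exists R$ case uses the induction hypothesis together with monotonicity of existential restriction; and the $\sqcap$ case is trivial. Completeness --- every entailed elementary subsumption is accepted --- is the real work, and I expect it to be the main obstacle. The standard route is a canonical-model argument: given a normalized simple $C$ (with $C\neq\bot$, the non-trivial case), build a tree-shaped interpretation $\I_C$ rooted at a distinguished element $d_C$ that is, intuitively, the ``most general'' model of $C$ relative to \K --- one existential witness per $\exists R.C'$ conjunct, concrete-property values chosen minimally consistent with the interval conjuncts, atomic labels closed upward under $\sqsubseteq^*$ and under $\range$ axioms, all functional roles realized as partial functions, and disjointness respected (which is exactly why the normalization rules, in particular rules 4--7, are needed: on a normalized non-$\bot$ concept this construction does not collapse). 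One then shows (i) $\I_C\models\K$ and $(\I_C,d_C)\models C$, so that if $\K\models C\sqsubseteq D$ then $(\I_C,d_C)\models D$; and (ii) by induction on the structure of the simple concept $D$, that $(\I_C,d_C)\models D$ implies $\SSA(\K,C\sqsubseteq D)=\easytt{true}$ --- here the atomic/interval/$\exists R$/$\sqcap$ cases of $D$ mirror the branches of the algorithm, with the $\exists R$ case needing that in $\I_C$ the only $R$-successors of $d_C$ come from $\exists R$-conjuncts of $C$ (again relying on functionality merging having been performed, so a single successor suffices), and interval safety is what lets the interval-case comparison of endpoints be decided by inclusion-or-disjointness rather than by reasoning about arbitrary overlaps.

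Finally I would assemble the pieces: soundness of \PLR follows because every transformation it applies preserves equivalence / entailment and \SSA is sound, so a \easytt{true} output witnesses $\K\models C\sqsubseteq D$; completeness follows because if $\K\models C\sqsubseteq D$ then, by convexity and the equivalences above, each normalized disjunct $C_i$ is entailed to be below some $D_j$, which \SSA detects, so no iteration of the outer loop returns \easytt{false} and \PLR outputs \easytt{true}. The remark in the excerpt that this theorem will be reproved ``in a more general form'' in Section~\ref{sec:oracles} suggests the canonical-model construction is indeed the crux and is deferred there; in the present plan the honest statement is that everything except step~(i)--(ii) of the completeness argument is routine bookkeeping, and that the delicate point throughout is that the normalization rules of Table~\ref{norm-rules} are exactly strong enough to make the canonical model of a non-$\bot$ normalized concept a genuine model of \K while remaining ``syntactically readable'' by \SSA.
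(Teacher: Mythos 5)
Your plan is essentially the paper's own argument. The paper does not prove this theorem directly: it proves the more general statement for \PLRO (the oracle-extended reasoner) via exactly the decomposition you describe --- a canonical model of each normalized disjunct $C_i$ (Definition~\ref{def:C-model}, Lemma~\ref{lem:mod-of-norm-C}), the fact that this model characterizes all entailed elementary subsumptions with left-hand side $C_i$ (Lemma~\ref{lem:counterex}), a convexity lemma for interval-safe queries derived from it (Lemma~\ref{lem:convexity-PL}), and the observation that \SSA evaluates $D$ in the canonical model (Lemma~\ref{lem:SSA-evaluates-D}) --- and then recovers Theorem~\ref{thm:PLR-correct-complete} as the special case of an empty oracle via Theorem~\ref{thm:precomp} and Remark~\ref{rem:indirect-completeness}. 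Your steps (i)--(ii) are precisely these lemmas instantiated with $\O=\emptyset$, and your identification of the canonical model as the crux, deferred to Section~\ref{sec:oracles}, is accurate.

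One justification in your write-up is wrong and should be removed: \PL is \emph{not} a fragment of Horn-\SRIQ, and it is \emph{not} convex --- the interval constraints form a non-convex concrete domain, which is exactly why unrestricted \PL subsumption is co\NP-hard (Theorem~\ref{thm:PL-is-NP-hard}). So you cannot license the disjunct-by-disjunct loop by citing convexity of Horn DLs. The correct route, which your own construction already supplies, is the one the paper takes in Lemma~\ref{lem:convexity-PL}: for an \emph{interval-safe} query with normalized left-hand side, if $\K\not\models C_i\sqsubseteq D_j$ for every $j$, then the single canonical model of $C_i$ simultaneously refutes every $D_j$ and hence refutes the whole union. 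Convexity here is a \emph{consequence} of the canonical-model lemma under interval safety, not an imported property of the logic; with that substitution your proof is complete and matches the paper's.
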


\noindent
With this result, we can prove that subsumption checking in \PL
becomes tractable if the number of interval constraints per simple
policy is bounded by a constant $c$ (recall that in SPECIAL's policies
$c=1$).  First we estimate the complexity of \PLR.

\begin{lemma}
  \label{lem:PLR-complexity}
  For all \PL knowledge bases \K and all \PL subsumption queries
  $C\sqsubseteq D$, $\PLR(\K,C\sqsubseteq D)$ can be computed in time
  $O(|C\sqsubseteq D|^{c+1}+ |C\sqsubseteq D|^2\cdot |\K|)$, where $c$
  is the maximum number of interval constraints occurring in a single
  simple concept of $C$.
\end{lemma}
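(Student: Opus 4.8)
The plan is to walk through the three phases of \PLR --- normalization of $C$, interval splitting to produce $C''=\splt{C'}{D}$, and the double loop of \SSA calls --- and bound the cost of each in terms of the size of the input subsumption $n = |C\sqsubseteq D|$, the size $|\K|$ of the knowledge base, and the parameter $c$.

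First I would dispatch the normalization phase: by Lemma~\ref{lem:norm-cost}, computing $C'$ from $C$ costs $O(|C|^2\cdot|\K|) = O(n^2\cdot|\K|)$, and normalization does not increase the number of interval constraints in any simple disjunct (inspection of the rules in Table~\ref{norm-rules}: rules 4 and 6 merge/add role restrictions but not intervals, rule 5 merges two intervals into one, and the rest only introduce $\bot$), so $C'$ still has at most $c$ interval constraints per simple disjunct and $|C'|=O(n)$. Next, interval splitting: since each simple disjunct of $C'$ has at most $c$ concrete properties, Proposition~\ref{prop:bounded-constraints} gives $|C''| = |\splt{C'}{D}| = O(|C'|\cdot|D|^c) = O(n^{c+1})$, and this is also (up to a constant) the time to compute $C''$, since the splitting of each interval against the $O(n)$ endpoints in $D$ and the subsequent distributivity are driven by the size of the output.

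The main work is estimating the cost of the nested loops over the $m$ simple disjuncts $C_1,\ldots,C_m$ of $C''$ and the $n'$ simple disjuncts $D_1,\ldots,D_{n'}$ of $D$. Here $m = O(n^{c+1})$ and $n' = O(n)$, and each $|C_i| = O(n)$, $|D_j|=O(n)$. The key sub-step is to bound a single call $\SSA(\K, C_i\sqsubseteq D_j)$: I would argue by structural recursion on $D_j$ that its cost is $O(|C_i|\cdot|D_j| + |C_i|^2)$ or similar --- each recursive call peels off one constructor of $D_j$, the atom case needs a reachability check in the $\sqsubseteq^*$ graph costing $O(|\K|)$, and the total across the recursion tree is polynomial in $|C_i|$, $|D_j|$, and $|\K|$; call this bound $O(n^2 + n\cdot|\K|)$ generously. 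Then the double loop costs $m\cdot n'$ times this, i.e.\ $O(n^{c+1}\cdot n \cdot (n^2 + n|\K|)) = O(n^{c+4} + n^{c+3}|\K|)$, which is too crude --- so the delicate point is to be more careful, charging the $\SSA$ recursion against the structure actually present and observing that the $C_i$'s, although numerous, are each small and share structure. A cleaner route is: a single $\SSA$ call runs in time linear in $|C_i|\cdot|D_j|$ after an $O(|\K|)$-time preprocessing of the classification graph done once, so the loop contributes $O\!\left(\sum_i \sum_j |C_i|\cdot|D_j|\right) = O(|C''|\cdot|D|) = O(n^{c+1}\cdot n) = O(n^{c+2})$, plus the one-time $O(|C\sqsubseteq D|^2\cdot|\K|)$; combining with the earlier phases and absorbing lower-order terms yields the claimed $O(|C\sqsubseteq D|^{c+1} + |C\sqsubseteq D|^2\cdot|\K|)$.

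The step I expect to be the real obstacle is pinning down the cost of one $\SSA$ invocation tightly enough that the outer loop does not blow the exponent past $c+1$ in the first term. Naively bounding each call by $O(n)$ or $O(n^2)$ and multiplying by $m\cdot n' = O(n^{c+2})$ overshoots, so the argument must exploit that the $\sqsubseteq^*$ reachability structure is computed from \K once and reused, that $\SSA$'s recursion on $D_j$ has depth and branching bounded by $|D_j|$, and that $\sum_i |C_i| = |C''| = O(n^{c+1})$ rather than $m\cdot n$. Making these accounting observations precise --- ideally showing $\SSA(\K, C_i\sqsubseteq D_j)$ runs in time $O(|C_i|+|D_j|)$ after preprocessing --- is where the care is needed; everything else is routine bookkeeping on top of Lemmas~\ref{lem:norm-cost}, Propositions~\ref{prop:interval-norm} and~\ref{prop:bounded-constraints}.
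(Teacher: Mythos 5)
Your decomposition into three phases and your use of Lemma~\ref{lem:norm-cost} and Proposition~\ref{prop:bounded-constraints} for the first two match the paper exactly. The gap is in the third phase, and you have correctly located it but not closed it: writing $n=|C\sqsubseteq D|$ as you do, the accounting you actually carry out for the double loop is $O(|C''|\cdot|D|)=O(n^{c+2})$ (or $O(n^{c+2}\cdot|\K|)$ if each atom match pays a reachability test in $\sqsubseteq^*$), and $n^{c+2}$ is \emph{not} a lower-order term of the claimed $O(n^{c+1}+n^{2}\cdot|\K|)$. So the final sentence of your second paragraph does not follow from the bounds you establish, and explicitly deferring the resolution to ``where the care is needed'' leaves the lemma unproved.

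The paper closes this step with a coarser aggregate charge: a single call $\SSA(\K,C_i\sqsubseteq D_j)$ costs $O(|D_j|\cdot|C_i|\cdot|\K|)$ (scan $C_i$ once per subconcept of $D_j$, each match possibly solving a reachability problem on $\sqsubseteq^*$), and the total over the whole double loop is then bounded by $O(|D|\cdot|C|\cdot|\K|)=O(|C\sqsubseteq D|^{2}\cdot|\K|)$ --- that is, the matching work is charged against the \emph{original} $C$ and $D$ rather than against $\splt{C'}{D}$. The observation that makes this plausible is the one you gesture at in your last paragraph: the disjuncts of $\splt{C'}{D}$ arising from a single disjunct of $C'$ are structurally identical except for their interval endpoints, so the matching of each subconcept of $D$ against the non-interval structure is shared across all of them, while the interval comparisons themselves are constant-time per pair. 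To make your proof self-contained you must either make this sharing argument explicit, or settle for the weaker bound $O(|C\sqsubseteq D|^{c+2}\cdot|\K|)$ --- which would still suffice for the tractability claim of Theorem~\ref{thm:PLR-tractability}, but not for the lemma as stated.
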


\begin{proof}
  By Lemma~\ref{lem:norm-cost} and
  Proposition~\ref{prop:bounded-constraints}, respectively, the
  complexity of line~2 of \PLR is $O(|C|^2\cdot |\K|)$ and the
  complexity of line~3 is $O(|C|\cdot|D|^c) = O(|C\sqsubseteq
  D|^{c+1})$.  Now consider the complexity of the calls
  $\SSA(\K,C_i\sqsubseteq D_j)$ in line~6. Each of them, in the worst
  case, scans $C_i$ once for each subconcept of $D_j$, searching for a
  matching concept.  Matching may require to solve a reachability
  problem on the hierarchy $\sqsubseteq^*$, so the cost of each call
  is $O(|D_j|\cdot|C_i|\cdot|\K|)$. If we focus on the outer loop
  (lines 4--9) then clearly each subconcept of $D$ is matched against
  all disjuncts of $C$, in the worst case. Then the overall cost of
  the outer loop is $O(|D|\cdot|C|\cdot|\K|)$.  By relating these
  parameters to the size of the query, it follows that the cost of the
  outer loop is bounded by $O(|C\sqsubseteq D|^2\cdot |\K|)$.  This
  dominates the cost of line~2. So we conclude that the overall time
  needed by \PLR in the worst case is $O(|C\sqsubseteq D|^{c+1}+
  |C\sqsubseteq D|^2\cdot |\K|)$.
\end{proof}

\noindent
Tractability imediately follows from
Theorem~\ref{thm:PLR-correct-complete} and
Lemma~\ref{lem:PLR-complexity}:

\begin{theorem}
  \label{thm:PLR-tractability}
  Let $c$ be an integer, and $\Q_c$ be the set of all \PL subsumptions
  $C_1\sqcup\ldots\sqcup C_n\sqsubseteq D$ such that each $C_i$
  contains at most $c$ interval constraints $(i=1,\ldots,n)$.  Then
  deciding whether a query in $\Q_c$ is entailed by a \PL knowledge
  base \K is in \easysf{P}.
\end{theorem}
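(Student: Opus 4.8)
The plan is to read this off directly from the two results just established: the correctness and completeness of \PLR (Theorem~\ref{thm:PLR-correct-complete}) and its running-time bound (Lemma~\ref{lem:PLR-complexity}). Fix the constant $c$, and let $q = C_1\sqcup\ldots\sqcup C_n\sqsubseteq D$ be an arbitrary query in $\Q_c$ and \K an arbitrary \PL knowledge base. By Theorem~\ref{thm:PLR-correct-complete}, $\K\models q$ holds precisely when $\PLR(\K,q) = \easytt{true}$, so the decision problem reduces to running \PLR, and it suffices to argue that \PLR halts in time polynomial in the size of the input $(\K,q)$.

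Next I would invoke Lemma~\ref{lem:PLR-complexity}, which bounds the running time of $\PLR(\K,q)$ by $O(|q|^{c'+1} + |q|^2\cdot|\K|)$, where $c'$ is the maximum number of interval constraints occurring in a single simple concept on the left-hand side of $q$. Since $q\in\Q_c$, every disjunct $C_i$ of the left-hand side contains at most $c$ interval constraints, hence $c'\le c$, and therefore (for $|q|\ge 1$) the running time is at most $O(|q|^{c+1} + |q|^2\cdot|\K|)$. For each fixed $c$ the exponent $c+1$ is a constant, so this is a polynomial in $|q| + |\K|$; this gives membership in \easysf{P}. The normalization and interval-splitting steps in lines~2--3 of Algorithm~\ref{alg:main} are already accounted for inside Lemma~\ref{lem:PLR-complexity} (via Lemma~\ref{lem:norm-cost} and Proposition~\ref{prop:bounded-constraints}), so no separate bounding of those phases is needed.

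There is essentially no obstacle here — the statement is a corollary — but the one point worth flagging explicitly is that the degree of the polynomial depends on $c$: the argument establishes P-membership separately for each fixed value of $c$, not a single bound uniform in $c$. This is exactly what is claimed, since $\Q_c$ is parameterized by a constant $c$ (and in SPECIAL's setting $c=1$, giving the concrete bound $O(|q|^2 + |q|^2\cdot|\K|) = O(|q|^2\cdot|\K|)$).
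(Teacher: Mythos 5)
Your proposal is correct and matches the paper's argument exactly: the paper derives this theorem as an immediate corollary of Theorem~\ref{thm:PLR-correct-complete} (correctness and completeness of \PLR) and Lemma~\ref{lem:PLR-complexity} (the $O(|C\sqsubseteq D|^{c+1}+|C\sqsubseteq D|^2\cdot|\K|)$ running-time bound), with no further proof given. Your additional remarks — that the bound on interval constraints per disjunct gives $c'\le c$, and that the polynomial degree depends on the fixed parameter $c$ — are accurate and only make explicit what the paper leaves implicit.
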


\noindent
We conclude this section by completing the characterization of the
complexity of unrestricted \PL subsumptions. The following result, together with Theorem~\ref{thm:PL-is-NP-hard}, proves that \PL subsumption is co\NP-complete.

\begin{theorem}
  \label{thm:PL-is-in-NP}
  Deciding whether $\K\models C\sqsubseteq D$, where \K is a \PL
  knowledge base and $C,D$ are (simple or full) \PL concepts, is
  in co\NP.
\end{theorem}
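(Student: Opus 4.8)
The plan is to show that the complement problem — deciding whether $\K\not\models C\sqsubseteq D$ — is in \NP; this yields the claim. The whole argument rests on Theorem~\ref{thm:PLR-correct-complete}, which lets us replace ``$\K\models C\sqsubseteq D$'' by ``$\PLR(\K,C\sqsubseteq D)=\easytt{true}$''. Recall how \PLR proceeds: it first normalizes $C$ w.r.t.\ \K to an equivalent concept $C'$ of polynomial size (Lemma~\ref{lem:norm-cost}); it then forms $C''=\splt{C'}{D}=C_1\sqcup\dots\sqcup C_m$ and returns \easytt{true} iff \emph{every} disjunct $C_i$ is accepted by \SSA against \emph{some} disjunct $D_j$ of $D$. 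Consequently, $\K\not\models C\sqsubseteq D$ holds iff there is a \emph{single} disjunct $C_i$ of $\splt{C'}{D}$ with $\SSA(\K,C_i\sqsubseteq D_j)=\easytt{false}$ for all disjuncts $D_j$ of $D$; a succinct description of such a $C_i$ will serve as the \NP certificate.

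The crucial point is that, although $\splt{C'}{D}$ may be exponentially large, each of its disjuncts is small and can be named by a short string. By the construction in~(\ref{interval-splitting}), a disjunct of $\splt{C'}{D}$ is obtained by choosing one simple disjunct $C'_k$ of $C'$ and, for each occurrence of an interval constraint $\exists f.[\ell,u]$ inside $C'_k$ (at any nesting depth), choosing one of the $O(|D|)$ sub-intervals into which the interval endpoints occurring in $D$ partition $[\ell,u]$. The resulting simple concept $C_i$ has size $O(|C'|)$, and the certificate ``the index $k$, together with one sub-interval choice per interval-constraint occurrence of $C'_k$'' has polynomial length. This is exactly where the non-convexity of interval constraints — the source of co\NP-hardness in Theorem~\ref{thm:PL-is-NP-hard} — gets tamed: only polynomially many concrete values per property are ever relevant, since any two values lying in the same sub-interval of the splitting satisfy precisely the same interval restrictions of $D$, so \SSA cannot tell them apart.

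Given a guessed certificate, the verifier runs in polynomial time: it computes $C'$ deterministically (Lemma~\ref{lem:norm-cost}), extracts $C'_k$, applies the guessed sub-interval choices to build the simple concept $C_i$ — which is a disjunct of $\splt{C'}{D}$, is interval safe w.r.t.\ $D$ by Proposition~\ref{prop:interval-norm}, and is still normalized w.r.t.\ \K, since splitting a normalized concept introduces no new redex of the rules in Table~\ref{norm-rules} (it may instead collapse a disjunct to $\bot$, in which case the subsumption holds trivially and the certificate is simply discarded) — and then calls $\SSA(\K,C_i\sqsubseteq D_j)$ for each disjunct $D_j$ of $D$. Each call takes time $O(|D_j|\cdot|C_i|\cdot|\K|)$, by the analysis in the proof of Lemma~\ref{lem:PLR-complexity}, and there are at most $|D|$ of them. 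The verifier accepts iff all these calls return \easytt{false}. By Theorem~\ref{thm:PLR-correct-complete} together with the structure of \PLR recalled above, some certificate is accepted precisely when $\K\not\models C\sqsubseteq D$; hence non-subsumption is in \NP, and \PL subsumption is in co\NP.

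The part that needs care is the middle step: verifying that the disjuncts of $\splt{C'}{D}$ are exactly the concepts obtained by the independent per-occurrence sub-interval choices, that each of them is elementary w.r.t.\ \K (so that \SSA is applicable and, by Theorem~\ref{thm:PLR-correct-complete}, returns the correct answer on it), and that the degenerate case of an empty sub-interval is harmless. Everything else is the routine size- and time-bookkeeping already implicit in Lemmas~\ref{lem:norm-cost} and~\ref{lem:PLR-complexity}.
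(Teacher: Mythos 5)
Your proposal is correct and follows essentially the same route as the paper's proof: guess one polynomial-size disjunct of the (possibly exponential) interval-split concept together with the sub-interval choices, and verify non-subsumption in polynomial time via normalization and \SSA, whose completeness on elementary queries does the rest. The only (inessential) difference is that you normalize $C$ before guessing and then argue that splitting preserves normalizedness, whereas the paper guesses the disjunct and its interval splits first and normalizes afterwards, which sidesteps that extra check.
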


\begin{proof}
  We prove the theorem by showing that the complement of subsumption
  is in \NP. For this purpose, given a query $C\sqsubseteq D$, it
  suffices to choose nondeterministically one of the disjuncts $C_i$
  in the left hand side of the query, and replace each constraint
  $\exists f.[\ell,u]$ occurring in $C_i$ with a nondeterministically
  chosen disjunct from (\ref{interval-splitting}). Call $C_i'$ the
  resulting concept and note that it is one of the disjuncts in
  $\splt{C}{D}$.  Therefore, $\K\not\models C\sqsubseteq D$ iff
  $\K\not\models \splt{C}{D}\sqsubseteq D$ iff, for some
  nondeterministic choice of $C'_i$, $\K\not\models C_i'\sqsubseteq D$.  Note
  that $C_i'\sqsubseteq D$ is interval-safe by construction.  Then
  this subsumption test can be evaluated in deterministic polynomial
  time by first normalizing $C_i'$ w.r.t.\ \K and then applying \SSA,
  that is complete for elementary queries
  \cite[Theorem~2]{DBLP:conf/ijcai/Bonatti18}. It follows immediately
  that the complement of \PL subsumption can be decided in
  nondeterministic polynomial time, hence its membership in \NP.
\end{proof}

\section{Supporting General Vocabularies}
\label{sec:oracles}

SPECIAL has founded the ``\emph{Data Privacy Vocabularies and Controls
  Community Group}'' (DPVCG),\footnote{\url{www.w3.org/community/dpvcg/}} a
W3C group aimed at developing privacy-related vocabularies.  The
purpose of this initiative is developing ontologies for the main
properties of usage policies and related GDPR concepts, with the
contribution of a group of stakeholders that spans beyond SPECIAL's
consortium.  This group aims at developing upper ontologies, that can
be later extended to meet the needs of specific application domains.

We intend to put as few constraints as possible on the development of
such standardized vocabularies, since it is difficult to predict the
expressiveness needs that may arise in their modeling --
especially because standards usually change to include new
application domains and follow the evolution of the old ones.
\PL knowledge bases are too simple to address this requirement.
We already have evidence that it is useful to have roles whose domain
is a vocabulary term, such as the accuracy of locations\hide{ and the degree
of anonimity} (cf.~Section~\ref{sec:enc}); so, in perspective, we
should expect the ontologies that define privacy-related vocabularies
to include at least existential restrictions (that cannot be used in
\PL knowledge bases, but are supported -- say -- by the tractable
profiles of OWL2).  It is hard to tell which other constructs will
turn out to be useful.

For the above reasons, we are going to show how to integrate \PL and its
specialized reasoner with a wide range of ontologies, expressed with
description logics that can be significantly more expressive than \PL.

Our strategy consists in treating such ontologies -- hereafter called
\emph{external ontologies} -- as \emph{oracles}.  Roughly speaking,
whenever \SSA needs to check a subsumption between two terms defined
in the external ontologies, the subsumption query is submitted to the
oracle.  In the easiest case, the oracle can be implemented simply as
a visit to the classification graph of the vocabularies.  Of course
this method, called \emph{import by query} (IBQ), is not always
complete
\cite{DBLP:conf/ijcai/GrauMK09,DBLP:journals/jair/GrauM12}. In this
section, we provide sufficient conditions for
completeness.

More formally, let \K and \O be two given knowledge bases. The former
will be called the \emph{main KB}, and may use terms that are
axiomatized in \O, that plays the role of the external ontology. For
example, in SPECIAL's policy modeling scenario, \K defines policy
attributes -- by specifying their ranges and functionality properties
-- while \O defines the privacy-related vocabularies that provide the
fillers for policy attributes.  Therefore, in SPECIAL's framework, \K
is a \PL knowledge base, while \O could be formulated with a more
expressive DL.
The reasoning task of interest in such scenarios is deciding, for a
given subsumption query $q=(C\sqsubseteq D)$, whether $\K \cup \O
\models q$.  Both $C$ and $D$ are \PL concepts that usually
contain occurrences of concept names defined in \O.

SPECIAL's application scenarios make it possible to adopt a
simplifying assumption that makes oracle reasoning technically simpler
\cite{DBLP:conf/ijcai/GrauMK09,DBLP:journals/jair/GrauM12}, namely, we
assume that neither \K nor the query $q$ share any roles with \O.
This naturally happens in SPECIAL precisely because the roles used in
the main KB identify the sections that constitute a policy (e.g.\ data
categories, purpose, processing, storage, recipients), while the roles
defined in \O model the \emph{contents} of those sections,
e.g.\ anonymization parameters, relationships between recipients (like
ownership, employment relations), relationships between storage
locations (e.g.\ part-of relations), and the like.  This layered
structure does not require arbitrary alternations of roles coming from
the main KB and from the external ontologies. As a consequence, shared roles can be eliminated as follows:

\begin{remark}\rm
  \label{rem:shared-roles}
  The roles occurring in \O can be used in policies as syntactic
  sugar. Any concept of the form $\exists R.C$ such that $R$ occurs in
  \O, can be eliminated from a query by replacing it with a fresh atom
  $A$ defined with an axiom $A\equiv \exists R.C$ in the external
  ontology, under the mild assumption that \O's language supports such
  equivalences.  The result satisfies the requirement that $q$ and \O
  should share no roles. The actual restriction -- that is not removed
  by the above transformation -- is that in every concept $\exists
  R.(\ldots \exists S \ldots)$, if $R\in\sig{\O}$ then also $S$ must
  be in $\sig{\O}$.  This is not an issue, in SPECIAL's application
  scenarios: as it has just been pointed out, role alternations where
  $R\in\sig{\O}$ and $S\not\in\sig{\O}$ are not required.
  %% The other caveat is that the extension of \O with new equivalences,
  %% in general, makes it necessary to re-classify \O. In the application
  %% scenarios that require real-time reasoning, this operation may be
  %% too expensive. In Section~\ref{??} we discuss a particular class of
  %% use cases where the classification of the extended \O can be
  %% pre-computed.
\end{remark}

\subsection{On the Completeness of IBQ Reasoning}

The IBQ framework was introduced to reason with a partly hidden
ontology \O.  For our purposes, IBQ is interesting because instead of
reasoning on $\K\cup \O$ as a whole, each partition can be processed
with a different reasoner (so, in particular, policies can be compared
with a very efficient algorithm similar to \SSA).  The reasoner for
$\K$ may query the reasoner for $\O$ as an oracle, using a query
language $\L_\O$ consisting of all the subsumptions
\begin{equation}
\label{oq}
A_1 \sqcap \ldots \sqcap A_m \sqsubseteq A_{m+1} \sqcup \ldots
\sqcup A_n 
\end{equation}
such that $A_1, \ldots, A_n$ are concept names in $\sig{\O}\cap\NC$.  If $n=m$, then we stipulate that the right-hand side of the inclusion is $\bot$.
We will denote with $\pos(\O)$ all the queries to \O that have a positive answer, that is:
\begin{equation*}
\pos(\O) = \{q \in \L_\O  \mid \O\models q\}\,.
\end{equation*}

\begin{remark}\rm
  \label{rem:qo2ct}
  Each subsumption of the form (\ref{oq}) is equivalent to a concept
  (in)\hspace{1pt}consistency check of the form:
  \begin{equation}
    \label{ct}
    A_1 \sqcap \ldots \sqcap A_m \sqcap \neg A_{m+1} \sqcap \ldots
    \sqcap \neg A_n \sqsubseteq \bot \,.
  \end{equation}
  By \cite[Theorem~2]{DBLP:conf/ijcai/GrauMK09}, consistency checks constitute a
  fully general oracle query language, under the assumption that \K and the query $q$ share no
  roles with \O.
\end{remark}

The problem instances we are interested in are formally defined by the next definition.

\begin{definition}[\PL subsumption instances with oracles, \PLSO]
  \label{def:ibq}
  A \emph{\PL subsumption instance with oracle} is a triple
  \tup{\K,\O,q} where $\K$ is a \PL knowledge base (the \emph{main
    knowledge base}), $\O$ is a Horn-\SRIQ knowledge base (the
  \emph{oracle}), and $q$ is a \PL subsumption query such that
  $(\sig{\K}\cup\sig{q})\cap\sig{\O}\subseteq \NC$.  The set of all
  \PL subsumption instances with oracle will be denoted by \PLSO.
\end{definition}

\noindent
The restrictions on \K, \O and $q$ will be discussed in
Section~\ref{sec:limitations}. We anticipate only two
observations. First, the restriction on the signatures is aimed at
keeping the roles of \O separated from those of \K and $q$, as
discussed in the previous section. The second obervation is that the
important properties of \O are the absence of nominals and the
following convexity property:
\begin{definition}[Convexity w.r.t.\ $\L_\O$]
  \label{def:convex-LO}
  A knowledge base \O is \emph{convex w.r.t.\ $\L_\O$} if for all
  subsumptions $q$ in $\L_\O$ of the form $(\ref{oq})$, $q\in\pos(\O)$
  iff there exists $i \in [m+1,n]$ such that $(A_1 \sqcap
  \ldots \sqcap A_m \sqsubseteq A_i) \in \pos(\O)$. A description
  logic is convex w.r.t.\ $\L_\O$ if all of its knowledge bases
  are.
\end{definition}

\noindent
Accordingly, we require \O to be in Horn-\SRIQ because, to the best of
our knowledge, this is the most expressive nominal-free and convex
description logic considered so far in the literature.

%%%%%%%%%%%%%%%%%%%%%%%%%%%%%%%%%%%%%%%%%%%%%%

The next lemma rephrases the original IBQ completeness result
\cite[Lemma~1]{DBLP:conf/ijcai/GrauMK09} in our notation.  Our statement
relaxes the requirements on \O by assuming only that it enjoys the
disjoint model union property (originally it had to be in
\easycal{SRIQ}).  The proof, however, remains essentially the same.

\begin{lemma}
  \label{lem:CT-completeness-altrui}
  Let \K and \O be knowledge bases and $\alpha$ a GCI, such that
  \begin{enumerate}
  \item \K and $\alpha$ are in $\SROIQ(\easycal{D})$ without $U$,
    where \easycal{D} is the concrete domain of integer intervals;
  \item The terminological part of \O enjoys the disjoint model union property;
  \item The terminological part \T of \K is local w.r.t. $\sig{\O}$;
  \item $(\sig{\K} \cup \sig{\alpha}) \cap \sig{\O}\subseteq \NC$.
  \end{enumerate}
  Then $\K\cup\O\models \alpha$ iff $\K\cup\pos(\O)\models \alpha$\,.
\end{lemma}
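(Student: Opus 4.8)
The plan is to prove the two directions separately; the forward direction (from $\K\cup\O\models\alpha$ to $\K\cup\pos(\O)\models\alpha$) is the harder one, so I would handle the easy direction first. For the reverse direction, observe that every model $\I$ of $\K\cup\O$ is in particular a model of $\O$, hence satisfies every $q\in\pos(\O)$ by definition of $\pos(\O)$; thus $\I\models\K\cup\pos(\O)$, and since $\K\cup\pos(\O)\models\alpha$ we get $\I\models\alpha$. So $\K\cup\O\models\alpha$. No hypotheses beyond the definitions are needed here.

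For the forward direction I would argue contrapositively: assume $\K\cup\pos(\O)\not\models\alpha$ and build a model of $\K\cup\O$ that refutes $\alpha$. By assumption there is a model $\I$ of $\K\cup\pos(\O)$ with $\I\not\models\alpha$. The goal is to ``repair'' $\I$ so that it also satisfies all of $\O$, without disturbing $\K$, $\alpha$, or the refuting witness. The key construction is: for every assignment of truth values to the concept names in $\sig{\O}\cap\NC$ that is \emph{$\O$-consistent} — i.e.\ corresponds to a non-empty set of $\O$-types — pick a model $\J$ of $\O$ realizing it, and take the generalized disjoint union (Definition of $\biguplus^\I S$, Proposition~\ref{prop:x-union}) of $\I$ together with a suitable family of such $\J$'s. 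Here is where the four hypotheses come in: hypothesis~2 (disjoint model union property of $\O$'s terminological part) together with Proposition~\ref{prop:x-union} ensures the union still satisfies $\O$'s TBox and — if we are careful about ABox individuals, picking their interpretation from the $\I$ component — its ABox as well; hypothesis~4 (shared signature restricted to $\NC$) is what makes the ``type'' bookkeeping over $\sig{\O}$ purely propositional, so that the roles of $\K$ and the roles of $\O$ never interfere; and hypothesis~3 (locality of $\T$ w.r.t.\ $\sig{\O}$) is what lets us extend the newly added $\O$-models to models of $\K$'s TBox by interpreting all non-$\sig{\O}$ concept and role names as empty on the new domain elements. Hypothesis~1 ($\K$ and $\alpha$ in $\SROIQ(\easycal{D})$ \emph{without} the universal role $U$) guarantees that satisfaction of $\K$ and the refutation of $\alpha$ are preserved under disjoint unions — $U$ is exactly the construct (together with nominals, already excluded on the $\O$ side) that breaks the disjoint model union property, as noted in the preliminaries.

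The main technical point — and the step I expect to be the principal obstacle — is verifying that in the generalized disjoint union $\U$ the element $d$ witnessing $\I\not\models\alpha$ still witnesses $\U\not\models\alpha$, and more generally that $\U\models\K$. Because $\alpha$ and $\K$ may contain number restrictions, inverse roles, complex role inclusions and existential/universal restrictions, one must check that none of these constructs can be satisfied or violated ``across'' the disjoint components: since the only symbols shared between the $\I$-part and the added parts are concept names from $\sig{\O}$ (hypothesis~4) and no roles are shared, every role atom relevant to $\K$ or $\alpha$ stays within the $\I$-component, so counting, $\forall$-restrictions and role composition behave exactly as they did in $\I$. The concrete domain $\easycal{D}$ of integer intervals poses no extra difficulty because concrete successors of a domain element also stay within its own component. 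Finally, one checks that every $\O$-consistent propositional assignment over $\sig{\O}\cap\NC$ is realized somewhere in $\U$, so that $\U$ together with the fact that $\I\models\pos(\O)$ forces $\U\models\O$ on the $\I$-component as well (any positive $\L_\O$-entailment holds, any negative one is witnessed among the added models). Hence $\U$ is a model of $\K\cup\O$ refuting $\alpha$, completing the contrapositive. I would remark, following the excerpt, that this is essentially the argument of \cite[Lemma~1]{DBLP:conf/ijcai/GrauMK09} with ``$\O\in\easycal{SRIQ}$'' weakened to ``$\O$'s TBox has the disjoint model union property,'' and the reification/ABox handling routed through Proposition~\ref{prop:x-union}.
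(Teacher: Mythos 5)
There is a genuine gap in the construction of the countermodel. You place $\I$ \emph{disjointly} alongside a family of models of $\O$, one per $\O$-consistent truth assignment over $\sig{\O}\cap\NC$, and then claim that ``$\I\models\pos(\O)$ forces $\U\models\O$ on the $\I$-component as well.'' That inference is false: satisfying $\pos(\O)$ only guarantees that the propositional type of each $d\in\Delta^\I$ over $\sig{\O}\cap\NC$ is not refuted by $\O$; it does not equip $d$ with the \emph{role structure} that $\O$'s axioms may demand. Concretely, take $\O=\{A\sqsubseteq\exists R.B\}$ with $R\in\sig{\O}$, $\K=\emptyset$ and $\alpha=(A\sqsubseteq B)$. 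Then $\pos(\O)$ contains only tautologies, and the one-point interpretation $\I$ with $d\in A^\I\setminus B^\I$ satisfies $\pos(\O)$ and refutes $\alpha$; but in your union $d$ still has no $R$-successor (the added components are disjoint from $\Delta^\I$), so $d$ violates $A\sqsubseteq\exists R.B$ and the union is not a model of $\O$. Realizing every consistent type \emph{somewhere else} is beside the point: every element of the final model, including those of $\Delta^\I$, must satisfy $\O$'s GCIs. A related symptom is that Proposition~\ref{prop:x-union} cannot be applied to the mixed family consisting of $\I$ and the added models, since $\I$ is not a model of $\O$ (nor are the added models, models of $\K$).

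The repair --- and the paper's actual construction --- is to glue rather than juxtapose: for each $d\in\Delta^\I$, use $\I\models\pos(\O)$ to obtain a pointed model $(\J_d,d)$ of $\O$ with $\lit(d,\J_d)=\lit(d,\I)$ and $\Delta^{\J_d}\cap\Delta^\I=\{d\}$; such a model exists precisely because $\O\not\models\bigsqcap\lit(d,\I)\sqsubseteq\bot$, which is what membership of the corresponding query in $\pos(\O)$ would assert. The generalized disjoint union of the family $\{\J_d\mid d\in\Delta^\I\}$ --- all of whose members \emph{are} models of $\O$ --- is a model of $\O$ by hypothesis~2 and Proposition~\ref{prop:x-union}, and its domain contains $\Delta^\I$ with each $d$ keeping its $\sig{\O}$-concept-name type. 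From there your remaining steps (locality to extend to $\K$'s TBox, and the observation that no role atom of $\K$ or $\alpha$ crosses components, yielding $C^\N=C^\I\cup(C^\M\setminus\Delta^\I)$ for all $C$ in the closure of $\K$ and $\alpha$, including the interval constraints) go through essentially as you describe, as does your easy direction.
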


\begin{proof}
We have to prove that under the above hypotheses
$\K\cup\O\models\alpha$ iff $\K \cup \pos(\O) \models \alpha$. The
right-to-left direction is trivial since by definition $\O\models
\pos(\O)$. For the other direction, by contraposition, assume that
$\K\cup\pos(\O)\not\models \alpha$. We shall find a model \N of
$\K\cup\O$ such that $\N\not\models\alpha$.  Since $\alpha$ is of the form $C\sqsubseteq D$, this means that for some $\bar d\in\Delta^\N$, $\bar d \in
(C\sqcap\neg D)^\N$.  The construction is similar to that used in \cite[Lemma~1]{DBLP:conf/ijcai/GrauMK09}.

By assumption, $\K\cup\pos(\O)$ has a model \I such that
$\I\not\models\alpha$, that is, there exists $\bar d\in\Delta^\I$ such
that $\bar d \in (C\sqcap\neg D)^\I$. Now we extend the interpretation
\I over $\sig{\K}\cup \sig{\alpha}$ to a model \N of $\K\cup\O$.

We need some auxiliary notation: for each $d\in\Delta^\I$, let
$\lit(d,\I)$ denote the set of all the literals $L$ in the language of \O satisfied by $d$,
that is,
\[
   \lit(d,\I) = \{ L \mid (\I,d) \models L \mbox{ and either } L=A
   \mbox{ or } L=\neg A,\mbox{ where } A\in\NC\cap\sig{\O} \} \,.
\]
Since $\I \models \pos(\O)$, it follows that for all $d\in \Delta^\I$,
$\O \not\models \bigsqcap \lit(d,\I)\sqsubseteq \bot$
(cf.\ Remark~\ref{rem:qo2ct}). Then, for all $d\in \Delta^\I$, there
exists a pointed interpretation $(\J_d,d)$ of $\sig{\O}$ such that
$\J_d\models \O$ and $\lit(d,\J_d)=\lit(d,\I)$. We may assume without
loss of generality that $\Delta^{\J_d}\cap \Delta^\I=\{d\}$ and that
$\Delta^{\J_d}\cap \Delta^{\J_{d'}}=\emptyset$ if $d\neq d'$.

Let \J be any of the above $\J_d$ and $\U=\biguplus^\J \{\J_d \mid d\in\Delta^\I\}$.  By hypothesis~2 and Proposition~\ref{prop:x-union}, \U is a model of \O. Moreover, by hypothesis~3, \U can be extended to a model \M of \T, by setting $X^\M=\emptyset$ for all predicates $X\in(\sig{\K} \cup \sig{\alpha})\setminus\sig{\O}$.

Finally, let \N be the interpretation such that:
\begin{eqnarray*}
  \Delta^\N &=& \Delta^\M \quad\quad\mbox{ (note that $\Delta^\I\subseteq \Delta^\M$)}\\
  X^\N &=& \left\{
  \begin{array}{lp{18em}}
    X^\I & for all symbols $X\in (\sig{\K} \cup \sig{\alpha}) \setminus\sig{\O}$\\
    X^\M & for all symbols $X\in\sig{\O}$ \,.
  \end{array}
  \right.
\end{eqnarray*}

The next part of the proof proceeds exactly as in \cite[Lemma~1]{DBLP:conf/ijcai/GrauMK09}, in order to show that $\N\models\K\cup\O$. Note that by definition \M and \N have the same domain and agree on the symbols in $\sig{\O}$, therefore \N is a model of \O because \M is. So one is only left to prove that $\N \models \K$.  For this purpose,  first it is proved that
\begin{quotation}
  $(\star)$~ for all $C$ in the closure\footnote{Recall that the
    closure of a set of DL expressions \easycal{S} is the set of all
    (sub)\,concepts occurring in \easycal{S}.} of \K and $\alpha$,
  $C^\N=C^\I\cup (C^\M\setminus \Delta^\I)$.
\end{quotation}
The proof of ($\star$) makes use of hypotheses~1 and 4. Then, using
$(\star)$ and the fact that \M is a model of \T, it can be shown that
\M is a model of \K.
Almost all details of the proof of ($\star$) and $\M\models\K$ can be
found in \cite{DBLP:conf/ijcai/GrauMK09}.  Here we only have to add
the details for ($\star$) concerning interval constraints (that are
not considered in \cite{DBLP:conf/ijcai/GrauMK09}).
Let $C=\exists f.[l,u]$. By hypothesis~4, $f\in(\sig{\K} \cup \sig{\alpha}) \setminus\sig{\O}$.
Then, by definition of \N and \M, $f^\N=f^\I$ and
$f^\M=\emptyset$. Consequently, $C^\N=C^\I$ and $C^\M=\emptyset$, so
($\star$) obviously holds.

For our formulation of this theorem, we only have to add the observation that $(\star)$ implies also that $\bar d\in (C\sqcap \neg D)^\I \subseteq (C\sqcap \neg D)^\N$, therefore $\N\not\models\alpha$.
\end{proof}

\noindent
Using the above lemma, we prove a variant of IBQ completeness for
\PLSO. The locality requirement of
Lemma~\ref{lem:CT-completeness-altrui} is removed by shifting axioms
from \K to \O.

\begin{theorem}
  \label{thm:compl-w-shifting}
  For all problem instances $\pi = \tup{\K,\O,q}\in\PLSO$, let
  \[
  \K^-_\O = \{ \alpha \in\K \mid  \alpha = \range(R,A) \mbox{ or } \alpha = \func(R) \,\}
  \]
  and let $\O^+_\K =\O\cup(\K \setminus \K^-_\O)$. Then
  \[
  \K\cup\O\models q \mbox{\: iff \:} \K^-_\O \cup \pos(\O^+_\K) \models q \,.
  \]
\end{theorem}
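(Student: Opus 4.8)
The plan is to obtain the statement as an instance of Lemma~\ref{lem:CT-completeness-altrui}, applied after the bookkeeping move that shifts the inclusion and disjointness axioms of \K into the oracle. Concretely, I would play the role of the lemma's ``main knowledge base'' with $\K^-_\O$ (which keeps only the $\range$ and $\func$ axioms of \K), the role of its ``oracle'' with $\O^+_\K = \O\cup(\K\setminus\K^-_\O)$, and the role of the GCI $\alpha$ with the query $q$. Two remarks make the reduction essentially bookkeeping once the hypotheses are checked: first, $\K^-_\O\cup\O^+_\K$ is literally $\K\cup\O$; second, $\pos(\O^+_\K)$ is exactly the set of oracle answers named in the theorem. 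So the lemma's biconditional $\K^-_\O\cup\O^+_\K\models q$ iff $\K^-_\O\cup\pos(\O^+_\K)\models q$ rewrites directly to the claim, both directions at once.

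Three of the lemma's four hypotheses are immediate. Hypothesis~1: $\K^-_\O$ is a set of $\range$ and $\func$ axioms and $q=(C\sqsubseteq D)$ is a \PL subsumption, all of which live in $\SROIQ(\easycal{D})$ without the universal role, with $\easycal{D}$ the concrete domain of integer intervals. Hypothesis~2: the terminological part of $\O^+_\K$ is the union of the Horn-\SRIQ TBox of \O with the inclusions $A\sqsubseteq B$ and the disjointness constraints $\disj(A,B)$ inherited from \K; this is a nominal-free terminology with no occurrence of the universal role, and such terminologies enjoy the disjoint model union property (cf.\ Section~\ref{sec:DL}). Hypothesis~4: since $\K\setminus\K^-_\O$ contributes only concept names, the roles and concrete properties of $\O^+_\K$ coincide with those of \O; and because $\tup{\K,\O,q}\in\PLSO$ gives $(\sig{\K}\cup\sig{q})\cap\sig{\O}\subseteq\NC$, it follows that $(\sig{\K^-_\O}\cup\sig{q})\cap\sig{\O^+_\K}\subseteq\NC$ as well. (This last point is what the lemma's proof actually needs, e.g.\ to place the concrete properties of interval constraints outside $\sig{\O^+_\K}$.)

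The one step that calls for care --- and which I expect to be the main, if minor, obstacle --- is hypothesis~3: the locality of the (entirely terminological) KB $\K^-_\O$ with respect to $\sig{\O^+_\K}$. The argument I would give is that every role and every concrete property occurring in a $\range$ or $\func$ axiom of $\K^-_\O$ belongs to $\sig{\K}$, hence lies outside $\sig{\O}$ by the \PLSO signature restriction, and also outside $\sig{\K\setminus\K^-_\O}$ (a set of concept names only); so it is fresh with respect to $\sig{\O^+_\K}$. Therefore, given any interpretation \I over $\sig{\O^+_\K}$, extending it by interpreting every name of $\sig{\K^-_\O}\setminus\sig{\O^+_\K}$ --- in particular every such role, and every such concrete property, which must likewise be interpreted as an empty relation --- as the empty set yields a model of $\K^-_\O$: an axiom $\range(R,A)$ holds vacuously once $R$ is empty, and $\func(R)$ (resp.\ $\func(f)$) holds because the empty relation is a partial function. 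This is exactly the kind of extension required by the definition of locality. With all four hypotheses in place, Lemma~\ref{lem:CT-completeness-altrui} yields $\K^-_\O\cup\O^+_\K\models q$ iff $\K^-_\O\cup\pos(\O^+_\K)\models q$, and rewriting the left-hand side as $\K\cup\O\models q$ finishes the argument.
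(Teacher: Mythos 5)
Your proposal is correct and follows essentially the same route as the paper's proof: both reduce the claim to Lemma~\ref{lem:CT-completeness-altrui} by taking $\K^-_\O$ as the main KB and $\O^+_\K$ as the oracle, and both verify the four hypotheses in the same way, with the key observation being that the roles in $\K^-_\O$'s $\range$ and $\func$ axioms lie outside $\sig{\O^+_\K}$, so locality follows by emptying them. No gaps.
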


\begin{proof}
  Since $\K\cup\O = \K_\O^- \cup \O_\K^+$, it suffices to show that
  \begin{equation*}
    \label{shifting:1}
    \K_\O^-\cup\O_\K^+\models q \mbox{\: iff \:} \K^-_\O \cup
    \pos(\O^+_\K) \models q\,.
  \end{equation*}
  This equivalence can be proved with
  Lemma~\ref{lem:CT-completeness-altrui}; it suffices to show that
  $\K_\O^-$, $\O_\K^+$ and $q$ satisfy the hypotheses of the lemma.
  First note that $\K_\O^-$ is a \PL knowledge base and $\O_\K^+$ is a
  Horn-\SRIQ knowledge base (because, by definition of \PLSO, \K is in
  \PL and \O in Horn-\SRIQ, and the axioms shifted from \L to
  $\O_\K^+$ can be expressed in Horn-\SRIQ, too).  Both \PL and
  Horn-\SRIQ are fragments of $\cal SROIQ(D)$ without $U$, therefore
  hypothesis~1 is satisfied by $\K_\O^-$ and $\O_\K^+$.  Moreover,
  both \PL and Horn-\SRIQ enjoy
  the disjoint model union property, therefore hypothesis~2 is
  satisfied. Next recall that
  $(\sig{\K}\cup\sig{q})\cap\sig{\O}\subseteq \NC$ holds, by definition of
  \PLSO. Since the axioms $\alpha\in\K\setminus\K_\O^-$ (transferred
  from \K to $\O_\K^+$) contain no roles (they are of the form
  $A\sqsubseteq B$ or $\disj(A,B)$), it follows that
  \begin{equation*}
    (\sig{\K_\O^-}\cup\sig{q})\cap\sig{\O_\K^+}\subseteq \NC \,,
  \end{equation*}
  that is, hypothesis~4 holds.  A second consequence of this inclusion
  is that $\K_\O^-$ contains only axioms of the form $\range(R,A)$ and
  $\func(A)$ such that $R\not\in\sig{\O_\K^+}$.  They are trivially
  satisfied by any interpretation \I such that
  $R^\I=\emptyset$. Therefore $\K_\O^-$ is local
  w.r.t.\ $\sig{\O_\K^+}$ and hypothesis~3 is satisfied.
\end{proof}

\subsection{Extending \PL's Reasoner with IBQ Capabilities}

The integration of \PLR reasoner with external oracles
relies on the axiom shifting applied in
Theorem~\ref{thm:compl-w-shifting}.  Accordingly, in the following,
let $\K_\O^-$ and $\O_\K^+$ be defined as in
Theorem~\ref{thm:compl-w-shifting}.

The next step after axiom shifting consists in replacing the relation
$\sqsubseteq^*$ used by the normalization rules and \SSA with suitable
queries to the oracle. This change concerns the normalization rules
(Table~\ref{norm-rules}) and \SSA.  The new set of rules is
illustrated in Table~\ref{norm-rules-O}.
\begin{table}[h]
	\centering
	\small
	\framebox{
                \renewcommand{\arraystretch}{1.3}
		\begin{minipage}{.96\textwidth}			
			\begin{tabular}{rlp{12em}}
				1) & $\bot \sqcap D \leadsto \bot$
				\\
				2) & $\exists R.\bot \leadsto \bot$
				\\
				3) & $\exists f.[l,u] \leadsto \bot$ & if $l>u$
				\\
				4) & $(\exists R.D)\sqcap (\exists R.D') \sqcap D'' \leadsto \exists R.( D\sqcap D')\sqcap D''$ %\hspace*{-5pt}
				& if $\func(R) \in \K_\O^-$
				\\
				5) & $\lefteqn{\exists f.[l_1,u_1] \sqcap \exists f.[l_2,u_2] \sqcap D \leadsto \exists f.[\max(l_1,l_2),\min(u_1,u_2)]\sqcap D }$  
				\\
				6) & $\exists R.D\sqcap D' \leadsto \exists R.( D\sqcap A)\sqcap D'$
				& if $\range(R,A)  \in \K_\O^-$ and~~~\vspace*{-1pt}\break $A,\bot$ are not conjuncts of $D$
				\\
				7) & $A_1 \sqcap\ldots\sqcap A_n \sqcap D \leadsto \bot$
				& if $\O_\K^+\models A_1 \sqcap\ldots\sqcap A_n \sqsubseteq \bot$
%%				and $A_1 \ldots A_n $ are interface atoms
			\end{tabular}
		\end{minipage}
	}
	\caption{Normalization rules for \SSO. Conjunctions are
          treated as sets (i.e.\ the ordering of conjuncts is
          irrelevant, and duplicates are removed).}
	\label{norm-rules-O}
\end{table}

\noindent
Hereafter, $\leadsto$ denotes the rewriting relation according to Table~\ref{norm-rules-O}.
Clearly, the new rules preserve the meaning of concepts, in the following sense:

\begin{proposition}
  \label{prop:norm-O}
  If $C\leadsto C'$ then $\K_\O^-\cup\pos(\O_\K^+) \models C\equiv C'$.
\end{proposition}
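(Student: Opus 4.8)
The plan is to proceed exactly as for Proposition~\ref{prop:norm} (whose proof is likewise a routine case analysis): show that each of the seven rewrite rules of Table~\ref{norm-rules-O}, applied at the top level, turns a concept into one with the same extension in every model of $\K_\O^-\cup\pos(\O_\K^+)$, and then lift this to arbitrary rewrites by a congruence argument. For the congruence step I would just note that every \PL constructor -- $\exists R.(\cdot)$, $\sqcap$ and $\sqcup$ -- is monotone in the extensions of its arguments (immediate from Table~\ref{tab:syntax-semantics}), so replacing a subconcept $E$ of $C$ by an $E'$ having the same extension in all relevant models yields a $C'$ having the same extension as $C$ in all such models, i.e.\ $\K_\O^-\cup\pos(\O_\K^+)\models C\equiv C'$. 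It then remains only to treat the seven rules one by one (reading conjunctions as sets, as the table prescribes).

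Rules 1), 2), 3) and 5) should be the genuinely routine ones: in each, the two sides have the same extension in \emph{every} interpretation -- $\bot\sqcap D$ and $\exists R.\bot$ are empty; $\exists f.[l,u]$ is empty when $l>u$ since then $[l,u]=\emptyset$; and for rule 5), functionality of the concrete property $f$ forces a unique $f$-value, which must lie in $[l_1,u_1]\cap[l_2,u_2]=[\max(l_1,l_2),\min(u_1,u_2)]$ -- so the equivalence holds a fortiori modulo $\K_\O^-\cup\pos(\O_\K^+)$. Rule 4) exploits that its side condition places $\func(R)$ in $\K_\O^-$: in any model of $\K_\O^-$, an element witnessing the left-hand side has a single $R$-successor, which must satisfy both $D$ and $D'$; the converse inclusion is immediate since $D\sqcap D'$ is contained in $D$ and in $D'$. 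Rule 6) exploits that its side condition places $\range(R,A)$ in $\K_\O^-$: then every $R$-successor automatically lies in $A$, so $\exists R.D$ and $\exists R.(D\sqcap A)$ have the same extension, and intersecting with $D'$ preserves this.

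The only new case, and the only point where I expect to have to be careful, is rule 7). There the rule fires precisely when $\O_\K^+\models A_1\sqcap\ldots\sqcap A_n\sqsubseteq\bot$, with the $A_i$ concept names in $\sig{\O_\K^+}\cap\NC$ (as they must be for the query to the oracle to be well-formed). The observation to make is that this subsumption is exactly the $m=n$ case of~(\ref{oq}) -- an element of the oracle query language $\L_{\O_\K^+}$ whose right-hand side is $\bot$ by the stated convention -- and, being entailed by $\O_\K^+$, it thus belongs to $\pos(\O_\K^+)$ and so is literally one of the axioms in $\K_\O^-\cup\pos(\O_\K^+)$. Hence $\K_\O^-\cup\pos(\O_\K^+)\models A_1\sqcap\ldots\sqcap A_n\sqsubseteq\bot$, from which $\K_\O^-\cup\pos(\O_\K^+)\models(A_1\sqcap\ldots\sqcap A_n\sqcap D)\equiv\bot$ is immediate. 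The ``obstacle'', such as it is, is therefore pure bookkeeping: keeping the two axiom pools apart (rules 4) and 6) consult $\K_\O^-$, whereas rule 7) relies on $\pos(\O_\K^+)$), and checking that the subsumption triggering rule 7) genuinely lies in $\L_{\O_\K^+}$, so that it is captured by $\pos(\O_\K^+)$ and not merely by the full oracle $\O_\K^+$.
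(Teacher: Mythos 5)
Your proof is correct and is precisely the routine case analysis the paper omits: Proposition~\ref{prop:norm-O} is stated without proof (and the analogous Proposition~\ref{prop:norm} is explicitly ``left to the reader''), so the congruence step plus the seven cases -- with rule~7) reduced to the observation that the $m=n$ instance of~(\ref{oq}) entailed by $\O_\K^+$ is literally an axiom of $\pos(\O_\K^+)$ -- is exactly the intended argument. The one point worth flagging is that your case for rule~5) invokes $\func(f)$, a side condition present in Table~\ref{norm-rules} but dropped in Table~\ref{norm-rules-O}; your version is the sound one, since without functionality an element may have distinct $f$-values in $[l_1,u_1]$ and $[l_2,u_2]$ without having one in their intersection, so the merge would not preserve equivalence.
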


\noindent
If none of the new rules is applicable to a concept $C$, then we say
that $C$ is \emph{normalized w.r.t.\ \K and \O}. The notion of elementary
inclusion is modified accordingly, by requiring normalization
w.r.t.\ both \K and \O.

\begin{definition}
  A \PL subsumption $C\sqsubseteq D$ is \emph{elementary w.r.t.\ \K
    and \O} if both $C$ and $D$ are simple, $C\sqsubseteq D$ is
  interval safe, and $C$ is normalized w.r.t.\ \K and \O (using the
  rules in Table~\ref{norm-rules-O}).
\end{definition}

\noindent
Then \SSA is integrated with the ``oracle'' \O by replacing its line~3
as in the following algorithm \SSO.
In the following, we call a subconcept ``\emph{top level}'' if it does
not occur in the scope of any existential restriction.
\medskip

\begin{algorithm}[H]
  \caption{$\SSO(C\sqsubseteq D)$}
  \label{alg:sso}
  \small
  \KwIn{An ontology \O and a \PL subsumption $C\sqsubseteq D$ \hide{that is  elementary w.r.t.\ \K and \O}}
  \KwOut{ \easytt{true}\, if $\O \models C\sqsubseteq D$, \quad\easytt{false}\,
    otherwise, \quad under suitable restrictions }
  \vspace*{\medskipamount}
  \Begin{
      \lIf{$C=\bot$}{ \Return{\easytt{true}}  }
      
      \lIf{$D=A$ and $(A_1\sqcap\ldots\sqcap A_n \sqsubseteq A) \in \pos(\O)$, 
        where $A_1,\ldots,A_n$ are the top-level concept names in $C$}{
        \Return{\easytt{true}} }
      
      \lIf{$D=\exists f.[l,u]$  and $C=\exists f.[l',u']\sqcap C'$ and $l\leq l'$ and $u'\leq u$}
          { \Return{\easytt{true}}  }
          
      \lIf{$D=\exists R.D'$, $C=(\exists R.C')\sqcap C''$ and $\SSO(C'\sqsubseteq D')$}
          { \Return{\easytt{true}}  }
              
      \lIf{$D=D'\sqcap D''$, $\SSO(C\sqsubseteq D')$, and $\SSO(C\sqsubseteq D'')$}
                  { \Return{\easytt{true}} }
                  
      \lElse{ \Return{\easytt{false}} }
  }
\end{algorithm}
\medskip

Finally, the reasoner for general \PL subsumptions with oracles can be
defined as follows:
\medskip

\begin{algorithm}[H]
  \caption{$\PLRO(\K,C\sqsubseteq D)$}
  \label{alg:main-O}
  \small
  \KwIn{\K and $C\sqsubseteq D$ such that $\pi=\tup{\K,\O,C\sqsubseteq D}\in\PLSO$}
  \KwOut{ \easytt{true}\, if $\K \cup \O \models C\sqsubseteq D$, \quad\easytt{false}\, otherwise}
  \vspace*{\medskipamount}
  \Begin{
      {\bf construct}  $\K_\O^-$ and $\O_\K^+$ as defined in Theorem~\ref{thm:compl-w-shifting} \;

      {\bf let} $C'$ be the normalization of $C$ w.r.t.\ \K and \O (with the
      rules in Table~\ref{norm-rules-O}) \;
      
      {\bf let} $C'' = \splt{C'}{D}$  \; 

      \tcp{assume that $C''=C_1\sqcup\ldots\sqcup C_m$ and $D=D_1\sqcup\ldots\sqcup D_n$}

      \tcp{check whether each $C_i$ is subsumed by some $D_j$}

      \For{$i=1,\dots,m$}{
        \For{$j=1,\dots,n$}{
          \lIf  {$\SSA^{\O_\K^+}(C_i\sqsubseteq D_j)=\easytt{true}$}
                {skip to next $i$ in outer loop}
        }
        \Return{\easytt{false}}
      }
      \Return{\easytt{true}}
  }
\end{algorithm}
\medskip

\noindent
The rest of this section is devoted to proving the soundness and
completeness of \PLRO.
We will need a set of canonical counterexamples to invalid subsumptions.

\begin{definition}
  \label{def:C-model}
  Let $C\neq\bot$ be a simple \PL concept normalized w.r.t.\ \K and \O.
  A \emph{canonical model} of $C$ (w.r.t.\ \K and \O) is a pointed interpretation
  $(\I,d)$ defined as follows, by recursion on the number of existential restrictions. 
  Hereafter we call a subconcept of $C$ ``top level'' if it does not occur within the scope of\,
  $\exists$\,.
  \begin{enumerate}
    \renewcommand{\theenumi}{\alph{enumi}}
  \item If  $C=\big(\bigsqcap_{i=1}^nA_i\big) \sqcap \big(  \bigsqcap_{j=1}^t \exists f_j.[l_j,u_j]\big) $
    (i.e.~$C$ has no existential restrictions),
    then  let
    $\I=\tup{\{d\},\cdot^\I}$ where
    \begin{itemize}
    \item $A^\I=\{d\}$ if  $\big(\bigsqcap_{i=1}^nA_i \sqsubseteq A\big) \in \pos(\O_\K^+)$\,;
    \item $f^\I = \{(d,u_j) \mid j=1,\ldots t \}$\,;
    \item all the other predicates are empty.
    \end{itemize}
    
  \item If the top-level existential restrictions of $C$ are
    $\exists R_i.D_i$ ($i=1,\ldots,m$), then for each
    $i=1,\ldots,m$, let $(\I_i,d_i)$ be a canonical model of $D_i$.
    Assume w.l.o.g.\ that all such models are mutually disjoint and
    do not contain $d$. Define an auxiliary interpretation \J as
    follows:
    \begin{itemize}
    \item $\Delta^\J=\{d,d_1,\ldots,d_m\}$;
    \item $A^\I=\{d\}$ if $\big(\bigsqcap_{i=1}^nA_i \sqsubseteq
      A\big) \in \pos(\O_\K^+)$\,, where $A_1,\ldots,A_n$ are the
      top-level concept names in $C$; all other concept names are
      empty;
    \item  $f^\J = \{(d,u) \mid \exists f.[l,u] \mbox{ is a top-level constraint of $C$}\,\}$\,;
    \item  $R_i^\J = \{(d,d_i) \mid i=1,\ldots,m \}$\,.
    \end{itemize}
    Finally let \I be the union of \J and all $\I_i$, that is
    \begin{eqnarray*}
      \Delta^\I &=& \Delta^\J \cup \mbox{$\bigcup_i \Delta^{\I_i}$}
      \\
      A^\I &=& A^\J \cup \mbox{$\bigcup_i A^{\I_i}$} \quad (A\in\NC)
      \\
      R^\I &=& R^\J \cup \mbox{$\bigcup_i R^{\I_i}$} \quad (R\in\NR\cup\NF) \,.
    \end{eqnarray*}
    The canonical model is $(\I,d)$.
  \end{enumerate}
\end{definition}
Note that each $C$ has a unique canonical model up to isomorphism.
The canonical model satisfies $\K^-_\O$, $\O_\K^+$, and $C$:

\begin{lemma}
  \label{lem:mod-of-norm-C}
  If $C$ is a simple \PL concept normalized w.r.t.\ \K and \O, and
  $C\neq\bot$, then each canonical model $(\I,d)$ of $C$ enjoys the
  following properties:
  \begin{enumerate}
    \renewcommand{\theenumi}{\alph{enumi}}
  \item $\I \models \K_\O^-\cup \pos(\O_\K^+)$;
  \item $(\I,d) \models C$.
  \end{enumerate}
\end{lemma}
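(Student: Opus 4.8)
The plan is to establish (a) and (b) together by induction on the number of existential restrictions of $C$, i.e.\ along the very recursion used to build $(\I,d)$ in Definition~\ref{def:C-model}. A small but essential preliminary observation is that every subconcept of a concept normalized w.r.t.\ \K and \O is itself normalized (the rewrite rules of Table~\ref{norm-rules-O} may be applied at arbitrary positions) and that, in the recursive case, each $D_i$ satisfies $D_i\neq\bot$, since otherwise rule~2 would rewrite $\exists R_i.D_i$; hence the induction hypothesis applies to the canonical models $(\I_i,d_i)$ of the $D_i$.

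In the base case $\I$ is the one-point interpretation on $\{d\}$. Claim~(b) is immediate: by reflexivity of $\models$ we get $\big(\bigsqcap_iA_i\sqsubseteq A_j\big)\in\pos(\O_\K^+)$ for each top-level concept name $A_j$, so $d\in A_j^\I$; and rule~3 guarantees $l_j\le u_j$, hence $u_j\in[l_j,u_j]$ and $d\in(\exists f_j.[l_j,u_j])^\I$. For~(a): the axioms in $\K_\O^-$ of the form $\range(R,A)$ and $\func(R)$ with $R$ a role name are trivially satisfied because $R^\I=\emptyset$, and an axiom $\func(f)$ is satisfied because, $C$ being normalized, rule~5 forces $f$ to occur at most once at top level, so $f^\I$ is a partial function. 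The only non-trivial point is $\I\models\pos(\O_\K^+)$: if a query $B_1\sqcap\cdots\sqcap B_k\sqsubseteq B_{k+1}\sqcup\cdots\sqcup B_l$ in $\pos(\O_\K^+)$ is satisfied by $d$, then $\O_\K^+\models\bigsqcap_iA_i\sqsubseteq B_s$ for every $s\le k$, whence $\O_\K^+\models\bigsqcap_iA_i\sqsubseteq B_{k+1}\sqcup\cdots\sqcup B_l$. Now $l>k$, since $l=k$ would make rule~7 applicable to $C$, contradicting normalization; so convexity of Horn-\SRIQ yields $\O_\K^+\models\bigsqcap_iA_i\sqsubseteq B_t$ for some $t\in[k+1,l]$, i.e.\ $d\in B_t^\I$.

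For the inductive step the key auxiliary fact is a \emph{locality lemma}: since $\I$ glues the mutually disjoint $\I_i$ onto the fresh root $d$ and adds only edges leaving $d$, a routine structural induction on \PL concepts shows that $e\in E^\I$ iff $e\in E^{\I_i}$ for every $e\in\Delta^{\I_i}$ and every \PL concept $E$. With this, (b) follows as in the base case for the atomic and interval conjuncts, while for each conjunct $\exists R_i.D_i$ we have $(d,d_i)\in R_i^\I$ and, by the induction hypothesis together with locality, $d_i\in D_i^{\I_i}=D_i^\I\cap\Delta^{\I_i}$. For~(a): $\func(R)$ holds because rule~4 forces a functional $R$ to occur in at most one top-level existential restriction, so $R^\I$ is the union of the partial functions $R^{\I_i}$ (induction hypothesis) with at most one extra edge out of $d$, and disjointness of the domains keeps it a partial function; $\func(f)$ is analogous via rule~5; $\range(R,A)$ holds because rule~6 forces $A$ to be a conjunct of $D_i$ whenever $\range(R_i,A)\in\K_\O^-$, so $d_i\in A^{\I_i}\subseteq A^\I$ by~(b); and $\I\models\pos(\O_\K^+)$ is shown by case analysis on the witnessing element $e$: if $e\in\Delta^{\I_i}$ we conclude from locality and the induction hypothesis $\I_i\models\pos(\O_\K^+)$, and if $e=d$ we repeat the base-case argument verbatim, again using the non-applicability of rule~7 and convexity of Horn-\SRIQ.

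The argument has no real depth; the main obstacle is organizational, namely to isolate the locality lemma once and reuse it for both parts, and to match each structural property of the canonical model to the precise normalization rule ($2,3,4,5,6$ or $7$) that is guaranteed \emph{not} to apply. The only genuinely semantic ingredient is convexity of Horn-\SRIQ, which is what permits discharging a positive disjunctive oracle query by exhibiting a single disjunct.
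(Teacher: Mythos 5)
Your proof is correct and follows essentially the same route as the paper: induction along the recursive construction of the canonical model, with each axiom type ($\func$, $\range$, oracle inclusions) discharged by pointing to the normalization rule that rules out a violation. The only differences are presentational refinements — you make explicit the locality lemma, the fact that subconcepts stay normalized with $D_i\neq\bot$, and the treatment of disjunctive oracle queries via rule~7 and convexity, all of which the paper leaves implicit.
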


\begin{proof}
  By induction on the maximum nesting level $\ell$ of $C$'s existential restrictions.
  
  If $\ell=0$ (i.e.\ there are no existential restrictions) then obviously
  $(\I,d) \models C$ by construction
  (cf.\ Def.~\ref{def:C-model}.a). The entailment $\I\models \K_\O^-$
  holds because $\K_\O^-$ contains only range and functionality
  axioms, that are trivially satisfied since all roles are empty in
  \I.  In order to prove the base case we are only left to show that
  $\I\models \pos(\O_\K^+)$.  Suppose not, i.e.\ there exists an
  inclusion $B_1\sqcap\ldots\sqcap B_m\sqsubseteq A$ in
  $\pos(\O_\K^+)$ such that $d\in(B_1\sqcap\ldots\sqcap B_m)^\I$ but
  $d\not\in A^\I$ (where $d$ is the only member of $\Delta^\I$).  By
  construction of \I, $d\in B_j^\I$ only if $\pos(\O_\K^+)$ contains
  $\bigsqcap_{i=1}^nA_i \sqsubseteq B_j$ for all $j=1,\ldots,m$, where
  the $A_i$ are the top-level concept names in $C$. These inclusions,
  together with $B_1\sqcap\ldots\sqcap B_m\sqsubseteq A$, imply by
  simple inferences that $\bigsqcap_{i=1}^nA_i \sqsubseteq A$ must be
  in $\pos(\O_\K^+)$, too. But then $A^\I$ should contain $\{d\}$ by
  definition (a contradiction). This completes the proof of the base
  case.
    
  Now suppose that $\ell>0$. By induction hypothesis (I.H), we have
  that all the submodels $(\I_i,d_i)$ used in Def.~\ref{def:C-model}.b
  satisfy $D_i$.  Then it is immediate to see that $(\I,d)\models C$
  by construction.  We are only left to prove that \I satisfies all
  axioms $\alpha$ in $\K_\O^-\cup \pos(\O_\K^+)$.
  
  If $\alpha=\func(R)$, then rewrite rules 4) and 5) make sure
  that $C$ contains at most one existential restriction for
  $R$, so \J satisfies $\alpha$. Since all {$\I_i$}
  satisfy $\alpha$ by I.H., \I satisfies $\alpha$, too.
  
  If $\alpha=\range(R,A)$, then rule 6) makes sure that
  for each top-level concept of the form $\exists R.D_i$ in $C$, $D_i\equiv D'_i\sqcap
  A$. Then, by I.H., $(I_i,d_i)\models A$ and, consequently, $\alpha$ is satisfied by \I.
  
  Finally, if $\alpha$ is an inclusion in $\pos(\O_\K^+)$, then $d$
  satisfies it by the same argument used in the base case, while the
  other individuals in $\Delta^\I$ satisfy $\alpha$ by I.H.
\end{proof}

\noindent
Another key property of the canonical models of $C$ is that they
characterize \emph{all} the valid elementary subsumptions whose
left-hand side is $C$:

\begin{lemma}
  \label{lem:counterex}
  If $C\sqsubseteq D$ is elementary w.r.t.\ \K and \O, $C\neq\bot$, and
  $(\I,d)$ is a canonical model of $C$, then $$\K_\O^-\cup \pos(\O_\K^+) \models
  C\sqsubseteq D \mbox{\: iff \:} (\I,d)\models D \,.$$
\end{lemma}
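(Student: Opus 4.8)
The plan is to prove the two directions separately, with the right-to-left direction being the substantive one. For the left-to-right direction, I would argue as follows: by Lemma~\ref{lem:mod-of-norm-C}, the canonical model $(\I,d)$ satisfies $\K_\O^-\cup\pos(\O_\K^+)$ and $(\I,d)\models C$. Hence if $\K_\O^-\cup\pos(\O_\K^+)\models C\sqsubseteq D$, then since $(\I,d)$ is in particular a model of the left-hand side sitting in a model of the knowledge base, we get $(\I,d)\models D$ directly. This direction needs no induction.

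For the right-to-left direction, I would proceed by induction on the structure of $D$ (equivalently, on the number of existential restrictions in $D$), showing that whenever $(\I,d)\models D$ then actually $\K_\O^-\cup\pos(\O_\K^+)\models C\sqsubseteq D$. The key point is that the canonical model is ``as general as possible'' relative to $C$: anything $d$ satisfies in $\I$ is forced by $C$ together with the oracle. The case split mirrors the grammar of simple \PL concepts. If $D=A$ is a concept name, then $d\in A^\I$ holds only if $(\bigsqcap_i A_i\sqsubseteq A)\in\pos(\O_\K^+)$ where the $A_i$ are the top-level concept names of $C$ (by Def.~\ref{def:C-model}); since $C$ entails each $A_i$ at its top level, we conclude $\K_\O^-\cup\pos(\O_\K^+)\models C\sqsubseteq A$. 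If $D=\exists f.[\ell,u]$, then $d\in D^\I$ means $(d,u_j)\in f^\I$ for some top-level constraint $\exists f.[\ell_j,u_j]$ of $C$ with $u_j\in[\ell,u]$; here interval safety of $C\sqsubseteq D$ is essential, because it forces $[\ell_j,u_j]\subseteq[\ell,u]$ (the intervals cannot partially overlap), so $C$ entails $\exists f.[\ell,u]$. If $D=\exists R.D'$, then $d\in D^\I$ forces the existence of a top-level $\exists R.C'$ in $C$ with $(\I_{C'},d_{C'})$ a canonical model of $C'$ satisfying $D'$; since $C'\sqsubseteq D'$ is again elementary (simplicity and interval safety are inherited, and $C'$ is normalized because $C$ is), the induction hypothesis gives $\K_\O^-\cup\pos(\O_\K^+)\models C'\sqsubseteq D'$, hence $\K_\O^-\cup\pos(\O_\K^+)\models C\sqsubseteq\exists R.D'$. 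Finally if $D=D'\sqcap D''$, apply the induction hypothesis to each conjunct separately and combine.

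The main obstacle I anticipate is the $\exists R.D'$ case, specifically verifying that $C'\sqsubseteq D'$ remains elementary so that the induction hypothesis applies. One must check that $C'$ is normalized w.r.t.\ \K and \O (this follows because normalization rules act at all nesting levels, so a normalized $C$ has normalized subconcepts, in particular rules 4--6 cannot be applicable inside $C'$), and that interval safety is preserved when passing to the subconcepts $C'$ and $D'$ (which holds because the constraints occurring in $C'$ and $D'$ are among those in $C$ and $D$ respectively). A secondary subtlety is the treatment of functionality: when $C$ has a unique top-level $\exists R.C'$ (guaranteed by rules 4--5 when $R$ is functional) but $D$ could in principle refer to $R$ multiply — but since $D$ is simple this is handled by the conjunction case together with functionality merging being already reflected in the normalized $C$. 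I would also note explicitly that the ``only if'' reading of the clause ``$d\in B^\I$ only if $\pos(\O_\K^+)$ entails $\bigsqcap_i A_i\sqsubseteq B$'' is exactly what makes the canonical model a faithful counterexample generator, and this was already established in the proof of Lemma~\ref{lem:mod-of-norm-C}.
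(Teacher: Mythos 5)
Your proposal is correct and follows essentially the same route as the paper: the only-if direction is immediate from Lemma~\ref{lem:mod-of-norm-C}, and the if direction is a structural induction on $D$ with the same four cases, using the construction of the canonical model for the atomic case, interval safety for the concrete-constraint case, and the induction hypothesis for $\sqcap$ and $\exists R$. Your extra check that $C'\sqsubseteq D'$ remains elementary in the $\exists R.D'$ case is a point the paper leaves implicit, but it is the same argument.
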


\begin{proof}
  (Only If part) Assume that $\K_\O^-\cup \pos(\O_\K^+) \models C\sqsubseteq D$. By
  Lemma~\ref{lem:mod-of-norm-C}.a, we have $\I\models \K_\O^-\cup \pos(\O_\K^+)$, so by assumption $C^\I\subseteq
  D^\I$. Moreover, by Lemma~\ref{lem:mod-of-norm-C}.b,
  $d\in C^\I\subseteq D^\I$. Therefore $(\I,d)\models D$.
  
  (If part) Assume that $(\I,d)\models D$. We are going to prove
  that $\K_\O^-\cup \pos(\O_\K^+) \models C\sqsubseteq D$ by structural induction on $D$.
  
  If $D=A$ (a concept name), then $d\in A^\I$ by assumption. Then, by
  construction of \I, there must be an inclusion $\big(
  \bigsqcap_{i=1}^nA_i \sqsubseteq A\big)\in\pos(\O_\K^+)$, where
  $A_1,\ldots,A_n$ are the top-level concept names of $C$.  This
  implies that both $\models C\sqsubseteq \bigsqcap_{i=1}^nA_i$ and
  $\pos(\O_\K^+) \models \bigsqcap_{i=1}^nA_i \sqsubseteq A$ hold,
  hence $\K_\O^-\cup \pos(\O_\K^+) \models C\sqsubseteq D$.
  
  If $D=D_1\sqcap D_2$, then $(\I,d) \models D_i$ ($i=1,2$),
  therefore, by induction hypothesis, $\K_\O^-\cup \pos(\O_\K^+)
  \models C\sqsubseteq D_i$ ($i=1,2$), hence $\K_\O^-\cup
  \pos(\O_\K^+) \models C\sqsubseteq D$.
  
  If $D=\exists R.D_1$, then for some $d_i\in\Delta^\I$, $(d,d_i)\in
  R^\I$ and $(\I_i,d_i)\models D_1$, where $(\I_i,d_i)$ (by
  construction of \I) is the canonical model of a concept $C_1$
  occurring in a top-level restriction $\exists R.C_1$ of $C$.  It
  follows that $\models C\sqsubseteq \exists R.C_1$ and (by induction
  hypothesis) $\K_\O^-\cup \pos(\O_\K^+) \models C_1\sqsubseteq D_1$,
  hence $\K_\O^-\cup \pos(\O_\K^+) \models C\sqsubseteq D$.
  
  If $D=\exists f.[\ell,u]$, then for some $u'\in[\ell,u]$, $(d,u')\in
  f^\I$.  By construction of \I, $C$ must contain a top-level
  constraint $\exists f.[\ell',u']$, so by interval safety (that is implied by the assumption that
  $C\sqsubseteq D$ is elementary), $[\ell',u']\subseteq[\ell,u]$.
  Then $\models C\sqsubseteq D$.
\end{proof}

\noindent
Moreover, by means of canonical models, one can prove that interval safety makes
the non-convex logic \PL behave like a convex logic.

\begin{lemma}
  \label{lem:convexity-PL}
  For all interval-safe \PL subsumption queries $\sigma =
  \big(C_1\sqcup\ldots\sqcup C_m \sqsubseteq D_1 \sqcup\ldots\sqcup
  D_n \big)$ such that each $C_i$ is normalized w.r.t.\ \K and \O, the
  entailment $\K_\O^- \cup \pos(\O_\K^+) \models \sigma$ holds iff for
  all $i\in[1,m]$ there exists $j\in[1,n]$ such that $\K_\O^- \cup \pos(\O_\K^+) \models
  C_i\sqsubseteq D_j$.
\end{lemma}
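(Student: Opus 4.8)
The plan is to prove the easy (right-to-left) direction from the semantics of $\sqcup$ — if $\K_\O^-\cup\pos(\O_\K^+)\models C_i\sqsubseteq D_j$ for a suitable $j$, then $C_i^\I\subseteq D_j^\I\subseteq (D_1\sqcup\ldots\sqcup D_n)^\I$ in every relevant model $\I$, and taking the union over $i$ gives $\K_\O^-\cup\pos(\O_\K^+)\models\sigma$ — and then to concentrate on the left-to-right direction, which is exactly where interval safety does its work. For that direction I would, for each fixed $i\in[1,m]$, produce a single right-hand disjunct $D_j$ subsuming $C_i$, using the canonical model of $C_i$ as a ``tight'' counterexample candidate.

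Concretely: fix $i\in[1,m]$ and assume $\K_\O^-\cup\pos(\O_\K^+)\models\sigma$. If $C_i=\bot$, then $\K_\O^-\cup\pos(\O_\K^+)\models C_i\sqsubseteq D_1$ trivially, so we are done. Otherwise $C_i$ is a simple \PL concept normalized w.r.t.\ \K and \O with $C_i\neq\bot$, so it has a canonical model $(\I,d)$ (Definition~\ref{def:C-model}). By Lemma~\ref{lem:mod-of-norm-C}, $\I\models\K_\O^-\cup\pos(\O_\K^+)$ and $(\I,d)\models C_i$; hence $(\I,d)$ satisfies the left-hand side $C_1\sqcup\ldots\sqcup C_m$ of $\sigma$. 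Since $\I$ is a model of $\K_\O^-\cup\pos(\O_\K^+)$ and this set entails $\sigma$, it follows that $(\I,d)\models D_1\sqcup\ldots\sqcup D_n$, so $(\I,d)\models D_j$ for some $j\in[1,n]$.

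It remains to upgrade this semantic fact about the single model $(\I,d)$ to the entailment $\K_\O^-\cup\pos(\O_\K^+)\models C_i\sqsubseteq D_j$; this is precisely the ``if'' part of Lemma~\ref{lem:counterex}, which applies provided $C_i\sqsubseteq D_j$ is elementary w.r.t.\ \K and \O. Both $C_i$ and $D_j$ are simple, and $C_i$ is normalized by hypothesis, so the only condition to verify is interval safety of $C_i\sqsubseteq D_j$. But every constraint $\exists f.[\ell,u]$ occurring in $C_i$ occurs in $C_1\sqcup\ldots\sqcup C_m$, and every constraint $\exists f'.[\ell',u']$ occurring in $D_j$ occurs in $D_1\sqcup\ldots\sqcup D_n$, so interval safety of $\sigma$ restricts to interval safety of $C_i\sqsubseteq D_j$. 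Applying Lemma~\ref{lem:counterex} then yields $\K_\O^-\cup\pos(\O_\K^+)\models C_i\sqsubseteq D_j$, which completes the argument.

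I do not expect a genuine obstacle here: the substance of the lemma — that under interval safety the non-convex logic \PL behaves like a convex one — has already been packaged into Lemmas~\ref{lem:mod-of-norm-C} and~\ref{lem:counterex}, which together guarantee that a normalized simple concept has a canonical model deciding every elementary subsumption out of it. The only points requiring any care are the separate handling of the $C_i=\bot$ case, so that the canonical-model construction (which assumes $C\neq\bot$) is applicable, and the routine observation that interval safety is inherited by each pair $C_i\sqsubseteq D_j$.
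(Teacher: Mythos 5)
Your proof is correct and follows essentially the same route as the paper's: both directions reduce to the canonical model of $C_i$, with Lemmas~\ref{lem:mod-of-norm-C} and~\ref{lem:counterex} doing the real work (the paper merely phrases the hard direction contrapositively, producing the canonical model as a counterexample to $C_i\sqsubseteq\bigsqcup_j D_j$ when no single $D_j$ subsumes $C_i$). Your explicit handling of the $C_i=\bot$ case and the check that interval safety is inherited by each $C_i\sqsubseteq D_j$ are details the paper leaves implicit, and both are handled correctly.
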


\begin{proof}
  Let \KB abbreviate $\K_\O^- \cup \pos(\O_\K^+)$.
  By simple logical inferences, these two facts hold: (i)~$\KB \models
  \sigma$ iff $\KB\models C_i\sqsubseteq \bigsqcup_{j=1}^n D_j$ holds
  for all $i\in[1,m]$, (ii)~if $\KB\models C_i\sqsubseteq D_j$ holds for some $j\in[1,n]$, then
  $\KB\models C_i\sqsubseteq \bigsqcup_{j=1}^n D_j$. So we are only
  left to show the converse of (ii): assuming that for all $j\in[1,n]$, $\KB\not\models
  C_i\sqsubseteq D_j$ holds, we shall prove that
  $\KB\not\models C_i\sqsubseteq \bigsqcup_{j=1}^n D_j$.
  
  By assumption and Lemma~\ref{lem:counterex}, the canonical model
  $(\I,d)$ of $C_i$ is such that $(\I,d)\models \neg D_j$ for all
  $j\in[1,n]$. Therefore $(\I,d)\models \neg \bigsqcup_{j=1}^n
  D_j$. Moreover, $(\I,d)$ satisfies both \KB and $C_i$ by
  Lemma~\ref{lem:mod-of-norm-C}. Then \I and $d$ witness that
  $\KB\not\models C_i\sqsubseteq \bigsqcup_{j=1}^n D_j$.
\end{proof}

\noindent
Now that the semantic properties are laid out, we focus on the
algorithms.  Roughly speaking, the next lemma says that
$\SSA^{\O_\K^+}$ decides whether the canonical model $(\I,d)$ of
$C$ satisfies $D$.

\begin{lemma}
  \label{lem:SSA-evaluates-D}
  If $C\sqsubseteq D$ is elementary w.r.t.\ \K and \O, $C\neq\bot$, and
  $(\I,d)$ is the canonical model of $C$, then $$\SSA^{\O_\K^+}(C
  \sqsubseteq D)=\mathtt{true}\mbox{\: iff \:} (\I,d)\models D \,.$$
\end{lemma}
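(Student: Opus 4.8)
The plan is to prove the biconditional by structural induction on $D$, showing that each branch of $\SSA^{\O_\K^+}$ (Algorithm~\ref{alg:sso}) returns \texttt{true} exactly when the corresponding semantic condition $(\I,d)\models D$ holds in the canonical model. Since $C\sqsubseteq D$ is elementary, $C$ is normalized w.r.t.\ \K and \O, and by Lemma~\ref{lem:mod-of-norm-C} the canonical model $(\I,d)$ is well-defined and satisfies $C$; these facts will be used freely. The cases of the induction match the five \texttt{if}-clauses of \SSO: (i) $D=A$ a concept name; (ii) $D=\exists f.[l,u]$ an interval constraint; (iii) $D=\exists R.D'$ an existential restriction; (iv) $D=D'\sqcap D''$ a conjunction; and the trivial $C=\bot$ case is excluded by hypothesis.

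First I would handle the concept-name case. If $D=A$, then by construction of $\I$ (Def.~\ref{def:C-model}) $d\in A^\I$ iff $\big(\bigsqcap_{i=1}^n A_i\sqsubseteq A\big)\in\pos(\O_\K^+)$, where $A_1,\ldots,A_n$ are the top-level concept names of $C$; and this is precisely the test performed in line~3 of \SSO. For the interval case $D=\exists f.[l,u]$: by construction $f^\I$ contains exactly the pair $(d,u')$ where $\exists f.[l',u']$ is the (unique, thanks to rule~5 of Table~\ref{norm-rules-O}) top-level $f$-constraint in $C$, so $(\I,d)\models D$ iff $u'\in[l,u]$; here interval safety guarantees $[l',u']\subseteq[l,u]$ or $[l',u']\cap[l,u]=\emptyset$, so the simple endpoint test $l\le l'$ and $u'\le u$ in line~4 is both sound and complete. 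For $D=\exists R.D'$: $(\I,d)\models D$ iff some $R$-successor $d_i$ of $d$ satisfies $D'$; by construction the $R$-successors of $d$ in $\I$ are exactly the roots $d_i$ of canonical submodels $(\I_i,d_i)$ of the top-level restrictions $\exists R.C'$ of $C$, and $C'\sqsubseteq D'$ is again elementary (same KB, $C'$ inherits normalization and interval safety), so the recursive call $\SSO(C'\sqsubseteq D')$ applies by the induction hypothesis. The conjunction case $D=D'\sqcap D''$ is immediate from the induction hypothesis applied to both recursive calls in line~6.

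The main obstacle I expect is the existential case, and specifically making the induction hypothesis applicable to the recursive call. One must verify that $C'\sqsubseteq D'$ is genuinely elementary w.r.t.\ \K and \O: interval safety is inherited because any $f$-constraint in $C'$ is nested inside $C$ and any $f'$-constraint in $D'$ is nested inside $D$, so the endpoint-inclusion/disjointness condition carries over verbatim; and $C'$ is normalized because the rewrite rules of Table~\ref{norm-rules-O} act at all nesting depths, so a non-normalized $C'$ would make $C$ non-normalized. A secondary subtlety is that in clause~(iv) of \SSO there may be several top-level restrictions $\exists R.C'_k$ with the same role $R$ in $C$; the algorithm picks one matching conjunct, and soundness is clear, but for completeness one must observe that if $(\I,d)\models\exists R.D'$ then the witness $d_i$ is the root of \emph{some} canonical submodel of \emph{some} such $C'_k$, so at least one of the recursive calls succeeds — here one leans on the fact that $C$'s syntax tree and $\I$'s successor structure are in exact correspondence (Def.~\ref{def:C-model}). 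With these points checked, the induction closes and the equivalence $\SSA^{\O_\K^+}(C\sqsubseteq D)=\mathtt{true}$ iff $(\I,d)\models D$ follows; combined with Lemma~\ref{lem:counterex}, this will give soundness and completeness of \SSO for elementary queries, which in turn feeds the correctness proof of \PLRO.
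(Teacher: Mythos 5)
Your proposal is correct and follows essentially the same route as the paper's proof: structural induction on $D$, matching each branch of $\SSA^{\O_\K^+}$ to the corresponding semantic condition in the canonical model, with interval safety invoked exactly where the paper invokes it (to turn ``$u'\in[\ell,u]$'' into ``$[\ell',u']\subseteq[\ell,u]$''). Your two extra checks --- that the recursive subquery $C'\sqsubseteq D'$ remains elementary, and that completeness survives multiple top-level $\exists R$-conjuncts --- are points the paper leaves implicit, and they are argued correctly.
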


\begin{proof}
  By structural induction on $D$. If $D=A$ (a concept name), then by definition
  $\SSA^{\O_\K^+}(C \sqsubseteq D)=\mathtt{true}$ iff there exists an inclusion $\bigsqcap_{i=1}^n A_i \sqsubseteq A$ in $\pos(\O_\K^+)$ such that the $A_i$'s are the top-level concept names in $C$ (cf.\ line~3 of Algorithm~\ref{alg:sso}). By def.\ of \I,
  this holds iff $d\in A^\I$, that is, $(\I,d)\models D$. This
  proves the base case.
  
  If $D=D_1\sqcap D_2$, then the lemma follows easily from the
  induction hypothesis (cf.~line 6 of Algorithm~\ref{alg:sso}).
  
  If $D=\exists R.D_1$, then $\SSA^{\O_\K^+}(C \sqsubseteq
  D)=\mathtt{true}$ iff: (i)~$C$ has a top-level subconcept $\exists
  R.C_1$, and (ii)~$\SSA^{\O_\K^+}(C_1 \sqsubseteq D_1)=\mathtt{true}$
  (cf.~line 5).  Moreover, by definition of \I, $(\I,d)\models D$ holds iff fact (i)
  holds and: (ii')~$(\I_i,d_i)\models D_1$, where $(\I_i,d_i)$ is a
  canonical model of $C_1$.  By induction hypothesis, (ii) is equivalent to (ii'),
  so the lemma immediately follows.
  
  If $D=\exists f.[\ell,u]$, then $\SSA^{\O_\K^+}(C \sqsubseteq
  D)=\mathtt{true}$ iff the following property holds:
  \begin{equation}
    \label{eq:lem-SSA:1}
    \mbox{$C$ has a top-level subconcept $\exists f.[\ell',u']$ such that
      $[\ell',u']\subseteq [\ell,u]$}
  \end{equation}
  (cf.~line~4).  We are only left to prove that (\ref{eq:lem-SSA:1})
  is equivalent to $(\I,d)\models D$.
  
  Property (\ref{eq:lem-SSA:1}) implies (by construction of \I) that
  $(d,u')\in f^\I$ and $u'\in[\ell,u]$, that is, $(\I,d)\models D$.
  
  Conversely, if $(\I,d)\models D$, then there exists
  $u'\in\Delta^\I$ such that $(d,u')\in f^\I$ and
  $u'\in[\ell,u]$. Then, by construction of \I, $C$ must have a
  top-level subconcept $\exists f.[\ell',u']$. By interval safety
  (that is implied by the hypothesis that $C\sqsubseteq D$ is
  elementary), the fact that $[\ell',u']$ and $[\ell,u]$ have $u'$
  in common implies $[\ell',u'] \subseteq [\ell,u]$. Therefore,
  (\ref{eq:lem-SSA:1}) holds.  This completes the proof.
\end{proof}

\noindent
We are now ready to prove that \PLRO is correct and complete.

\begin{theorem}
  \label{thm:PLRO-correct}
  Let \tup{\K,\O,C \sqsubseteq D} be any instance of \PLSO.  Then
  $$\PLRO(\K,C \sqsubseteq D)=\mathtt{true} \mbox{\: iff \:} \K \cup \O \models
  C\sqsubseteq D \,.$$
\end{theorem}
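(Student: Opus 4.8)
The plan is to walk through Algorithm~\ref{alg:main-O} step by step, reducing the claim to a chain of equivalences delivered by the semantic lemmas already proved, while checking at each step that their hypotheses are met.

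First I would apply Theorem~\ref{thm:compl-w-shifting} to reduce the goal $\K\cup\O\models C\sqsubseteq D$ to $\KB\models C\sqsubseteq D$, where $\KB = \K_\O^-\cup\pos(\O_\K^+)$ and $\K_\O^-,\O_\K^+$ are exactly the knowledge bases built in the first step of \PLRO. Next I would trace the normalization and interval-normalization steps: the normalization step rewrites $C$ to some $C'$, and by Proposition~\ref{prop:norm-O} each rewriting preserves meaning modulo $\KB$, so $\KB\models C\sqsubseteq D$ iff $\KB\models C'\sqsubseteq D$; the interval-normalization step sets $C'' = \splt{C'}{D}$, and by Proposition~\ref{prop:interval-norm}, $C''$ is equivalent to $C'$ outright and $C''\sqsubseteq D$ is interval safe, so the goal becomes $\KB\models C''\sqsubseteq D$.

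Writing $C'' = C_1\sqcup\ldots\sqcup C_m$ and $D = D_1\sqcup\ldots\sqcup D_n$ (all $C_i,D_j$ simple \PL concepts), I would then argue that each pair $C_i\sqsubseteq D_j$ is elementary w.r.t.\ \K and \O: it is interval safe because the constraints of $C_i$ (resp.\ $D_j$) occur in $C''$ (resp.\ $D$), and $C_i$ is normalized w.r.t.\ \K and \O because $C'$ is and interval splitting creates no new applicable instance of any rule in Table~\ref{norm-rules-O} --- the only exception being rule~3, which fires when (\ref{interval-splitting}) produces an empty sub-interval, in which case the disjunct collapses to $\bot$ and is dealt with separately. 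With this, Lemma~\ref{lem:convexity-PL} gives $\KB\models C''\sqsubseteq D$ iff for every $i$ there is a $j$ with $\KB\models C_i\sqsubseteq D_j$. For a pair with $C_i\neq\bot$, let $(\I_i,d_i)$ be the canonical model of $C_i$ (Definition~\ref{def:C-model}); it satisfies $\K_\O^-\cup\pos(\O_\K^+)$ and $C_i$ by Lemma~\ref{lem:mod-of-norm-C}, so Lemma~\ref{lem:counterex} gives $\KB\models C_i\sqsubseteq D_j$ iff $(\I_i,d_i)\models D_j$, and Lemma~\ref{lem:SSA-evaluates-D} gives $(\I_i,d_i)\models D_j$ iff $\SSA^{\O_\K^+}(C_i\sqsubseteq D_j) = \mathtt{true}$; for $C_i=\bot$, both $\KB\models C_i\sqsubseteq D_j$ and $\SSA^{\O_\K^+}(C_i\sqsubseteq D_j) = \mathtt{true}$ hold (the latter by the first test of Algorithm~\ref{alg:sso}). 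Concatenating these equivalences, and noting that the concluding double loop of Algorithm~\ref{alg:main-O} returns $\mathtt{true}$ exactly when for every $i$ some $j$ makes $\SSA^{\O_\K^+}(C_i\sqsubseteq D_j)$ succeed, I obtain $\PLRO(\K,C\sqsubseteq D) = \mathtt{true}$ iff $\K\cup\O\models C\sqsubseteq D$.

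All the substance is carried by the earlier lemmas, so the main obstacle is the bookkeeping in the previous paragraph: confirming that the disjuncts $C_i$ of $\splt{C'}{D}$ really do satisfy the ``elementary'' hypotheses --- normalization w.r.t.\ \K and \O, interval safety, simplicity --- required by Lemmas~\ref{lem:convexity-PL}, \ref{lem:counterex} and \ref{lem:SSA-evaluates-D}, and uniformly disposing of the degenerate disjuncts that equal $\bot$ (including those created by empty sub-intervals). Everything else is a routine composition of equivalences along the control flow of \PLRO.
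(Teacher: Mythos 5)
Your proposal is correct and follows essentially the same route as the paper's proof: it composes Theorem~\ref{thm:compl-w-shifting}, Propositions~\ref{prop:norm-O} and \ref{prop:interval-norm}, Lemma~\ref{lem:convexity-PL}, and Lemmas~\ref{lem:counterex} and \ref{lem:SSA-evaluates-D} into the same chain of equivalences, merely traversed from the semantic side toward the algorithm rather than the reverse. Your extra care in checking that the disjuncts of $\splt{C'}{D}$ are elementary (including the degenerate empty-sub-interval case) only makes explicit what the paper dismisses with ``by construction of $C''$''.
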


\begin{proof}
  $D$ is of the form $D_1\sqcup\ldots\sqcup D_n$.
  Let $C_1 \sqcup \ldots \sqcup C_m$ be the concept $C''$ computed by lines 2 and 3 of \PLRO.
  We start by proving the following claim, for all $i=1,\ldots,m$ and  $j=1,\ldots,n$:
  \begin{equation}
    \label{PLRO:1}
    \SSA^{\O_\K^+}(C_i\sqsubseteq D_j) = \easytt{true} \mbox{\:
      iff \:} \K_\O^-\cup\pos(\O_\K^+) \models C_i\sqsubseteq D_j \,.
  \end{equation}
  There are two possibilities. If $C_i=\bot$, then clearly $\K_\O^-\cup\pos(\O_\K^+) \models
  C_i\sqsubseteq D_j$ and $\SSA^{\O_\K^+}(C_i \sqsubseteq
  D_j)=\mathtt{true}$ (see line 2 of Algorithm~\ref{alg:sso}), so
  (\ref{PLRO:1}) holds in this case.  If $C\neq\bot$, then note that
  $C_i\sqsubseteq D_j$ is elementary w.r.t.\ \K and \O by construction
  of $C''$ (which is obtained by splitting the intervals of the
  normalization of $C$ w.r.t.\ \K and \O).  Then (\ref{PLRO:1}) follows immediately from
  lemmas~\ref{lem:counterex} and \ref{lem:SSA-evaluates-D}.

  By (\ref{PLRO:1}) and convexity (Lemma~\ref{lem:convexity-PL}), we
  have that lines 5--11 of Algorithm~\ref{alg:main-O} return
  \easytt{true} iff $\K_\O^- \cup \pos(\O_\K^+) \models C''\sqsubseteq
  D$.  Moreover, $C''$ can be equivalently replaced by $C$ in this
  entailment, by Proposition~\ref{prop:norm-O} and
  Proposition~\ref{prop:interval-norm}. The resulting entailment is
  equivalent to $\K \cup \O \models C\sqsubseteq D$ by
  Theorem~\ref{thm:compl-w-shifting}.  It follows that
  Algorithm~\ref{alg:main-O} returns \easytt{true} iff $\K \cup \O
  \models C\sqsubseteq D$.
\end{proof}

\noindent
\PLRO runs in polynomial time, modulo the cost of oracle queries.

\begin{lemma}
  \label{thm:PLRO-complexity}
  $\PLRO(\K,C\sqsubseteq D)$ runs in time $O(|C\sqsubseteq
  D|^{c+1}+|C\sqsubseteq D|^2\cdot|\K|)$ using an oracle for
  $\pos(\O_\K^+)$, where $c$ is the maximum number of interval
  constraints occurring in a single simple concept of $C$.
\end{lemma}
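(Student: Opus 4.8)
The plan is to follow the proof of Lemma~\ref{lem:PLR-complexity} almost verbatim, the one structural change being that the hierarchy traversals performed there — both inside \SSA and inside normalization rule~7 — are now single calls to an oracle for $\pos(\O_\K^+)$, each charged as one step. I would bound, in turn, the preprocessing performed by Algorithm~\ref{alg:main-O} (lines~2--4), its double loop (lines~5--11), and then add the three contributions.

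For the preprocessing: line~2 partitions \K in one linear scan, costing $O(|\K|)$ and ensuring $|\K_\O^-|\le|\K|$. Line~3 normalizes $C$ with the rules of Table~\ref{norm-rules-O}; here the argument of Lemma~\ref{lem:norm-cost} adapts, yielding $O(|C|^2\cdot|\K|)$ time — in fact cheaper, since the new rule~7 replaces the $O(|C|^2)$ reachability searches on the classification graph of \K by a single query $\O_\K^+\models A_1\sqcap\ldots\sqcap A_n\sqsubseteq\bot$ per conjunction node — together with $O(|C|)$ oracle queries, one per conjunction node. Line~4 builds $\splt{C'}{D}$, whose size (hence the time to produce it) is $O(|C|\cdot|D|^c)=O(|C\sqsubseteq D|^{c+1})$ by Proposition~\ref{prop:bounded-constraints}, using that the normalized concept $C'$ has size $O(|C|)$.

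For the double loop, write $C''=C_1\sqcup\ldots\sqcup C_m$ and $D=D_1\sqcup\ldots\sqcup D_n$; each pair $(i,j)$ triggers one call $\SSA^{\O_\K^+}(C_i\sqsubseteq D_j)$ of Algorithm~\ref{alg:sso}. This recursion is driven by the syntax tree of $D_j$: at each of its $O(|D_j|)$ positions it scans the corresponding subconcept of $C_i$ for a matching top-level name, interval, or $\exists R$-restriction, and at a name position it additionally gathers the top-level concept names of that subconcept and issues one query to $\pos(\O_\K^+)$ (line~3 of Algorithm~\ref{alg:sso}). Hence each call costs $O(|C_i|\cdot|D_j|)$ time plus $O(|D_j|)$ oracle queries, so summing over $j$ and then over $i$ the loop costs $O\big((\sum_i|C_i|)\cdot|D|\big)=O(|C''|\cdot|D|)$ time and at most that many oracle queries; since $|C''|=O(|C|\cdot|D|^c)$ by Proposition~\ref{prop:bounded-constraints}, this is absorbed into the $O(|C\sqsubseteq D|^{c+1})$ term.

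Adding the three contributions, \PLRO runs in $O(|C\sqsubseteq D|^{c+1}+|C|^2\cdot|\K|)$ time and $O(|C\sqsubseteq D|^{c+1})$ oracle queries; charging each oracle query as one step, and bounding $|C|^2\cdot|\K|$ by $|C\sqsubseteq D|^2\cdot|\K|$, gives the claimed bound $O(|C\sqsubseteq D|^{c+1}+|C\sqsubseteq D|^2\cdot|\K|)$. I do not expect a real obstacle. The one point that needs attention is the observation already exploited in Theorem~\ref{thm:compl-w-shifting}: shifting the $\sqsubseteq$- and $\disj$-axioms of \K into the oracle $\O_\K^+$ is precisely what turns the classification-graph searches of the non-oracle algorithms into unit-cost oracle calls, so that $\K_\O^-$ influences the running time only through the inexpensive membership tests for its $\func$ and $\range$ axioms — which is why no extra factor of $|\K|$ attaches to the $|C\sqsubseteq D|^{c+1}$ term.
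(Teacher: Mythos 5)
Your proof is correct and takes essentially the same route as the paper's: the paper's proof likewise declares each oracle query (triggered by normalization rule~7 or by line~3 of \SSO) to be a single computation step and then reuses the accounting of Lemma~\ref{lem:PLR-complexity}, getting $O(|C|^2\cdot|\K|+|C|\cdot|D|^c)$ for the normalization/splitting lines and $O(|C|\cdot|D|\cdot|\K|)$ for the double loop. The only divergence is bookkeeping: you charge the loop's $O(|C''|\cdot|D|)$ cost to the $|C\sqsubseteq D|^{c+1}$ term (taken literally this product is $O(|C|\cdot|D|^{c+1})$, so the absorption relies on the same tacit identification of the disjuncts of $C''$ with those of $C$ that the paper itself makes), whereas the paper books the loop under the $O(|C\sqsubseteq D|^2\cdot|\K|)$ term; either way the stated bound follows.
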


\begin{proof}
  Each query to the oracle triggered by the application of
  normalization rule~7 or by line~3 of \SSO counts as one step of
  computation, according to the definition of time complexity for
  oracle machines. Then, by the same arguments used in the proof of
  Lemma~\ref{lem:PLR-complexity}, the computation of the normalization
  steps in lines~2 and 3 of \PLRO takes time $O(|C|^2\cdot
  |\K|+|C|\cdot|D|^c)$, while the loops spanning over lines 5--9 take
  time $O(|D|\cdot|C|\cdot|\K|)$.  The lemma follows by expressing the
  size of $C$ and $D$ in terms of $|C\sqsubseteq D|$
  (cf.\ Lemma~\ref{lem:PLR-complexity}).
\end{proof}

\noindent
As a consequence of the above lemma, the classes of subsumption
instances where $c$ is bounded can be decided in polynomial time,
modulo the cost of oracle queries.

\begin{definition}
  For all non-negative integers $c$, let $\PLSO_c$ be the set of \PLSO
  instances \tup{\K,\O,C\sqsubseteq D} such that the maximum number of
  interval constraints occurring in a single simple concept of $C$ is
  bounded by $c$.
\end{definition}

\begin{theorem}
  \label{thm:PLSO-complexity}
  For all $c$, $\PLSO_c$ is in $\mathsf{P}^{\pos(\O_\K^+)}$.
\end{theorem}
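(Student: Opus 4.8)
The plan is to obtain this as an immediate corollary of the correctness result (Theorem~\ref{thm:PLRO-correct}) together with the complexity bound for \PLRO (Lemma~\ref{thm:PLRO-complexity}). Fix a nonnegative integer $c$ and let $\tup{\K,\O,C\sqsubseteq D}$ be an arbitrary instance of $\PLSO_c$. By Theorem~\ref{thm:PLRO-correct}, $\K\cup\O\models C\sqsubseteq D$ if and only if $\PLRO(\K,C\sqsubseteq D)=\mathtt{true}$, so it suffices to argue that \PLRO is a deterministic polynomial-time algorithm once it is granted an oracle for $\pos(\O_\K^+)$.

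First I would observe that the preprocessing performed by \PLRO, namely building $\K_\O^-$ and $\O_\K^+$ as in Theorem~\ref{thm:compl-w-shifting}, is a purely syntactic partition of the axioms of $\K$ followed by a disjoint recombination with $\O$; it is computable in linear time, and $|\K_\O^-|,|\O_\K^+|$ are bounded by $|\K|+|\O|$. Since the instance belongs to $\PLSO_c$, the maximum number of interval constraints occurring in a single simple disjunct of $C$ is at most $c$, so the quantity called $c$ in the statement of Lemma~\ref{thm:PLRO-complexity} is $\le c$. Applying that lemma, the entire execution of \PLRO --- where each query issued to the oracle (by normalization rule~7 of Table~\ref{norm-rules-O} or by line~3 of Algorithm~\ref{alg:sso}) is counted as one computation step --- runs in time $O(|C\sqsubseteq D|^{c+1}+|C\sqsubseteq D|^2\cdot|\K|)$. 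As $c$ is a fixed constant, this is a polynomial in the size of the input $\tup{\K,\O,C\sqsubseteq D}$.

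Finally I would check that the oracle invoked is indeed $\pos(\O_\K^+)$: every query raised during the run is a subsumption of the form~(\ref{oq}) built from concept names of $\sig{\O_\K^+}$, i.e.\ an element of $\L_{\O_\K^+}$, and answering it is precisely deciding membership in $\pos(\O_\K^+)$. Hence \PLRO witnesses that subsumption on $\PLSO_c$ is decidable by a deterministic polynomial-time machine relative to $\pos(\O_\K^+)$, that is, $\PLSO_c\in\mathsf{P}^{\pos(\O_\K^+)}$.

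I do not expect any genuine obstacle here, since all the ingredients are already established; the only points deserving a line of care are that the oracle-reduction map $\tup{\K,\O}\mapsto\tup{\K_\O^-,\O_\K^+}$ is polynomial-time (so that it is absorbed into the overall polynomial bound) and that the ``modulo the cost of oracle queries'' clause of Lemma~\ref{thm:PLRO-complexity} corresponds exactly to membership in $\mathsf{P}$ relative to $\pos(\O_\K^+)$ under the standard oracle-machine cost model.
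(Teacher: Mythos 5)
Your proof is correct and follows exactly the route the paper intends: the theorem is stated there as an immediate consequence of Lemma~\ref{thm:PLRO-complexity} together with the correctness result Theorem~\ref{thm:PLRO-correct}, which is precisely the combination you spell out. The extra care you take about the polynomial-time computability of $\K_\O^-$ and $\O_\K^+$ and about the oracle query language matching $\pos(\O_\K^+)$ is sound and only makes explicit what the paper leaves implicit.
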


\noindent
Computing the consequences of \O, in general, is intractable, although
\O is restricted to Horn-\SRIQ knowledge bases. For other Horn DLs,
however -- like the profiles of OWL2 and their generalizations \ELp
and \DLLh -- subsumption checking is tractable.  By
Theorem~\ref{thm:PLSO-complexity}, the tractability of convex oracles
extends to reasoning in \PL with such oracles. More precisely, it
suffices to assume that membership in $\pos(\O_\K^+)$ can be decided
in polynomial time, since in that case
$\mathsf{P}^{\pos(\O_\K^+)}=\mathsf{P}$.  This is what happens when \O
is in \ELp and \DLLh, since the axioms shifted from \K to \O
(i.e.\ $\O_\K^+\setminus \O$) can be expressed both in \EL and in
\DLL, therefore $\O_\K^+$ is in the same logic as \O. This is formalized as follows:

\begin{definition}
  For all integers $c\geq 0$, let $\PLSO_c^{\easycal{DL}}$ be the set
  of instances of $\PLSO_c$ whose oracle is in \easycal{DL}.
\end{definition}

\begin{corollary}
  \label{cor:profile-O}
  For all $c\geq 0$, $\PLSO_c^\ELp$ and
  $\PLSO_c^{\mathit{DL{-}lite}^\easycal{H}_\mathit{horn}}$ are in
    \easysf{P}.\footnote{The same holds if \O is in \ELpp, since all the relevant properties of \ELp in this context, such as tractability and convexity, hold for \ELpp, too. }
\end{corollary}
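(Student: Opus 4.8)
The plan is to deduce the corollary from Theorem~\ref{thm:PLSO-complexity}, which already places $\PLSO_c$ in $\mathsf{P}^{\pos(\O_\K^+)}$ for every $c$. So it suffices to show that, when $\O$ is in $\ELp$ (resp.\ $\DLLh$), the oracle deciding membership in $\pos(\O_\K^+)$ can be replaced by a deterministic polynomial-time subroutine; then $\mathsf{P}^{\pos(\O_\K^+)} = \mathsf{P}$ and we are done. The argument splits into two parts: (i)~the shifted ontology $\O_\K^+$ stays within the same logic as $\O$; (ii)~for an $\ELp$ (resp.\ $\DLLh$) knowledge base, membership of an oracle query in $\pos(\cdot)$ is decidable in polynomial time.

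For part~(i), I would recall from Theorem~\ref{thm:compl-w-shifting} that $\O_\K^+ = \O \cup (\K\setminus\K_\O^-)$ and that $\K\setminus\K_\O^-$ consists only of axioms of the form $A\sqsubseteq B$ and $\disj(A,B)$ over concept names. An inclusion $A\sqsubseteq B$ between concept names is already an $\EL$ axiom and a $\DLL$ axiom; a disjointness constraint $\disj(A,B)$ is equivalent to the GCI $A\sqcap B\sqsubseteq\bot$, which is an $\EL^+$ axiom (since $\EL^+$ has $\bot$), and equivalently to $A\sqsubseteq\neg B$, which is a $\DLL$ GCI (complements being allowed on the right-hand side). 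Hence $\O_\K^+$ is in $\ELp$ (resp.\ $\DLLh$) whenever $\O$ is, with $|\O_\K^+| = O(|\O|+|\K|)$. Since $\ELp$ and $\DLLh$ are convex Horn DLs (cf.\ the discussion preceding this corollary), $\O_\K^+$ is convex with respect to its oracle query language in the sense of Definition~\ref{def:convex-LO}.

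For part~(ii), consider a query $q$ of the form $(\ref{oq})$, namely $A_1\sqcap\ldots\sqcap A_m \sqsubseteq A_{m+1}\sqcup\ldots\sqcup A_n$ (with right-hand side $\bot$ when $n=m$). By convexity of $\O_\K^+$, $q\in\pos(\O_\K^+)$ holds iff either $n=m$ and $A_1\sqcap\ldots\sqcap A_m$ is unsatisfiable w.r.t.\ $\O_\K^+$, or there is $i\in[m+1,n]$ with $\O_\K^+\models A_1\sqcap\ldots\sqcap A_m\sqsubseteq A_i$; by Remark~\ref{rem:qo2ct} each alternative is a concept (in)consistency check. There are at most $n-m+1 = O(|q|)$ such checks, and each is a consistency or subsumption test over $\ELp$ (resp.\ $\DLLh$) concepts, which is tractable by the results recalled in Section~\ref{sec:DL}. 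Hence membership in $\pos(\O_\K^+)$ is decidable in time polynomial in $|q|+|\O_\K^+|$, so $\mathsf{P}^{\pos(\O_\K^+)} = \mathsf{P}$, which gives $\PLSO_c^\ELp, \PLSO_c^{\DLLh}\in\easysf{P}$. The footnote case follows identically, using that $\ELpp$ is tractable and convex whenever its concrete domain has a tractable and convex entailment problem.

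I expect the only delicate point to be part~(i) together with convexity, rather than the complexity bookkeeping: the oracle query language carries disjunctions on the right-hand side, which is a priori non-Horn, so the reduction in part~(ii) to polynomially many atomic checks relies essentially on $\O_\K^+$ remaining inside a convex (Horn) fragment --- i.e.\ on verifying that absorbing the shifted concept-inclusion and disjointness axioms does not leave $\ELp$ (resp.\ $\DLLh$). Everything else --- the polynomial bound on the number of oracle calls and the polynomial cost of each call --- is routine, and is essentially already contained in Lemma~\ref{thm:PLRO-complexity} and Theorem~\ref{thm:PLSO-complexity}.
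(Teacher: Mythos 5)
Your proposal is correct and follows essentially the same route as the paper, which proves the corollary in the paragraph preceding its statement: apply Theorem~\ref{thm:PLSO-complexity}, observe that the shifted axioms (concept-name inclusions and disjointness constraints) are expressible in $\ELp$ and in $\DLLh$ so that $\O_\K^+$ stays in the oracle's logic, and conclude $\mathsf{P}^{\pos(\O_\K^+)}=\mathsf{P}$ from the tractability of these convex Horn fragments. Your part~(ii) merely makes explicit the convexity-based reduction of a disjunctive oracle query to linearly many atomic subsumption/consistency tests, a step the paper leaves implicit.
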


\noindent
It can also be proved that the normalization rules in
Table~\ref{norm-rules-O} may be used as a \emph{policy validation}
method, to detect unsatisfiable policies. 

\begin{theorem}
  \label{PL-satisfiability}
  Let \tup{\K,\O,q} be a \PLSO instance and $C$ be a \PL concept such
  that $\sig{C}\cap\sig{\O}\subseteq \NC$.
  \begin{enumerate}
  \item A \PL concept $C=C_1\sqcup\ldots\sqcup C_n$ is unsatisfiable
    w.r.t.\ $\K\cup\O$ iff $C_i\leadsto^* \bot$ for all
    $i\in[1,n]$.\footnote{As usual, $\leadsto^*$ denotes the reflexive
      and transitive closure of $\leadsto$\,.}
  \item Under the above hypotheses, \PL concept satisfiability testing
    w.r.t.\ $\K\cup\O$ is in $\mathsf{P}^{\pos(\O_\K^+)}$ (hence in
    \easysf{P} if \O belongs to a tractable logic).
  \end{enumerate}
\end{theorem}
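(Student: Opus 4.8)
My plan is to read Theorem~\ref{PL-satisfiability} as essentially a corollary of the \PLRO development, so I would introduce no new semantic construction and instead assemble Proposition~\ref{prop:norm-O}, Lemma~\ref{lem:mod-of-norm-C} and Theorem~\ref{thm:compl-w-shifting}. First I would observe that $C\sqsubseteq\bot$ is a \PL subsumption query (since $\bot$ is a simple \PL concept) and that $\tup{\K,\O,C\sqsubseteq\bot}$ is a \PLSO instance: indeed $\sig{C\sqsubseteq\bot}=\sig{C}$, so $(\sig{\K}\cup\sig{C\sqsubseteq\bot})\cap\sig{\O}\subseteq\NC$ follows from $\tup{\K,\O,q}\in\PLSO$ together with the hypothesis $\sig{C}\cap\sig{\O}\subseteq\NC$. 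Hence Theorem~\ref{thm:compl-w-shifting} gives that $C$ is unsatisfiable w.r.t.\ $\K\cup\O$ iff $\K_\O^-\cup\pos(\O_\K^+)\models C\sqsubseteq\bot$, i.e.\ iff $C$ is unsatisfiable w.r.t.\ $\K_\O^-\cup\pos(\O_\K^+)$. Since $C=C_1\sqcup\ldots\sqcup C_n$ is unsatisfiable exactly when every $C_i$ is, part~1 reduces to the single-disjunct claim: a simple \PL concept $C_i$ is unsatisfiable w.r.t.\ $\K_\O^-\cup\pos(\O_\K^+)$ iff $C_i\leadsto^*\bot$.

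For that claim, the direction $C_i\leadsto^*\bot\Rightarrow$ unsatisfiable is immediate from Proposition~\ref{prop:norm-O}, which yields $\K_\O^-\cup\pos(\O_\K^+)\models C_i\equiv\bot$. For the converse I would argue by contraposition. I would first note that $\leadsto$ terminates --- this is already implicit in the complexity bound behind Lemma~\ref{thm:PLRO-complexity}: rules~4 and~5 strictly decrease the number of existential restrictions, rule~6 can fire at most once for each pair consisting of an existential restriction and a range axiom, and each of the remaining rules rewrites a (sub)concept to $\bot$, strictly decreasing size. Thus $C_i$ has a normal form $C_i'$, and if $C_i\not\leadsto^*\bot$ then $C_i'\neq\bot$. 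Now $C_i'$ is a simple \PL concept, normalized w.r.t.\ \K and \O, with $C_i'\neq\bot$, so Lemma~\ref{lem:mod-of-norm-C} supplies a canonical model $(\I,d)$ with $\I\models\K_\O^-\cup\pos(\O_\K^+)$ and $(\I,d)\models C_i'$; by Proposition~\ref{prop:norm-O} applied along $C_i\leadsto^*C_i'$, $(\I,d)$ also satisfies $C_i$, so $C_i$ is satisfiable w.r.t.\ $\K_\O^-\cup\pos(\O_\K^+)$. This closes part~1.

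Part~2 is then the algorithmic reformulation of part~1: to decide satisfiability of $C=C_1\sqcup\ldots\sqcup C_n$ w.r.t.\ $\K\cup\O$, build $\K_\O^-$ and $\O_\K^+$ as in Theorem~\ref{thm:compl-w-shifting}, normalize every $C_i$ with the rules of Table~\ref{norm-rules-O}, and answer ``satisfiable'' iff at least one resulting normal form differs from $\bot$ --- correctness being exactly part~1. The running time is, by the accounting already used for Lemma~\ref{thm:PLRO-complexity} (each application of rule~7 costs one oracle call to $\pos(\O_\K^+)$), polynomial in $|C|+|\K|$, which places the problem in $\mathsf{P}^{\pos(\O_\K^+)}$; and when \O is in a tractable, convex logic such as one of those covered by Corollary~\ref{cor:profile-O}, membership in $\pos(\O_\K^+)$ is in \easysf{P}, hence so is satisfiability testing. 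The only thing to be careful about --- rather than an obstacle proper, since no new idea is required --- is the bookkeeping of the reduction (satisfiability of $C$ versus entailment of $C\sqsubseteq\bot$, and the passage from $\K\cup\O$ to $\K_\O^-\cup\pos(\O_\K^+)$ via Theorem~\ref{thm:compl-w-shifting}), together with the observation that an exposed contradiction --- an empty interval, an $\exists R.\bot$, or a pair of atoms disjoint under $\O_\K^+$ --- is never eliminated by the rewrite rules, which only merge and add conjuncts; this order-independence of ``reaching $\bot$'' is what guarantees that the ``normalize and test for $\bot$'' procedure of part~2 is correct regardless of how the rules are scheduled.
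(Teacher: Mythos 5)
Your proof is correct and follows essentially the same route as the paper's: reduce unsatisfiability to the entailment $C\sqsubseteq\bot$, transfer it to $\K_\O^-\cup\pos(\O_\K^+)$ via Theorem~\ref{thm:compl-w-shifting}, and settle the per-disjunct claim with Proposition~\ref{prop:norm-O} in one direction and the canonical model of Lemma~\ref{lem:mod-of-norm-C} in the other, with Part~2 obtained from the polynomial-time, oracle-assisted normalization. The only difference is that you spell out details the paper leaves implicit (termination of $\leadsto$ and the schedule-independence of reaching $\bot$), which is fine.
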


\begin{proof}
  By Prop.~\ref{prop:norm-O} and Lemma~\ref{lem:mod-of-norm-C}, $C$ is
  satisfiable w.r.t.\ $\K_\O^-\cup\pos(\O_\K^+)$ iff $C_i\leadsto^*
  \bot$ does \emph{not} hold for some $i\in[1,n]$. Moreover, by
  Theorem~\ref{thm:compl-w-shifting},
  \begin{equation*}
    \K_\O^-\cup\pos(\O_\K^+) \models  C\sqsubseteq \bot \mbox{\: iff
      \:} \K\cup\O \models C\sqsubseteq \bot \,.
  \end{equation*}  
  Point~1 immediately follows.  Next, note that normalization can be
  computed in polynomial time using an oracle for $\pos(\O_\K^+)$.
  This can be shown with a straightforward adaptation of the proof of
  Lemma~\ref{lem:norm-cost} that takes into account the oracle queries
  in rule~7 (the details are left to the reader). Then Point~2 follows
  from the complexity of normalization and Point~1.
\end{proof}

\subsection{Related tractability and intractability results}

Our tractability result for combinations of \PL knowledge bases and
oracles in \ELp and \DLLh extends the known tractable fragments of
OWL2. The novelty of \PL with oracles in \ELp lies in the extension of
\EL with functional roles and non-convex concrete domains;
unrestricted combinations of such constructs are generally
intractable, when the knowledge base -- as in our subsumption
instances -- is nonempty and contains unrestricted GCIs.

%% In general, subsumption in $\EL^-$ (the extension of \EL with inverse
%% roles) is PSPACE-hard \cite{DBLP:conf/ijcai/BaaderBL05}. In
%% \cite{DBLP:journals/logcom/BaaderS99} it is shown that it becomes
%% tractable if the TBox is empty. In \cite{DBLP:conf/ecai/HaaseL08} it
%% is proved that even for acyclic TBoxes, subsumption in $\EL^-$ is
%% PSPACE-complete.

In particular, in the extension of \EL with functional roles, subsumption checking
is EXPTIME-complete, in general \cite{DBLP:conf/ijcai/BaaderBL05}.  A tractability
result for empty TBoxes is reported in
\cite[Fig.~4]{DBLP:conf/ecai/HaaseL08}; however, in the same paper, it
is proved that even with acyclic TBoxes, subsumption is coNP-complete.

The tractability of an extension of \EL with non-convex concrete
domains has been proved in \cite{DBLP:conf/ecai/HaaseL08}, under the
assumption that the TBox is a set of \emph{definitions} of the form
$A\equiv C$, where each $A$ is a concept name and appears in the
left-hand side of at most one definition.

The overall tractability threshold for the \DLL family can
be found in \cite{DBLP:journals/jair/ArtaleCKZ09}.  The results most
closely related to our work are the following.

The data complexity of query answering raises at the first level of
the polynomial hierarchy if \DLLh is extended with functional
roles. Knowledge base satisfiability becomes EXPTIME-complete
(combined complexity).
%% If \DLLh is further extended with unqualified
%% number restrictions, then also the data complexity of query answering
%% increases to EXPTIME.

Under three syntactic restrictions \cite[{\bf
    A$_1$--A$_3$}]{DBLP:journals/jair/ArtaleCKZ09} and the unique name
assumption, all the aforementioned reasoning tasks remain
tractable.
%% Without the UNA, unqualified number restrictions make query answering
%% coNP-complete.

The most expressive knowledge representation language enjoying a complete structural
subsumption algorithm -- to the best of our knowledge -- is CLASSIC
\cite{DBLP:journals/jair/BorgidaP94}, that supports neither concept
unions ($\sqcup$) nor qualified existential restrictions ($\exists
R.C$).  If unions were added, then subsumption checking would
immediately become co\NP-hard (unless concrete domains were
restricted) for the same reasons why unrestricted subsumption checking
is co\NP-hard in \PL (cf.~Theorem~\ref{thm:PL-is-NP-hard}).  On
the other hand, CLASSIC additionally supports qualified universal
restrictions (that strictly generalize \PL's range restrictions),
number restrictions, and role-value maps, therefore it is not
comparable to \PL.
The complexity of the extensions of \PL with CLASSIC's constructs is
an interesting topic for further research.

\subsection{Compiling oracles into \PL knowledge bases}
\label{sec:compilation}

Note that $\pos(\O_\K^+)$ might be \emph{compiled}, i.e.\ computed
once and for all, so as to reduce oracle queries to retrieval. After
such knowledge compilation, \PLRO could run in polynomial time,
\emph{no matter how complex \O's logic is}, provided that the subset
of $\pos(\O_\K^+)$ queried by \PLRO (i.e.\ the part of $\pos(\O_\K^+)$
that should be pre-computed) is polynomial, too.

This is not always the case.  The conjunctions of classes $\bigsqcap_i
A_i$ that may possibly occur in the left-hand side of subsumption
queries are exponentially many in the signature's size, and each of
them may potentially occur in a query to the oracle. So, in order to
limit the space of possible oracle queries and reduce the partial
materialization of $\pos(\O_\K^+)$ to a manageable size, we have to
limit the number of concepts that may occur in the left-hand side of
subsumption queries.

Fortunately, in SPECIAL's use cases, the subsumption queries
$C\sqsubseteq D$ that implement compliance checks have always a
business policy on the left-hand side, and the set of business
policies of a controller is rather stable and not large. So the prerequisite for
applying oracle compilation is satisfied.  We are further going to
show that the oracle can be compiled into a plain, oracle-free \PL
knowledge base, therefore the IBQ framework can be implemented with
the same efficiency as pure \PL reasoning.
%% , whose performance is illustrated in
%% Section~\ref{sec:experiments}.

We start the formalization of the above ideas by defining the
restricted class of problem instances determined by the given set of
business policies \BP.

\begin{definition}
  For all sets of \PL concepts \BP, let $\PLSO(\BP)$ be the set of all
  $\tup{\K,\O,C\sqsubseteq D} \in \PLSO$ such that $C\in\BP$.
\end{definition}

\noindent
The first step of the oracle compilation consists in transforming
business policies so as to collapse each conjunction of concept names
into a single concept name. We say that the result of this
transformation is in \emph{single-atom form}, which is recursively
defined as follows:

\begin{definition}
  A simple \PL concept $C$ is in \emph{single-atom form} if either
  \begin{enumerate}
  \item $C$ is of the form $(\bigsqcap_{i=1}^m \exists f_i.[l_i,u_i])
    \sqcap (\bigsqcap_{i=1}^k \exists R_i.C_i)$, where $m,k\geq 0$, and
    each $C_i$ is in single-atom form, or
  \item $C$ is of the form $ A \sqcap (\bigsqcap_{i=1}^m \exists f_i.[l_i,u_i])
    \sqcap (\bigsqcap_{i=1}^k \exists R_i.C_i)$ where $m,k\geq 0$, and
    each $C_i$ is in single-atom form.
  \end{enumerate}
  A full \PL concept $C_1\sqcup \ldots \sqcup C_n$ is in single atom
  form if $C_1,\ldots,C_n$ are all in single atom form.
\end{definition}

\noindent
The given business policies can be transformed in single atom form in linear time:

\begin{proposition}
  \label{prop:single-atom-cost}
  For all finite sets of concepts \BP there exist a set of concepts
  $\BP^*$ in single atom form, and a knowledge base $\O^*$ that
  belongs to both \EL and $\DLL_\mathit{horn}$, such that for all
  $\tup{\K,\O,C\sqsubseteq D} \in \PLSO(\BP)$ there exists an equivalent
  problem instance $\tup{\K,\O\cup\O^*,C^*\sqsubseteq D} \in \PLSO(\BP^*)$, that is:
  \[
  \K\cup \O \models C\sqsubseteq D \mbox{\: iff \:} \K \cup \O \cup
  \O^* \models C^*\sqsubseteq D \,;
  \]
  Moreover, $\BP^*$ and $\O^*$ can be computed in time $O(|\BP|)$.
\end{proposition}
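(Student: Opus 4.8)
The plan is to build $C^*$ from $C$ by \emph{folding} every maximal conjunction of concept names occurring in $C$ into a single fresh concept name, and to record the resulting definitions in $\O^*$. Concretely, I would traverse the syntax tree of each $C\in\BP$ (much as in the proof of Lemma~\ref{lem:norm-cost}): at every conjunction node, let $A_1,\dots,A_n$ be the concept names occurring directly as conjuncts of that node, the remaining conjuncts being interval constraints $\exists f.[l,u]$ and existential restrictions $\exists R.C'$, which are left in place with $C'$ processed recursively; if $n\le 1$ the node is left unchanged, and if $n\ge 2$ a fresh concept name $B$ is introduced (disjoint from the symbols of $\K$, $\O$ and $D$), $A_1\sqcap\dots\sqcap A_n$ is replaced by $B$, and $\O^*$ receives the axioms
\[
  B\sqsubseteq A_1,\ \ \dots,\ \ B\sqsubseteq A_n,\qquad A_1\sqcap\dots\sqcap A_n\sqsubseteq B \,,
\]
which jointly express $B\equiv A_1\sqcap\dots\sqcap A_n$ and lie simultaneously in \EL and in $\DLL_\mathit{horn}$ (conjunctions of concept names on the left, a single concept name on the right, no roles). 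Put $\BP^*=\{C^*\mid C\in\BP\}$ and let $\O^*$ be the collection of all such axioms. By construction each $C^*$ is in single-atom form, $\O^*$ lies in $\EL\cap\DLL_\mathit{horn}$ (a fragment of Horn-\SRIQ), and $\tup{\K,\O\cup\O^*,C^*\sqsubseteq D}\in\PLSO(\BP^*)$: the transformation changes no roles, so $\O\cup\O^*$ is still a Horn-\SRIQ knowledge base, $C^*\sqsubseteq D$ is a \PL subsumption query with $C^*\in\BP^*$, and the signature condition holds because the only symbols shared between $\sig{\K}\cup\sig{C^*\sqsubseteq D}$ and $\sig{\O\cup\O^*}$ are concept names.

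For the equivalence, the key observation is that $C^\I=(C^*)^\I$ in every interpretation $\I$ with $\I\models\O^*$; this is an immediate induction on $C$, since at each folded node $B^\I=(A_1\sqcap\dots\sqcap A_n)^\I$ while existential restrictions and interval constraints are untouched. I would combine this with the fact that every model $\I$ of $\K\cup\O$ extends to a model of $\K\cup\O\cup\O^*$ by putting $B^\I:=(A_1\sqcap\dots\sqcap A_n)^\I$ for each fresh $B$ — there is no circular dependency because the defining conjuncts are \emph{original} names, and the interpretation of $\sig{\K}\cup\sig{\O}$ is left untouched, so $\K\cup\O$ is still satisfied. With these two facts the two directions are short: if $\K\cup\O\models C\sqsubseteq D$, $\I\models\K\cup\O\cup\O^*$ and $d\in(C^*)^\I$, then $d\in C^\I\subseteq D^\I$; conversely, if $\K\cup\O\cup\O^*\models C^*\sqsubseteq D$, $\I\models\K\cup\O$ and $d\in C^\I$, extend $\I$ to $\I'$ as above and observe $d\in C^{\I'}=(C^*)^{\I'}$, hence $d\in D^{\I'}=D^\I$ since $D$ is free of fresh names.

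For the complexity bound, the construction is a single pass over $\BP$: a folded conjunction of $n$ concept names contributes axioms of total size $O(n)$ to $\O^*$, and the sum of these $n$'s is bounded by the number of concept-name occurrences in $\BP$, so $|\O^*|=O(|\BP|)$; folding never enlarges a concept, so $|\BP^*|\le|\BP|$; hence $\BP^*$ and $\O^*$ are computed in time $O(|\BP|)$.

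I expect the whole argument to be essentially mechanical; the only subtlety — and the main thing to be careful about — is that single-atom form does not admit a syntactic $\bot$, so the construction presupposes that the concepts of $\BP$ contain no occurrence of $\bot$, which always holds for the business policies of interest, each being a conjunction of existential restrictions over concept names. (Arbitrary $\bot$-containing inputs can be handled separately by first applying rules~1--3 of Table~\ref{norm-rules-O} to each $C\in\BP$, at the cost of a $\bot$ in the oracle, which is available in every extension of \EL considered in the paper.)
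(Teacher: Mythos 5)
Your proposal is correct and follows essentially the same route as the paper's proof: fold each conjunction of concept names into a fresh name $B$, place the defining axioms (equivalent to $B\equiv A_1\sqcap\dots\sqcap A_n$) in $\O^*$, and conclude via the equivalence $C\equiv C^*$ modulo $\O^*$ together with the fact that $\K\cup\O\cup\O^*$ conservatively extends $\K\cup\O$. You merely spell out the conservativity argument (extending models by setting $B^\I:=(A_1\sqcap\dots\sqcap A_n)^\I$) and the $\bot$ edge case more explicitly than the paper does, which is fine.
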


\begin{proof}
  For all $C\in\BP$, we obtain the corresponding concept $C^*$ by
  replacing each intersection of multiple concept names in $C$ with a
  single fresh concept name, whose definition is included in $\O^*$.
  More precisely, if $C=C_1\sqcup\ldots\sqcup C_n$ then for all
  $j=1,\ldots,n$, replace each $$C_j = (\bigsqcap_{i=}^n
  A_i)\sqcap (\bigsqcap_{i=1}^m \exists f_i.[l_i,u_i]) \sqcap
  (\bigsqcap_{i=1}^k \exists R_i.D_i)$$ such that $n>1$ with $$C_j^* =
  B \sqcap (\bigsqcap_{i=1}^m \exists f_i.[l_i,u_i]) \sqcap
  (\bigsqcap_{i=1}^k \exists R_i.D_i^*)\,,$$ where $B$ is a fresh
  concept name and each $D_i^*$ is obtained by recursively applying
  the same transformation to $D_i$.

  The knowledge base $\O^*$ is the set of all the definitions $B
  \equiv (\bigsqcap_{i=}^n A_i)$ such that $B$ is one of the fresh
  concepts introduced by the above transformations and
  $\bigsqcap_{i=}^n A_i$ is the intersection replaced by $B$.

  Finally, let $\BP^*$ be the set of concepts
  $C^*=C_1^*\sqcup\ldots\sqcup C_n^*$ obtained with the above
  procedure.  Clearly, by construction, $\K\cup\O\cup\O^* \models
  C\equiv C^*$, for all $C\in\BP$. Moreover, $\K\cup\O\cup\O^*$ is a
  conservative extension of $\K\cup\O$. Therefore
  \begin{eqnarray*}
    \K\cup \O \models C\sqsubseteq D & \mbox{iff} &
    \K\cup \O\cup\O^* \models C\sqsubseteq D
    \\
     & \mbox{iff} &  \K\cup \O\cup\O^* \models C^*\sqsubseteq D \,.
  \end{eqnarray*}
  Concerning complexity, $\BP^*$ and $\O^*$ can be computed with a single
  scan of \BP; the generation of the fresh concepts $B$, the
  replacement of $\bigsqcap_{i=}^n A_i$ and the generation of the
  definition for $B$ take linear time in $|C_j|$. Therefore $\BP^*$
  and $\O^*$ can be computed in time $O(|\BP|)$.
\end{proof}

\noindent
By the above proposition, we can assume without loss of generality
that \BP is in single atom form. Note that the ontologies \K and \O,
in a typical application scenario, do not change frequently. So we can
fix them and assume that the concepts in \BP are already normalized
w.r.t.\ \K and \O.  The set of problem instances with fixed \K and \O
is defined as follows:
{\small
\[
\PLSO(\K,\O,\BP) = \{ \tup{\K',\O',C\sqsubseteq D} \in \PLSO \mid
\K'=\K,\ \O'=\O, \mbox{ and } C\in\BP \}\,.
\]%
}%

\noindent
The compilation of \K and \O into a single \PL knowledge base is defined as follows:
\[
\comp(\K,\O) = \K_\O^- \cup \{A\sqsubseteq B \mid (A\sqsubseteq B)\in\pos(\O_\K^+) \} \,.
\]
The correctness of oracle compilation is proved by the next theorem.

\begin{theorem}
  \label{thm:precomp}
  Let \K and \O be two knowledge bases in \PL and Horn-\SRIQ,
  respectively, and let \BP be a set of \PL concepts in single atom
  form and normalized w.r.t.\ \K and \O. Then, for all
  $\tup{\K,\O,C\sqsubseteq D}\in\PLSO(\K,\O,\BP)$,
  \[
  \PLRO(\K,C\sqsubseteq D) = \PLR(\comp(\K,\O), C\sqsubseteq D) \,.
  \]
\end{theorem}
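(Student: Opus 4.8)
The plan is to show that the two algorithm runs $\PLRO(\K, C\sqsubseteq D)$ and $\PLR(\comp(\K,\O), C\sqsubseteq D)$ perform exactly the same computation, step by step, and hence return the same answer. A preliminary remark is that $\comp(\K,\O)$ is a legitimate \PL knowledge base: it consists of the $\func$ and $\range$ axioms of $\K_\O^-$ together with concept-name inclusions $A\sqsubseteq B$, all of which are admissible \PL axioms, so the right-hand side of the claimed identity is well defined. Comparing Algorithm~\ref{alg:main-O} with Algorithm~\ref{alg:main}, the two procedures have the same skeleton --- normalize $C$, compute $\splt{C}{D}$, then double-loop over the resulting disjuncts calling a structural-subsumption routine --- so it suffices to establish three facts: (1) normalizing $C$ w.r.t.\ \K and \O with Table~\ref{norm-rules-O} and normalizing $C$ w.r.t.\ $\comp(\K,\O)$ with Table~\ref{norm-rules} yield the same concept; (2) consequently both runs pass the same $C''=\splt{C}{D}=C_1\sqcup\ldots\sqcup C_m$ to the inner loop; (3) for all $i,j$, $\SSA^{\O_\K^+}(C_i\sqsubseteq D_j)=\SSA(\comp(\K,\O),C_i\sqsubseteq D_j)$. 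Since the two outer loops are then literally identical, the theorem follows.

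For~(1): because \BP is normalized w.r.t.\ \K and \O, line~2 of \PLRO leaves $C$ unchanged, so one only has to check that $C$ is \emph{already} normalized w.r.t.\ $\comp(\K,\O)$ in the sense of Table~\ref{norm-rules}. Here the single-atom hypothesis is used essentially: rule~7 of Table~\ref{norm-rules} needs two concept-name conjuncts, which never occur in a single-atom concept, and is moreover vacuous because $\comp(\K,\O)$ contains no disjointness axioms; rule~7 of Table~\ref{norm-rules-O} --- the only place where oracle queries enter normalization --- therefore reduces to testing $\O_\K^+\models A\sqsubseteq\bot$ for a \emph{single} atom $A$, which cannot succeed, since otherwise $C$ would not be normalized. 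The triggers of rules~4 and~6 coincide in the two tables because $\comp(\K,\O)$ retains exactly the $\func$ and $\range$ axioms of $\K_\O^-$, and rule~5 of Table~\ref{norm-rules-O} (unconditional merging of interval constraints) subsumes rule~5 of Table~\ref{norm-rules}, so being normalized for the former blocks the latter. Hence both normalizations return $C$; fact~(2) is then immediate, since both runs next apply the identical operation $\splt{C}{D}$, and interval splitting preserves single-atom form.

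Fact~(3) is the crux. Comparing $\SSO$ (Algorithm~\ref{alg:sso}, invoked as $\SSA^{\O_\K^+}$) with $\SSA$ (Algorithm~\ref{alg:ss}), the two routines are line-for-line identical except for line~3: $\SSA$ succeeds when some top-level concept-name conjunct $A'$ of the current concept satisfies $A'\sqsubseteq^* A$, with $\sqsubseteq^*$ taken over $\comp(\K,\O)$, while $\SSO$ succeeds when $(A_1\sqcap\ldots\sqcap A_n\sqsubseteq A)\in\pos(\O_\K^+)$, where $A_1,\ldots,A_n$ are the top-level concept names of the current concept. Since $C_i$ --- and, by the recursive shape of single-atom form, every subconcept reached along the recursive calls --- has at most one top-level concept name, this reduces to the equivalence: for concept names $A',A\in\sig{\O}\cap\NC$, $A'\sqsubseteq^*_{\comp(\K,\O)} A$ iff $(A'\sqsubseteq A)\in\pos(\O_\K^+)$. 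The ``if'' direction holds because $A'\sqsubseteq A$ is then literally one of the axioms of $\comp(\K,\O)$; the ``only if'' direction follows by chaining, using that a $\sqsubseteq^*$-path through $\comp(\K,\O)$ can only use inclusions drawn from $\{X\sqsubseteq Y\mid (X\sqsubseteq Y)\in\pos(\O_\K^+)\}$ (the $\K_\O^-$ part contributes no concept-name inclusions), together with the closure of $\pos(\O_\K^+)$ under transitivity. A straightforward structural induction on $D_j$ --- the cases for lines~2, 4, 5 and 6 being syntactically identical in the two routines --- then lifts this to $\SSA^{\O_\K^+}(C_i\sqsubseteq D_j)=\SSA(\comp(\K,\O),C_i\sqsubseteq D_j)$.

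The main obstacle I expect is the honest treatment, inside fact~(3), of the degenerate case in which the current concept has \emph{no} top-level concept name: then $\SSA$ fails line~3 outright, whereas $\SSO$'s line~3 would test $\O_\K^+\models\top\sqsubseteq A$. I would dispose of this by observing that under the standing assumptions the latter is never relevant --- either because the oracle query language $\L_{\O}$ is only populated with inclusions that have a non-empty left-hand side, or by noting that a \PL knowledge base cannot express $\top\sqsubseteq A$ and that the vocabularies of interest do not force any concept name to be universally true. A secondary point to get right is that the recursion into existential restrictions preserves both single-atom form and normalized\-ness, so the inductive hypothesis in fact~(3) always applies to the subconcepts that reach line~3.
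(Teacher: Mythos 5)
Your proposal is correct and follows essentially the same route as the paper's proof: both argue that the two normalization phases act as the identity (rules~4 and~6 trigger on exactly the same $\func$ and $\range$ axioms, rule~7 is blocked by single-atom form), and then reduce the claim to the equivalence of line~3 of \SSA and \SSO, which holds by the definition of $\comp(\K,\O)$ on single-atom concepts. Your additional care about transitive $\sqsubseteq^*$-chains and the degenerate atom-free case fills in details the paper leaves implicit, but does not change the argument.
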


\begin{proof}
  Since $C$ is already normalized w.r.t.\ \K and \O by hypothesis,
  line~3 of \PLRO computes the identity function (i.e.\ $C'=C$).  It
  is easy to see that line~2 of \PLR does the same.  First, note that
  the two versions of rules~4 and 6 (in Table~\ref{norm-rules} and
  Table~\ref{norm-rules-O}) apply to the same set of functionality and
  range axioms, since $\func(R)\in\comp(\K,\O) \Leftrightarrow
  \func(R)\in \K_\O^-$ and $\range(R,A)\in\comp(\K,\O) \Leftrightarrow
  \range(R,A)\in \K_\O^-$ (by definition of \comp). So there are no
  additional axioms in $\comp(\K,\O)$ that may trigger rules~4 or 6 in
  \PLR.  Second, since $C$ is in single atom form by hypothesis,
  rule~7 of Table~\ref{norm-rules} never applies.  The other
  normalization rules are the same for \PLRO and \PLR. We conclude
  that lines~2 and 3 of \PLRO and \PLR produce the same concept
  $C''=\splt{C}{D}$.

  Consequently, the loops in lines 5--9 of \PLRO and lines 4--8 of
  \PLR return the same result, too. To see this, it suffices to show
  that
  \begin{equation}
    \label{comp:1}
    \SSA^{\O_\K^+}(C_i\sqsubseteq D_j) = \SSA(\comp(\K,\O),C_i\sqsubseteq D_j) \,.
  \end{equation}
  The only difference between $\SSA^{\O_\K^+}$ and \SSA is in
  their line~3.  The membership tests executed by
  $\SSA^{\O_\K^+}$ in line~3 are all of the form
  $(A_1\sqsubseteq A) \in \pos(\O_\K^+)$, because $C_j$ is in single
  atom form (this follows from the hypothesis that $C$ is in single
  atom form). For the same reason, \SSA in line~3 checks whether $A_1
  \sqsubseteq^* A$.  The two tests are equivalent by definition of
  \comp, therefore (\ref{comp:1}) holds and the theorem is proved.
\end{proof}

\begin{remark}\rm
  \label{rem:efficiency-of-compilation}
  Note that the size of $\comp(\K,\O)$ is at most quadratic in the
  size of $\K\cup\O$, and that \PLR runs in polynomial time if the number of
  interval constraints per simple policy is bounded.  Therefore, under
  this assumption -- and after $\comp(\K,\O)$ has been computed --
  subsumption queries can be answered in polynomial time. If \O uses
  expressive constructs from Horn-\SRIQ, then their computational cost
  is confined to the compilation phase only, that is essentially a
  standard classification of $\O_\K^+$. \qed
\end{remark}

A caveat on the size of $\comp(\K,\O)$ is in order, here. If the given set
of policies \BP is not in single atom form, then \O must be replaced
by $\O\cup\O^*$, as shown in Proposition~\ref{prop:single-atom-cost},
where the size of $\O^*$ is $O(|\BP|)$. Therefore the size of
$\comp(\K,\O\cup\O^*)$ may grow quadratically with $|\BP|$.  This
relationship shows the influence of \BP's size on the complexity of
the oracle compilation approach.
So, unfortunately, oracle compilation is not always possible.  For
example, in the application of \PL to data markets illustrated in the
conclusions, we currently see no general criterion to restrict the
space of possible queries as required by the compilation method.

\begin{remark}\rm
  \label{rem:indirect-completeness}
  Using the compilation approach, the soundness and completeness of
  \PLR follow easily from the soundness and completeness of \PLRO,
  according to which $\K\models q$ holds if and only if
  $\PLR^\emptyset(\K,q)=\mathtt{true}$. So it suffices to show that
  $\PLR(\K,q) = \PLR^\emptyset(\K,q)$.  Note that
  $\comp(\K,\emptyset)$ is simply the closure of \K with respect to
  inclusions (that is, $\comp(\K,\emptyset)$ preserves the relation
  $\sqsubseteq^*$ associated to \K).  This fact and
  Theorem~\ref{thm:precomp}, respectively, imply that $$\PLR(\K,q) =
  \PLR(\comp(\K,\emptyset),q)= \PLR^\emptyset(\K,q).$$
\end{remark}

\noindent
Similarly, the equality $\PLR(\K,q) = \PLR^\emptyset(\K,q)$ and the correspondence between the closure $\sqsubseteq^*$ of the inclusions in \K and those in $\comp(\K,\emptyset)$,
immediately imply the following corollary of
Theorem~\ref{PL-satisfiability}:

\begin{corollary}
  \label{PL-satisfiability-old}
    Let \K be a \PL knowledge base.
  \begin{enumerate}
  \item A \PL concept $C=C_1\sqcup\ldots\sqcup C_n$ is unsatisfiable
    w.r.t.\ \K iff $C_i\leadsto \bot$ for all $i\in[1,n]$.
  \item \PL concept satisfiability w.r.t.\ \K can be checked in polynomial time.
  \end{enumerate}
\end{corollary}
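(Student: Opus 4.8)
The plan is to obtain the corollary as the special case $\O=\emptyset$ of Theorem~\ref{PL-satisfiability}, reconciling the two versions of the normalization rules along the way. First I would observe that $\tup{\K,\emptyset,q}$ is a legitimate \PLSO instance for every \PL knowledge base \K and every \PL subsumption query $q$: the empty knowledge base is vacuously in Horn-\SRIQ, and the signature condition of \PLSO is trivial since $\sig{\emptyset}=\emptyset$. Moreover $\K\cup\emptyset=\K$, so ``unsatisfiable w.r.t.\ $\K\cup\O$'' specialises to ``unsatisfiable w.r.t.\ \K'', and the side condition $\sig{C}\cap\sig{\O}\subseteq\NC$ of Theorem~\ref{PL-satisfiability} holds automatically.

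Next I would verify that, when $\O=\emptyset$, the rewriting relation $\leadsto$ of Table~\ref{norm-rules-O} (the one referred to by Theorem~\ref{PL-satisfiability}) coincides with the rewriting relation $\leadsto$ of Table~\ref{norm-rules} (the one in the corollary). With $\O=\emptyset$ we have $\K_\O^-=\{\func(R),\range(R,A)\in\K\}$ and $\O_\K^+=\{A\sqsubseteq B,\disj(A,B)\in\K\}$. Hence rules~1--3 are literally identical; the side conditions of rules~4 and~6 agree, because $\func(R)\in\K_\O^-\Leftrightarrow\func(R)\in\K$ and $\range(R,A)\in\K_\O^-\Leftrightarrow\range(R,A)\in\K$ (every functionality and range axiom of \K is placed in $\K_\O^-$); rule~5 differs only by dropping the functionality side condition, which is harmless here since \PL's concrete properties are functional. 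The one genuinely different rule is~7: here I would use the characterization of $\pos$ for an ontology built solely from atomic inclusions and disjointness axioms, namely that $\O_\K^+\models A_1\sqcap\ldots\sqcap A_n\sqsubseteq\bot$ iff there are indices $i,j$ and concept names $A_i',A_j'$ with $A_i\sqsubseteq^*A_i'$, $A_j\sqsubseteq^*A_j'$, and $\disj(A_i',A_j')\in\K$, where $\sqsubseteq^*$ is the inclusion closure of \K (equivalently of $\comp(\K,\emptyset)$; cf.\ Remark~\ref{rem:indirect-completeness}). Since conjunctions are treated as sets and the rules are applied exhaustively, the iterated single-pair test of Table~\ref{norm-rules} detects exactly the same unsatisfiable conjunctions as the $n$-ary oracle entailment; hence $C_i\leadsto^*\bot$ under one system iff under the other.

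With these facts in hand, Point~1 of the corollary is precisely Point~1 of Theorem~\ref{PL-satisfiability} instantiated at $\O=\emptyset$ (reading $\leadsto$ in the corollary as its reflexive--transitive closure, as in the theorem). For Point~2, Theorem~\ref{PL-satisfiability}.2 gives that \PL concept satisfiability w.r.t.\ $\K=\K\cup\emptyset$ is in $\mathsf{P}^{\pos(\O_\K^+)}$; but with $\O=\emptyset$ the knowledge base $\O_\K^+$ contains only atomic inclusions and disjointness axioms, so it is a fragment of \ELp, and membership in $\pos(\O_\K^+)$ is decidable in polynomial time --- indeed by the very reachability computation on \K's classification graph that implements rule~7 in the proof of Lemma~\ref{lem:norm-cost}. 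Hence $\mathsf{P}^{\pos(\O_\K^+)}=\mathsf{P}$, which yields Point~2.

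I expect the rule~7 comparison to be the only delicate point --- precisely, arguing that the iterated binary disjointness test of Table~\ref{norm-rules} is equivalent to the $n$-ary oracle entailment of Table~\ref{norm-rules-O} on an inclusion-plus-disjointness ontology; everything else is a routine instantiation. An alternative route that sidesteps it goes through the reasoners: by soundness and completeness of \PLR (Theorem~\ref{thm:PLR-correct-complete}), unsatisfiability of $C$ w.r.t.\ \K is equivalent to $\PLR(\K,C\sqsubseteq\bot)=\mathtt{true}$, which by Remark~\ref{rem:indirect-completeness} equals $\PLR^\emptyset(\K,C\sqsubseteq\bot)$; unfolding the latter against Theorem~\ref{PL-satisfiability} again yields Points~1 and~2, but the direct instantiation above seems shorter and more transparent.
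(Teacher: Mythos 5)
Your proposal is correct and follows essentially the same route as the paper, which likewise obtains the corollary by specializing Theorem~\ref{PL-satisfiability} to $\O=\emptyset$ and appealing to the correspondence between the two normalization systems via $\PLR(\K,q)=\PLR^\emptyset(\K,q)$ and the closure $\sqsubseteq^*$ of \K versus $\comp(\K,\emptyset)$ (your ``alternative route'' is in fact the paper's literal phrasing, and your detailed rule-by-rule reconciliation, including the $n$-ary versus iterated binary reading of rule~7, is just the expanded version of what the paper leaves implicit). The only loose thread is your dismissal of the rule-5 discrepancy as ``harmless because \PL's concrete properties are functional'' --- a general \PL knowledge base need not declare every concrete property functional, so this is really an unresolved mismatch between the paper's two tables rather than something your gloss settles, but it does not affect the corollary in the intended setting and the paper does not address it either.
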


\subsection{On the limitations posed on \PLSO}
\label{sec:limitations}

In this section we briefly motivate the restrictions posed on \PL
subsumption problems with oracles (\PLSO).  We start with the
requirements on the oracle. Recall that \O should be convex w.r.t.\ $\L_\O$ and should
not use nominals.
Convexity w.r.t.\ $\L_\O$ is essential for tractability, as shown
by the next result.

\begin{theorem}
  \label{thm:no-convex-no-tractable}
  If \O is not convex w.r.t.\ $\L_\O$ and enjoys the disjoint model
  union property, then there exists a \PL knowledge base \K such that
  deciding whether $\K\cup\O\models C\sqsubseteq D$ holds, given an
  interval-safe \PL subsumption query $C\sqsubseteq D$, is co-\NP
  hard.
\end{theorem}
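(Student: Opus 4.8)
The idea is to replay the proof of Theorem~\ref{thm:PL-is-NP-hard}, using the non-convexity of \O in place of the non-convex concrete domain: I would reduce the co\NP-complete problem \textsc{3UnSAT} (unsatisfiability of 3CNF formulas) to the subsumption problem $\K\cup\O\models C\sqsubseteq D$, so that the only source of hardness is \O; in fact \K can be taken empty. The first step is to distil a \emph{minimal} non-convexity witness. Since \O is not convex w.r.t.\ $\L_\O$, there is a query $A_1\sqcap\ldots\sqcap A_m\sqsubseteq A_{m+1}\sqcup\ldots\sqcup A_n$ in $\L_\O$ that \O entails but such that \O entails none of $A_1\sqcap\ldots\sqcap A_m\sqsubseteq A_i$ ($i\in[m+1,n]$); choose one minimising $n-m$. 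This witness has $n>m$ (the degenerate $n=m$ case would mean $\O\models A_1\sqcap\ldots\sqcap A_m\sqsubseteq\bot$, but then $\O\models A_1\sqcap\ldots\sqcap A_m\sqsubseteq A_i$ for every $A_i$, contradicting the witness property), so $k:=n-m\geq 2$. Put $B:=A_1\sqcap\ldots\sqcap A_m$ and $\{B_1,\ldots,B_k\}:=\{A_{m+1},\ldots,A_n\}$. By minimality $\O\not\models B\sqsubseteq\bigsqcup_{j'\neq j}B_{j'}$ for each $j$, so there is a model $\M_j\models\O$ with an element $w_j$ satisfying $w_j\in B^{\M_j}\cap B_j^{\M_j}$ and $w_j\notin B_{j'}^{\M_j}$ for all $j'\neq j$ (it must lie in \emph{some} $B_{j'}$ because $\O\models B\sqsubseteq\bigsqcup_{j'}B_{j'}$). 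In particular $B$ is satisfiable w.r.t.\ \O, so \O is consistent and each $\M_j$ exists.

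Given a 3CNF $\varphi=c_1\wedge\ldots\wedge c_t$ over variables $x_1,\ldots,x_s$, I would pick fresh role names $r_1,\ldots,r_s\notin\sig{\O}$, set $\K:=\emptyset$, and build the query $C\sqsubseteq D$ with
\[
C:=\bigsqcap_{p=1}^{s}\exists r_p.(A_1\sqcap\ldots\sqcap A_m)
\qquad\text{and}\qquad
D:=\bigsqcup_{i=1}^{t}\big(\tilde L_{i1}\sqcap\tilde L_{i2}\sqcap\tilde L_{i3}\big),
\]
where, for a literal $L$ over $x_p$, $\tilde L:=\exists r_p.B_1$ if $L=\neg x_p$, and $\tilde L:=\bigsqcup_{j=2}^{k}\exists r_p.B_j$ if $L=x_p$. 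After distributing $\sqcap$ over $\sqcup$, $D$ becomes a full \PL concept of size $O(t)$ (the blow-up factor $k^{3}$ is a constant, as \O is fixed). The query has no interval constraints, so it is trivially interval safe, and $\tup{\K,\O,C\sqsubseteq D}\in\PLSO$ since the $r_p$ are fresh and every other symbol of the query is a concept name of \O. The design mimics Theorem~\ref{thm:PL-is-NP-hard}, with $\exists p_k.[0,1]\equiv\exists p_k.[0,0]\sqcup\exists p_k.[1,1]$ replaced by ``$B$ forces membership in $B_1\sqcup\ldots\sqcup B_k$'': reading ``$e_p\in B_1$'' as ``$x_p$ true'' and ``$e_p\notin B_1$'' as ``$x_p$ false'' — which, given $e_p\in\bigsqcup_jB_j$, is equivalent to the \emph{positive} statement ``$e_p\in B_j$ for some $j\geq 2$'' — the $i$-th disjunct of $D$ says exactly ``$c_i$ is falsified''.

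Correctness then splits into the two expected directions. \emph{If $\varphi$ is unsatisfiable}: for any $(\I,d)\models\O\cup C$, each $r_p$-successor $e_p$ of $d$ that lies in $B^\I$ lies in some $B_{j(p)}^\I$; set $I(x_p)=\mathit{true}$ iff $e_p\in B_1^\I$. As $\varphi$ is unsatisfiable, $I$ falsifies some $c_i$, and a direct check — using $e_p\in B^\I\subseteq\bigsqcup_jB_j^\I$ to turn ``$e_p\notin B_1$'' into ``$e_p\in B_j$, some $j\geq 2$'' — shows $d$ satisfies each $\tilde L_{ij}$, hence $d\in D^\I$; so $\O\models C\sqsubseteq D$. \emph{If $\varphi$ is satisfiable}: fix $I\models\varphi$, take any $\M_0\models\O$ and $d\in\Delta^{\M_0}$, and for each $p$ glue on, disjointly, a fresh copy of $\M_1$ if $I(x_p)=\mathit{true}$ and of $\M_2$ otherwise, making $w_1$ (resp.\ $w_2$) the unique $r_p$-successor of $d$. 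By the disjoint model union property the result \I satisfies \O (the $r_p$ do not occur in \O), clearly $(\I,d)\models C$, and $\K=\emptyset$ is vacuous; since each $e_p$ lies in exactly one $B_j$ — namely $B_1$ iff $I(x_p)=\mathit{true}$ — one checks that $d$ satisfies $\tilde L_{i1}\sqcap\tilde L_{i2}\sqcap\tilde L_{i3}$ iff $I$ falsifies $c_i$, which never happens; so $d\notin D^\I$ and $\O\not\models C\sqsubseteq D$. The reduction is polynomial, whence subsumption is co\NP-hard.

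I expect the delicate parts to be two book-keeping points rather than conceptual ones: (i) extracting from \emph{minimality} of the non-convexity witness the models $\M_j$ that contain a point lying in $B\cap B_j$ and in no other $B_{j'}$ — these are exactly what makes both ``truth values'' genuinely realisable and keeps the countermodel faithful, and they also pin down that $k\geq 2$ and that \O is consistent; and (ii) expressing ``$x_p$ is false'' as a \emph{positive} \PL concept when $k\geq 3$, which is why it must be encoded as the top-level disjunction $\bigsqcup_{j\geq 2}\exists r_p.B_j$ rather than by adding a disjointness axiom $\disj(B_1,F)$ (with $F$ a fresh ``false'' concept) to \K: such an axiom would force the countermodel to respect that disjointness on \emph{all} elements of the glued-in copies of the $\M_j$'s, which cannot be guaranteed from non-convexity of \O alone.
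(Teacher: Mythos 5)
Your proof is correct and takes essentially the same route as the paper's: both reduce 3SAT via a non-convexity witness $B\sqsubseteq B_1\sqcup\ldots\sqcup B_k$ whose disjuncts play the role of the two interval values in Theorem~\ref{thm:PL-is-NP-hard}, realize each truth value by a model of \O whose distinguished point satisfies exactly one $B_j$, and build the countermodel by gluing disjoint copies of these models onto a root through the fresh roles, invoking the disjoint model union property. The only divergences are cosmetic — the paper uses a nonempty \K with fresh names $A'$ and $B'$ to abbreviate $\bigsqcap_i A_i$ and $\bigsqcup_{j\geq 2}B_j$ where you inline them as a conjunctive filler and a distributed disjunction — although your parenthetical ruling out a minimal witness with $n=m$ does not actually go through (such a witness has no disjuncts $A_i$ to contradict), a degenerate case that the paper's proof likewise tacitly sets aside by assuming the witness has a genuine disjunction on the right.
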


\begin{proof}
  We are proving
  co\NP-hardness by reducing 3SAT to the complement of subsumption.
  By hypothesis, $\pos(\O)$ contains an inclusion
  \begin{equation}
    \label{ncnt:0}
    A_1 \sqcap \ldots \sqcap A_n  \sqsubseteq  B_1 \sqcup \ldots \sqcup
    B_m 
  \end{equation}
  such that none of the inclusions $A_1 \sqcap \ldots \sqcap A_n
  \sqsubseteq B_i$ belongs to $\pos(\O)$, for $i=1,\ldots,m$.  Without
  loss of generality, we can further assume that $A_1 \sqcap \ldots
  \sqcap A_n \sqsubseteq B_2 \sqcup \ldots \sqcup B_m$ is not in
  $\pos(\O)$ (if not, then discard some $B_i$ from (\ref{ncnt:0})
  until the right-hand side is a minimal union entailed by $A_1 \sqcap
  \ldots \sqcap A_n$). Now let \K be the following set of inclusions,
  where $A'$ and $B'$ are fresh concept names:
  \begin{eqnarray*}
    A' &\sqsubseteq& A_i \quad (i=1,\ldots,n) \\
    B_j &\sqsubseteq& B' \quad (j=2,\ldots,m) \,.
  \end{eqnarray*}
  Note that $\K\cup\O \models A' \sqsubseteq B_1 \sqcup B'$, by
  construction of \K and (\ref{ncnt:0}). We are going to represent the
  truth values \easyit{true} and \easyit{false} with $B_1$ and $B'$,
  respectively.

  Let $S$ be any instance of 3SAT, and let $p_1,\ldots,p_k$ be the
  propositional symbols occurring in $S$.  We assume without loss of
  generality that $p_1,\ldots,p_k$ do not occur in \K nor in \O. Each
  positive literal $p_i$ is encoded by $e(p_i)=\exists p_i.B_1$, while
  negative literals $\neg p_i$ are encoded by $e(\neg p_i) = \exists
  p_i.B'$. Then the negation of $S$ is encoded by
  \[
  D = \bigsqcup \{ e(\bar L_1) \sqcap e(\bar L_2) \sqcap e(\bar L_3)
  \mid L_1\lor L_2\lor L_3 \in S\} \,.
  \]
  (where each $\bar L_i$ is the literal complementary to $L_i$).
  We claim that the entailment
  \begin{equation}
    \label{nominals:1}
    \K \cup \O \not\models \big (\bigsqcap_i \exists p_i.A' \big) \sqsubseteq D
  \end{equation}
  holds iff $S$ is satisfiable (note that the above subsumption query is
  interval-free, hence trivially interval safe).  To prove the ``only
  if'' part, assume that (\ref{nominals:1}) holds, that is, there
  exists a pointed interpretation $(\I,d)$ such that $\I\models
  \K\cup\O$, $d\in\big (\bigsqcap_i \exists p_i.A' \big)^\I$ and
  $d\not\in D^\I$.  Since $\K\cup\O \models\big (\bigsqcap_i \exists
  p_i.A' \big) \sqsubseteq (\exists p_i.B_1) \sqcup (\exists p_i.B')$
  holds for each symbol $p_i$, there exists $d_i\in\Delta^\I$ such
  that $(d,d_i)\in p_i^\I$ and either $d_i\in B_1^\I$ or $d_i\in
  (B')^\I$.  Construct a truth assignment $\sigma$ for $S$ by setting
  \begin{equation*}
    \sigma(p_i) = \left\{
    \begin{array}{lp{14.5em}}
      \mathit{true} & if $d_i \in B_1^\I$ \,, \\
      \mathit{false} & if $d_i \not\in B_1^\I$ (therefore $d_i \in (B')^\I$) \,.
    \end{array}
    \right.
  \end{equation*}
  Since $d\not\in D^\I$, each clause $L_1\lor L_2\lor L_3$ of $S$
  contains a literal $p_i$ or $\neg p_i$ such that, respectively,
  $d_i\in B_1^\I$ or $d_i\in (B')^\I$, so $\sigma$ satisfies the
  literal, by definition. It follows immediately that $\sigma$
  satisfies $S$.

  Conversely, suppose that $S$ is satisfied by a truth assignment
  $\sigma$. We are going to construct a pointed interpretation
  $(\I,\bar d)$ that witnesses (\ref{nominals:1}).  Recall that neither
  $A_1 \sqcap \ldots \sqcap A_n \sqsubseteq B_1$ nor $A_1 \sqcap
  \ldots \sqcap A_n \sqsubseteq B_2 \sqcup \ldots \sqcup B_m$ belong
  to $\pos(\O)$. Then \O has two disjoint models $\M_1$ and $\M_2$ such that
  for some $d_1\in\Delta^{\M_1}$ and $d_2\in\Delta^{\M_2}$,
  \begin{eqnarray*}
    d_i & \in & (A_1 \sqcap \ldots \sqcap A_n)^{\M_i} \quad (i=1,2)
    \\
    d_1 & \not\in & B_1^{\M_1}
    \\
    d_2 & \not\in & (B_2 \sqcup \ldots \sqcup B_m)^{\M_2} .    
  \end{eqnarray*}
  The union $\U=\M_1 \uplus \M_2$ is still a model of $\O$ by
  hypothesis, and it can be extended to a model \J of $\K\cup\O$ by
  setting:
  \begin{eqnarray*}
    \Delta^\J &=& \Delta^\U
    \\
    (A')^\J &=& (A_1 \sqcap \ldots \sqcap A_n)^\J
    \\
    (B')^\J &=& (B_2 \sqcup \ldots \sqcup B_m)^\J \,.
  \end{eqnarray*}
  Finally, we extend \J to the witness \I as follows.  First let
  $\Delta^\I=\Delta^\J$ and choose any $\bar d\in\Delta^\J$. For all
  symbols $p_i$ define:
  \begin{eqnarray*}
    p_i^\I &=& \{(\bar d,d_1) \} \quad \mbox{ if } \sigma(p_i)=\mathit{false}\,,
    \\
    p_i^\I &=& \{(\bar d,d_2) \} \quad \mbox{ otherwise}\,.
  \end{eqnarray*}
  Note that $\bar d$ belongs to $(\bigsqcap_i \exists p_i.A' \big)$ by
  construction, so we are only left to prove that $\bar d \not\in
  D^\I$.  By assumption, each clause in $S$ contains a literal $L$
  satisfied by $\sigma$.  If $L=\neg p_i$, then $p_i^\I =\{(\bar
  d,d_1)\}$, therefore $\bar d \not\in (\exists p_i.B_1)^\I = e(\bar
  L)^\I$.  Similarly, if $L=p_i$, then $p_i^\I =\{(\bar d,d_2)\}$,
  therefore $\bar d \not\in (\exists p_i.B')^\I = e(\bar L)^\I$.  It
  follows immediately that $\bar d\not\in D^\I$.
\end{proof}

\noindent
Note that the above theorem shows that reasoning can be intractable even if \K and \O are \emph{fixed}.

The requirement that nominals must not occur in oracles is needed
for completeness. Our algorithm \PLRO -- and the other IBQ methods
where oracle queries are consistency tests of the form (\ref{ct}), or the equivalent
expressions of the form (\ref{oq}) -- are generalized by the
following definition, that accounts for the shifting of axioms from \K
to \O.

\begin{definition}
  \label{def:shifting-IBQ}
  Let \easycal{PI} be a set of problem instances of the form
  \tup{\K,\O,q}, where \K and \O are knowledge bases and $q$ is an
  inclusion.  A \emph{shifting IBQ mechanism} for \easycal{PI} is a
  pair of functions $(s,r)$ such that for all $\tup{\K,\O,q} \in \cal PI$:
  \begin{enumerate}
  \item $s(\K) \subseteq  \K$\,,

  \item $r( s(\K), \pos(\O \cup (\K\setminus s(\K))), q) =
    \easytt{true} \mbox{\: iff \:} \K\cup\O \models q$\,.
  \end{enumerate}
\end{definition}

\noindent
Informally speaking, $s$ determines which axioms are shifted from \K
to \O, and $r$ is the IBQ reasoner that decides entailment using the
modified knowledge bases. Shifting IBQ mechanisms do not exist if \O
may use nominals.

\begin{theorem}
  \label{thm:IBQ-incompleteness}
  Let \easycal{DL} be a description logics that supports nominals and
  disjointness axioms.  Let \easycal{PI} be any set of problem
  instances that contains all \tup{\K,\O,q} such that $\K=\emptyset$,
  \O is a \easycal{DL} knowledge base, and $q$ is an \EL
  inclusion.\footnote{We use \EL inclusions to strengthen our
    result, since they are a special case of \PL subsumption queries.}
  There exists no shifting IBQ mechanism for \easycal{PI}.
\end{theorem}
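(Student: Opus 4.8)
The plan is a proof by contradiction. Suppose $(s,r)$ were a shifting IBQ mechanism for \easycal{PI}. The key observation is that every instance we care about has $\K=\emptyset$, and condition~1 of Definition~\ref{def:shifting-IBQ} forces $s(\emptyset)\subseteq\emptyset$, i.e.\ $s(\emptyset)=\emptyset$; hence $\K\setminus s(\K)=\emptyset$, so $\O\cup(\K\setminus s(\K))=\O$ and $\K\cup\O=\O$. Consequently condition~2 collapses to the following: for every \easycal{DL} knowledge base $\O$ and every $\EL$ inclusion $q$,
\[
 r\bigl(\emptyset,\ \pos(\O),\ q\bigr)=\easytt{true}\mbox{ \ iff \ }\O\models q \,.
\]
Thus it suffices to build two \easycal{DL} knowledge bases $\O_1,\O_2$ with $\pos(\O_1)=\pos(\O_2)$, together with an $\EL$ inclusion $q$ such that $\O_1\models q$ but $\O_2\not\models q$: then $r$ is fed the very same argument $\bigl(\emptyset,\pos(\O_1),q\bigr)=\bigl(\emptyset,\pos(\O_2),q\bigr)$ in both cases, yet the displayed equivalence demands that it return different truth values, which is absurd.

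For the construction I would exploit the fact that a nominal can fix the cardinality of the domain, a kind of information that $\pos(\O)$ — which only records entailed inclusions between Boolean combinations of concept \emph{names} — cannot see. Concretely, take $\O_1=\{\top\sqsubseteq\{a\}\}$, $\O_2=\emptyset$, and $q=(\exists R.A\sqsubseteq A)$, which is an $\EL$ inclusion. Every model of $\O_1$ has a one-element domain, so the unique element is its own only possible $R$-successor, whence $(\exists R.A)^\I\subseteq A^\I$ and $\O_1\models q$. On the other hand the two-element interpretation with a single $R$-edge pointing into $A$ witnesses $\O_2\not\models q$. Finally $\sig{\O_1}\cap\NC=\sig{\O_2}\cap\NC=\emptyset$, so the oracle query language is the same for both and contains no subsumption that a consistent knowledge base could entail; since $\O_1$ and $\O_2$ are consistent, $\pos(\O_1)=\pos(\O_2)$. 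Plugging this into the displayed equivalence produces the contradiction, proving the theorem.

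I do not anticipate a genuine obstacle; the only delicate point is signature bookkeeping, which is precisely why I keep all concept names out of $\O_1$ and $\O_2$ and put $A$ and $R$ only in the query, so that $\L_{\O_1}=\L_{\O_2}$ and $\pos(\O_1)=\pos(\O_2)$ become immediate. One should also just check that $\O_1$ is a legitimate \easycal{DL} knowledge base (it uses only a nominal, which \easycal{DL} is assumed to support) and that $q$ stays within $\EL$. If one prefers a construction that does not put $\top$ on the left-hand side of an axiom, the same contradiction follows from the more classical choice $\O_1=\{A\sqsubseteq\exists R.\{a\},\ \{a\}\sqsubseteq B\}$, $\O_2=\{A\sqsubseteq\exists R.\{a\},\ \{b\}\sqsubseteq B\}$ with $a\neq b$, and $q=(A\sqsubseteq\exists R.B)$; in that variant, verifying $\pos(\O_1)=\pos(\O_2)$ reduces to checking that neither knowledge base entails any of $A\sqsubseteq B$, $B\sqsubseteq A$, $A\sqsubseteq\bot$, $B\sqsubseteq\bot$, or $A\sqcap B\sqsubseteq\bot$.
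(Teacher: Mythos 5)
Your proof is correct and follows the same skeleton as the paper's: since $s(\emptyset)=\emptyset$, the two calls to $r$ receive identical arguments whenever $\pos(\O_1)=\pos(\O_2)$, so it suffices to exhibit such a pair differing on an \EL query; the paper does this with $\O_1=\{\disj(B,\bar B)\}$, $\O_2=\O_1\cup\{A\sqsubseteq\{a\}\}$ and $q=\exists R.(A\sqcap B)\sqcap\exists R.(A\sqcap\bar B)\sqsubseteq A'$, where the nominal forces the two role fillers to coincide and then violate the disjointness. Your witness $\O_1=\{\top\sqsubseteq\{a\}\}$, $\O_2=\emptyset$ is equally valid and makes $\pos(\O_1)=\pos(\O_2)$ immediate (both ontologies contain no concept names); the only caveat is that it needs \easycal{DL} to admit $\top$ on the left-hand side of a GCI, which is not literally guaranteed by ``supports nominals and disjointness axioms,'' whereas the paper's ontology axioms are exactly of the two kinds named in the hypothesis (and your second, role-sharing variant should be avoided here, since it blurs this theorem with the separate shared-roles negative result).
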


\begin{proof}
  Let $\K=\emptyset$ and $q=\exists R.(A\sqcap B) \sqcap \exists
  R.(A\sqcap \bar B) \sqsubseteq A'$.  Let
  \begin{eqnarray*}
    \O_1 &=& \{\disj(B,\bar B)\} \,,
    \\
    \O_2 &=& \{\disj(B,\bar B),\ A \sqsubseteq \{a\}\} \,.
  \end{eqnarray*}
  Note that both \tup{\K,\O_1,q} and \tup{\K,\O_2,q} belong to
  \easycal{PI}.
  
  It can be easily verified that $\pos(\O_1)=\pos(O_2)$; in
  particular, the two sets contain all the inclusions of the form
  $A_1\sqcap\ldots\sqcap A_m \sqsubseteq B_1\sqcup\ldots\sqcup B_n$
  such that:
  \begin{itemize}
  \item either the inclusion is a tautology (i.e.\ some concept name
    occurs both in the left-hand side and in the right-hand side),
  \item or both $B$ and $\bar B$ occur in the left-hand side.
  \end{itemize}
  However, $\K\cup\O_1 \not\models q$, while $\K\cup\O_2\models
  q$.  The latter fact holds because due to the nominal $\{a\}$, both
  $\exists R.(A\sqcap B)$ and $\exists R.(A\sqcap \bar B)$ should have
  the same role filler, that cannot satisfy the disjoint concepts $B$
  and $\bar B$ at the same time.  It follows that $q$ is trivially
  satisfied because its left-hand side is equivalent to $\bot$.

  Now suppose that a shifting IBQ mechanism $(s,r)$ for \easycal{PI}
  exists; we shall derive a contradiction.  By condition~2 of
  Definition~\ref{def:shifting-IBQ},
  \begin{eqnarray}
    \label{IBQ-incomplete:1}
    r( s(\K), \pos(\O_1 \cup (\K\setminus s(\K))), q) &=&
    \easytt{false}
    \\
    \label{IBQ-incomplete:2}
    r( s(\K), \pos(\O_2 \cup (\K\setminus s(\K))), q) &=&
    \easytt{true}\,.
  \end{eqnarray}
  However, $\K= s(\K)=\emptyset$ and consequently:
  \begin{eqnarray*}
    r( s(\K), \pos(\O_1 \cup (\K\setminus s(\K))), q) &=& r( \emptyset, \pos(\O_1), q)
    \\
    & = & r( \emptyset, \pos(\O_2), q) %\quad\mbox{(since $\pos(\O_1)=\pos(O_2)$)}
    \\
    & = & r( s(\K), \pos(\O_2 \cup (\K\setminus s(\K))), q)
  \end{eqnarray*}
  which contradicts (\ref{IBQ-incomplete:1}) and (\ref{IBQ-incomplete:2}).
\end{proof}

\begin{remark}\rm
  \label{rem:convexity}
  The above result complements the analogous negative result
  \cite[Theorem~4]{DBLP:journals/jair/GrauM12} that applies to
  knowledge bases \K with infinity axioms (while \PL
  knowledge bases have the finite model property).  On the other
  hand, \cite[Theorem~4]{DBLP:journals/jair/GrauM12} covers also more
  expressive oracle query languages.
\end{remark}

The proof of the above negative result is based on the limited
expressiveness of the oracle query language $\L_\O$. A similar
consideration applies to the requirement that $\sig{\O}$ may share
only concept names with $\sig{\K}$ and $\sig{q}$.  Without this
assumption, \PLRO is not complete. More generally:

\begin{theorem}
  \label{thm:IBQ-incompleteness-2}
  Let \easycal{PI} be a set of problem instances that contains all \tup{\K,\O,q}
  such that $\K=\emptyset$, \O is an \EL knowledge base and $q$ is an \EL
  inclusion (possibly sharing roles \hide{and concrete properties} with
  \O). There exists no shifting IBQ mechanism for \easycal{PI}.
\end{theorem}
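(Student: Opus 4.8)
The plan is to mirror the argument of Theorem~\ref{thm:IBQ-incompleteness}, replacing the nominal gadget with a role that is shared between the query and the oracle --- exactly the feature that the present hypothesis permits. Concretely, I would take $\K=\emptyset$, a role name $R$, two concept names $A,B$, and the query $q = \big(\exists R.A \sqsubseteq B\big)$, which is an \EL inclusion (and, as noted in the excerpt, also a \PL subsumption query). As the two oracles I would use
\[
  \O_1 = \{\exists R.A \sqsubseteq \top,\ B\sqsubseteq \top\}\,, \qquad \O_2 = \{\exists R.A \sqsubseteq B\}\,,
\]
both \EL knowledge bases over the signature $\{R,A,B\}$, so that $\tup{\emptyset,\O_i,q}\in\easycal{PI}$ for $i=1,2$. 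Obviously $\O_2\models q$, while $\O_1\not\models q$: the one-point interpretation \I with $A^\I=\Delta^\I$, $R^\I=\Delta^\I\times\Delta^\I$, $B^\I=\emptyset$ satisfies $\O_1$ but violates $q$.

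The crux is to verify that $\pos(\O_1)=\pos(\O_2)$. Since $\O_1$ consists of tautologies, $\pos(\O_1)$ is exactly the set of tautological inclusions of the form (\ref{oq}) over the concept names $\{A,B\}$ --- no conjunction of these names is forced to be empty, since $\O_1$ has a model interpreting everything as $\Delta^\I$. For $\O_2$ I would check that the single axiom $\exists R.A\sqsubseteq B$ entails no \emph{non}-tautological query in $\L_\O$: for any candidate $A_1\sqcap\cdots\sqcap A_m\sqsubseteq A'$ with $A_i,A'\in\{A,B\}$ and $A'\notin\{A_1,\dots,A_m\}$, one builds a counter-model in which no element has an $R$-successor, so that the axiom holds vacuously while the inclusion fails; and again $\O_2$ leaves every conjunction of $\{A,B\}$ satisfiable, so the $\bot$-case cannot arise. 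Hence $\pos(\O_1)=\pos(\O_2)$.

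Finally I would derive the contradiction as in Theorem~\ref{thm:IBQ-incompleteness}: if a shifting IBQ mechanism $(s,r)$ for \easycal{PI} existed, condition~1 of Definition~\ref{def:shifting-IBQ} forces $s(\emptyset)=\emptyset$, hence $\K\setminus s(\K)=\emptyset$, and condition~2 specializes to ``$r(\emptyset,\pos(\O_i),q)=\easytt{true}$ iff $\O_i\models q$'' for $i=1,2$; thus $r(\emptyset,\pos(\O_1),q)\neq\easytt{true}$ while $r(\emptyset,\pos(\O_2),q)=\easytt{true}$, yet $\pos(\O_1)=\pos(\O_2)$ makes the two function values equal --- a contradiction. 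The only non-routine point is the design of the gadget: the axiom separating $\O_2$ from $\O_1$ must involve a role occurring in $q$ yet must not leak into any concept-name subsumption (which is why $\exists R.A\sqsubseteq B$ has the right shape, and why we must avoid, e.g., $A\sqsubseteq B$), and the two ontologies must carry the same signature so that $\L_{\O_1}$ and $\L_{\O_2}$ literally coincide --- this is what the tautological padding in $\O_1$ is for.
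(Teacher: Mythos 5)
Your proof is correct and follows essentially the same strategy as the paper's: exhibit two \EL oracles with identical positive answer sets in $\L_\O$ but opposite entailment status for a role-sharing query, then reuse the contradiction from Theorem~\ref{thm:IBQ-incompleteness}. The paper uses the even simpler gadget $q=(\exists R.A\sqsubseteq\exists S.A)$ with $\O_1=\emptyset$ and $\O_2=\{q\}$; your tautological padding of $\O_1$ to equalize $\sig{\O_1}$ and $\sig{\O_2}$ is a small but legitimate refinement that makes the identity $\pos(\O_1)=\pos(\O_2)$ hold literally rather than up to the choice of query-language signature.
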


\begin{proof}
  Let $\K=\emptyset$, $q=(\exists R.A \sqsubseteq \exists S.A)$,
  $\O_1=\emptyset$ and $\O_2=\{q\}$. Note that
  \begin{itemize}
  \item $\pos(\O_1)=\pos(\O_2)$ (both contain all and only the
    tautological inclusions of the form (\ref{oq}));
  \item $\K\cup\O_1\not\models q$\,;
  \item $\K\cup\O_2\models q$\,.
  \end{itemize}
  Then the assumption that a shifting IBQ mechanism for \easycal{PI}
  exists leads to a contradiction, by the same argument used in
  Theorem~\ref{thm:IBQ-incompleteness}.  \hide{A similar proof holds if $q$
  and the oracle share concrete properties; it suffices to replace
  $\exists R.A$ and $\exists S.A$ in the above argument with $\exists
  f.[l,u]$ and $\exists g.[l,u]$, respectively.}
\end{proof}

\noindent
In the light of the above negative results, a natural question is
whether an oracle query language more expressive than $\L_\O$ would
remove the need for the restrictions on nominals and roles.  Note that
IBQ mechanisms for shared roles have already been introduced in
\cite{DBLP:journals/jair/GrauM12}.  For a fragment of \EL, there
exists an IBQ algorithm that terminates in polynomial time.  Nominals
are not allowed, but shared roles are, under suitable conditions.

In order to support more expressive oracle queries, \PLRO and \SSO
should be extensively changed, though.  The proofs of the above negative
results reveal that the simple treatment of existential restrictions
\hide{and interval constraints}in \SSO should be replaced with a more
complex computation, involving oracle queries, and it is currently not
clear how significantly such changes would affect the scalability of
reasoning and the possibility of compiling oracles into \PL knowledge
bases.
Given that scalability is one of SPECIAL's primary
requirements, and that there is no evidence that shared roles \hide{and
concrete properties} are needed by SPECIAL's application scenarios
(cf.\ Remark~\ref{rem:shared-roles}), we leave this question as an
interesting topic for further research.

%% formally, concrete properties can always be transformed into atoms.
%% this is helpful in proving completeness of IBQ reasoning.
%% in practice, they are infinitely many, so an ad hoc implementation is required
%% note that interval constraints cannot be shared with the oracle (not convex!)
%% so concrete feature sharing should concern *other* properties.

\section{Experimental Assessment}
\label{sec:experiments}

In this section we describe a Java implementation of \PLR and compare
its performance with that of other popular engines.
We focus on \PLR (as opposed to the more complex \PLRO) because
SPECIAL's application scenarios are compatible with the oracle
compilation into a \PL knowledge base illustrated in Section~\ref{sec:compilation}.
The implementation and experimental evaluation of \PLRO, that may be
interesting in other applications of \PL, lie beyond the scope of
this paper.

SPECIAL's engine is tested on two randomly generated sets of inputs.
The first set is based on the knowledge base and policies
developed for Proximus and Thomson Reuters. Consent policies are
generated by modifying the business policies, mimicking a selection of
privacy options from a list provided by the controller.  This first
set of test cases is meant to assess the performance of the engines in
the application scenarios that we expect to arise more frequently in
practice. The second set of experiments, that makes use of larger
knowledge bases and policies, is meant to predict the behavior of the
engines in more complex scenarios, should they arise in the future.

The implementation of \PLR and its optimizations are described in the
next subsection. Then Section~\ref{sec:test-gen} illustrates the test
cases used for the evaluation. Finally, Section~\ref{sec:performance}
reports the results of the experiments.
%%
%%Software  and data can be downloaded from \url{https://1drv.ms/u/s!Aple1sNCCRUejcMgKxmOv8UDsDjL5g?e=ccnu6C}.

\subsection{Prototype Implementation and Optimization}
\label{sec:prototype}

\PLR is implemented in Java and it is distributed as a .jar file. The
reasoner's class is named \emph{PLReasoner}, and supports the standard
OWL APIs, version 5.1.7. The package includes a complete
implementation of \PLR, including the structural subsumption algorithm
\SSA, and the preliminary normalization phases, based on the 7 rewrite
rules and on the interval splitting method for interval safety.

The interval splitting method has been refined in order to reduce the
explosion of business policies.
The reason for refinements can be easily seen: if a business policy
contains interval $[1,10]$ and a consent policy contains $[5,10]$,
then the method illustrated in (\ref{interval-splitting}) splits
$[1,10]$ into the (unnecessarily large) set of intervals
\[
[1,1],\ [2,4],\ [5,5],\ [6,9],\ [10,10]\,,
\]
that cause a single simple policy to be replaced with 5 policies.
Note that for interval safety the splitting $[1,4],\ [5,10]$ would be
enough. While (\ref{interval-splitting}) is convenient in the
theoretical analysis -- because it has a simpler definition and it
does not increase asymptotic complexity -- a more articulated
algorithm is advisable in practice.  Here we only sketch the
underlying idea: each interval end point is classified based on
whether it occurs only as a lower bound, only as an upper bound, or
both. A singleton interval is generated only for the third category of
endpoints, while the others are treated more efficiently.  In
particular, in the above example, 1 and 5 occur only as lower bounds;
this allows to generate non-singleton sub-intervals that have 1 and 5
as their lower bound.  Moreover, 10 occurs only as an upper bound;
this allows to create a non-singleton sub-interval where 10 is the
upper bound. Accordingly, the refined splitting algorithm generates
only the two intervals $[1,4]$ and $[5,10]$.

Several other optimizations have been implemented and assessed. The
corresponding versions of \PLR are described below:

\subsubsection*{PLR c}
The normalization steps (lines~2 and 3 of \PLR) are one of the most
expensive parts of the reasoner. In order to reduce their cost, two
caches are introduced. The first cache stores the business policies that have
already been normalized w.r.t.\ \K (line~2 of \PLR).  In this way, the
seven rewrite rules are applied to each business policy only once;
when the policy is used again, line~2 simply retrieves the normalized
concept from the cache.
This optimization is expected to be effective in SPECIAL's application
scenarios because only business policies need to be normalized, and
their number is limited.  So the probability of
re-using an already normalized policy is high, and the cache is
not going to grow indefinitly; on the contrary its size is expected to be
moderate.

Similarly, a second cache indexed by the two policies $C$ and
$D$ stores the concepts $\splt{C}{D}$ already computed (thereby
speeding up line~3 of \PLR, that is, the interval splitting step
needed for interval safety).

\subsubsection*{PLR 2n, PLR c 2n}
\PLR\!2n normalizes both $C$ \emph{and} $D$ with the seven
rewrite rules, before computing $\splt{C}{D}$.  Since the rewrite
rules may merge and delete the intervals of $D$, this optimization potentially
reduces the number of splitting points and, consequently, the size of
$\splt{C}{D}$.
We denote with \mbox{\PLR\!c\,2n} the version of \PLR that exploits both the
caches of \PLR\,c and applies double normalization, as \PLR\,pre.

\subsubsection*{PLR pre, PLR pre 2n}
Sometimes the two normalization phases can be pre-computed.  When the
set of business policies and the set of intervals that may occur in
consent policies are known in advance, the seven rules and interval splitting can be
applied once and for all before compliance checking
starts.
For example, intervals are available in advance when the minimum or
maximum storage time are determined by law, or when the duration
options available to data subjects when consent is requested are
specified by the data controller.
This version of the engine is designed for such scenarios. The given
set of business policies is fully normalized before compliance
checking starts, and stored in the caches supported by \PLR c.
During compliance checking, lines~2 and 3 only retrieve concepts from
the caches.  In this way the cost of a compliance check is almost
exclusively the cost of \SSA.  This version of \PLR will be evaluated
by measuring compliance checking time only; preliminary normalizations
are not included.

\subsection{Test case generation}
\label{sec:test-gen}

The first set of test cases is derived from the business policies
developed for the pilots of Proximus and Thomson Reuters; these
policies will be denoted with $P_\mathsf{PXS}$ and $P_\mathsf{TR}$
respectively.

In each compliance check $P_B \sqsubseteq P_C$, $P_B$ is a union of
simple business policies randomly selected from those occurring in the
pilots' policy ($P_\mathsf{PXS}$ or $P_\mathsf{TR}$).  Since $P_B$
describes the activity of a business process of the data controller,
the random choice of $P_B$ essentially corresponds to a random
distribution of the controller's data processing activities
(abstracted by the simple policies) across its business processes.

The consent policy $P_C$ is the union of a set of simple policies
$P_C^i$ ($i=1,\ldots,n$) randomly selected from the pilots' policy, and randomly perturbed by
replacing some vocabulary terms with a different term.  The random
selection mimicks the opt-in/opt-out choices of
data subjects with respect to the various data processing activities modelled
by the simple policies.  Similarly, the random replacement of terms
simulates the opt-in/opt-out choices of the data subject w.r.t.\ each
component of the selected simple policies. More precisely, if the
modified term occurring in $P_C^i$ is a superclass (resp.\ a subclass)
of the corresponding term in the original business policy, then the
data subject opted for a broader (resp.\ more restrictive) permission
relative to the involved policy property (e.g.~data categories,
purpose, and so on).

\begin{table}[!h]
  \begin{center}
    \small
    \begin{tabular}{lrr}
      \hline
      & \multicolumn{1}{c}{\normalsize Proximus (PXS)} & \multicolumn{1}{c}{\normalsize Thomson Reuters (TR)}
      \\
      \hline
      \hline
      \it Ontology & &
      \\
      \hline
      %% classes & 192 & 
      %% \\
      %% roles & 15 & 
      %% \\
      inclusions & 186 & 186
      \\
      \disj & 11 & 11
      \\
      \range & 10 & 10
      \\
      \func & 8 & 8
      \\
      classification hierarchy height & 4 & 4
      \\
      \hline
      \it Business policies & &
      \\
      \hline
      \# generated policies & 120 & 100
      \\
      avg.\ simple pol.\ per full pol. & 2.71 & 2.39
      \\
      std.\ dev. & 1.72 & 1.86
      %% intersection args. & &
      \\
      \hline
      \it Consent policies & &
      \\
      \hline
      \# generated policies & 12,000 & 10,000
      \\
      avg.\ simple pol.\ per full pol. & 3.77 & 3.42
      \\
      std.\ dev. & 2.02 & 2.03
      \\
      %% intersection args. & &
      %% \\
      \hline
      \it Test cases & &
      \\
      \hline
      \# generated queries & 12,000 & 10,000
      %% \\
      %% compliant/non-compliant ratio & 1 & 1
      \\
      \hline
    \end{tabular}
  \end{center}
  \caption{Size of the test cases inspired by the pilots}
  \label{pilot-test-cases}
\end{table}

\begin{table}[h]
  \begin{center}
    \small
    \begin{minipage}[b][140pt][t]{170pt}
      \begin{tabular}{lrrr}
        \hline
        Ontology size & \normalsize O1 & \normalsize O2 & \normalsize O3
        \\
        \hline
        \hline
        classes & 100 & 1,000 & 10,000
        \\
        roles & 10 & 50 & 100
        \\
        concrete properties & 10 & 25 & 50
        \\
        \func & 10 & 37 & 75
        \\
        \range & 5 & 25 & 50
        \\[2pt]
        \hline
        \\[-6pt]
        avg.\ \disj & 3 & 31 & 298
        \\
        avg.\ inclusions & 211 & 2224 & 23418
        \\
        avg.\ classification & 8 & 10 & 14 \\[-2pt]
         hierarchy height & & &
        \\[14pt]
        \hline
      \end{tabular}
    \end{minipage}
    \quad
    \begin{minipage}[b][140pt][t]{145pt}
      \begin{tabular}{lrr}
        \hline
        Concept size & P1 & P2
        \\
        \hline
        \hline
        max \#simple pol. & 10 & 100 \\[-2pt]
        per full pol.
        \\
        max \#top-level inters. & 10 & 20\\[-2pt]
        per simple subconcept
        \\
        max depth (nesting) & 4 & 9
        \\
        \hline
        %\# used policies & 96 & 1000
        %\\
        avg.\ \#simple pol. & 6.8 & 50.1 \\[-2pt]
        per full pol. & &
        \\
        avg.\ depth & 2.4 &  5
        \\
        \hline
        Simple policy size
        \\
        \hline
        avg.\ \#intersections & 10.6 & 25.8
        %\\
        %max.\ \#intervals & 9 & 
        \\
        avg.\ \#intervals & 3.7 & 9
        \\[2pt]
        \hline
      \end{tabular}
    \end{minipage}
  \end{center}
  \caption{Size of fully synthetic test cases}
  \label{synthetic-test-cases}
\end{table}

In this batch of experiments, the knowledge base is always SPECIAL's
ontology, that defines policy roles and the temporary vocabularies
for data categories, purpose categories, etc.
The size and number of this batch of experiments is reported in
Table~\ref{pilot-test-cases}.  The number of randomly generated
business policies is higher in one case because $P_\mathsf{PXS}$ has
more simple policies than $P_\mathsf{TR}$: the ratio is 20 generated
policies per simple policy.  Queries have been obtained by generating
100 consent policies for each business
policy. Table~\ref{pilot-test-cases} reports also the average number
of simple policies per generated policy and its standard deviation.
The size of each policy is limited by SPECIAL's usage policy format:
at most one interval constraint per simple policy, and nesting depth
2.
%% Since property fillers, sometimes, are conjunctions, the average
%% number of intersections per simple policy (considering all nesting
%% levels) is 23.

In the second set of experiments, both the ontologies and \PL
subsumptions are completely synthetic, and have increasing size in
order to set up a stress test for verifying the scalability of
SPECIAL's reasoner. Fifteen ontologies have been generated: five for
each of the three sets of parameters O1--O3 reported in
Table~\ref{synthetic-test-cases}.  The same table reports the
parameters used to generate the \PL concepts occurring in the queries,
according to two size specifications: P1 and P2.  

%\noindent
Note that approximately half of the roles and concrete properties are
functional, and half of the roles have a range axiom.  Ontologies have
been generated by randomly distributing classes over approximately
$\log(\mathit{\#classes})$ layers.  Then the specified number of
disjointness axioms have been generated, by picking classes on the same
layer.  Finally, about $2\cdot\#\mathit{classes}$ inclusions have been
created, mostly across adjacent layers, in such a way that no class
became inconsistent.  The ratio between the number of inclusions and
the number of classes is similar to the ratio that can be observed
most frequently in real ontologies,
cf.\ \cite{DBLP:journals/jair/MotikSH09,DBLP:conf/dlog/MatentzogluBP13,DBLP:journals/jar/KazakovKS14}.

We have generated 100 concepts of size P1 and 1000 of size P2, picking
interval endpoints from $[0,365]$ (one year, in days).  Each set has
been split into business and consent policies (resp.\ 30\% and 70\% of
the generated policies), that have been paired randomly to generate
test queries.
The number of queries of size P1 generated for each ontology is 50.
Let \ni be the maximum number of interval constraints
per simple policy after normalization w.r.t.\ the 7 rules\footnote{The reason for measuring \ni after normalization is explained later.} (for a given business policy).
The number of queries of size P2 generated for each ontology and each
business policy with $\ni\leq 5$ is 10. The maximum number of queries
for each ontology and each $\ni>5$ has been limited to 40, in order to keep
the length of the experiments within a reasonable range.  In this
case, we maximized the number of different business policies occurring
in the selected queries.

For each ontology \K, the business policies have been selected from
the available \K-consistent policies.  Furthermore, whenever possible,
queries have been selected in such a way that the number of positive and negative
answers are the same.
Table~\ref{synthetic-test-cases} illustrates the average size of
the generated policies for each parameter setting.  We have not limited the number of
interval constraints, in order to analyze the behavior of PLReasoner
as the number of intervals per simple policy grows (if it is not
bounded then \PL subsumption query answering is co\NP-hard).  The
maximum nesting level occurring in the generated policies is
approximately $\lceil\log_2(\mathit{max\ disjuncts})\rceil$.  

\subsection{Performance analysis}
\label{sec:performance}

%\noindent
The experiments have been run on a server with an 8-cores processor
Intel Xeon Silver 4110, 11M cache, 198\,GB RAM, running Ubuntu 18.04
and JVM 1.8.0\_181, configured with 32GB heap memory (of which less
than 700\,MB have been actually used in all experiments).  We have
\emph{not} exploited parallelism in the engine's implementation.

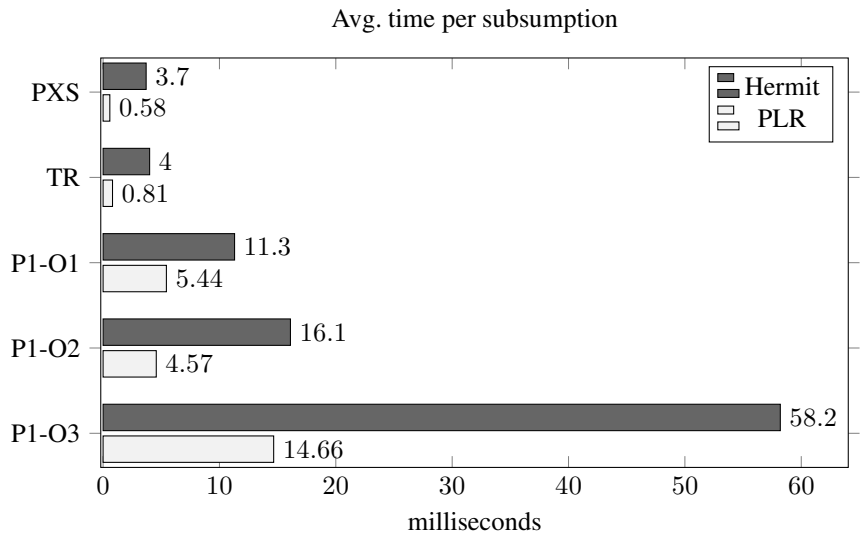
\begin{figure}[h]
  \begin{center}
    \begin{tikzpicture}
      \begin{axis}[
          title=Avg.\ time per subsumption,
          xbar,
          width=.95\textwidth,
          height=200pt,
          xmin=-0.2,
          xlabel={milliseconds},
          symbolic y coords={P1-O3,P1-O2,P1-O1,TR,PXS},
          reverse legend,
          ytick=data,
          nodes near coords, nodes near coords align={horizontal},
        ]
        \addplot [fill=black!05] coordinates {(0.58,PXS) (0.81,TR) (5.44,P1-O1) (4.57,P1-O2) (14.66,P1-O3)};
        \addplot [fill=black!60] coordinates {(3.7,PXS) (4,TR) (11.3,P1-O1) (16.1,P1-O2) (58.2,P1-O3)};
        \legend{PLR,Hermit}
      \end{axis}
    \end{tikzpicture}
  \end{center}
  \vspace*{-10pt}
  \caption{Comparisons on small/medium policies}
  \label{fig:hist}
\end{figure}

We start by illustrating the results for the test cases with small and
medium policies.  Figure~\ref{fig:hist} shows that \PLR is faster than
Hermit, over these test sets, even if no optimization is applied.   The size of the ontology
affects the performance of Hermit more than \PLR's (cf.\ the results
for O1, O2, and O3).

The good performance of \PLR over PXS and TR had to be expected, given
that the policies involved in these test sets are SPECIAL's usage
policies, that by definition contain at most one interval constraint
per simple policy, of the form $\exists \dur . [\ell,u]$. Let \ni
denote the maximum number of intervals per simple policy after
applying the rewrite rules, and recall that the size of $\splt{C}{D}$
may grow exponentially with \ni.  We have not limited \ni, while
generating the synthetic policies in P1 and P2, to see how the number
of intervals affects the performance of \PLR (recall that if \ni is
unbounded, then \PL subsumption is co\NP-complete).  We measured the
value of \ni after applying the rewrite rules, because
they can collapse and delete intervals, thereby reducing the
complexity of the subsequent interval splitting phase and the size of
$\splt{C}{D}$. After the application of the seven rules, the maximum
\ni over the business policies occurring in P1's queries is 9.
Figure~\ref{fig:hist} shows that the potential combinatorial explosion
of $\splt{C}{D}$ does not frequently occur with these policies. The
probability of splitting a single interval into many sub-intervals is
evidently not high. On the conttrary, a combinatorial explosion is
clearly observable in the test sets with large policies (P2);
Figure~\ref{fig:hist2} illustrates the results for the smallest
synthetic ontologies (O1).

\begin{figure}[h]
  \begin{center}
    \begin{tikzpicture}
      \begin{semilogyaxis}[
          title=P2-O1: Avg.\ time per subsumption query,
          ybar,
          width=.95\textwidth,
          height=200pt,
          ylabel={milliseconds},
          xlabel={\# intervals per simple policy},
          legend pos =north west,
          xtick=data,
          %nodes near coords, nodes near coords align={vertical},
        ]
        \addplot [fill=black!60] coordinates {(1,152.8)(2,159.8)(3,211.0)(4,165.9)(5,184.7)};
        \addplot [fill=black!05] coordinates {(1,25.4) (2,61.5) (3,291.1) (4,2732.0) (5,13470.9)};
        %\addplot [fill=black!05] coordinates {(1,25.9) (2,31.5) (3,55.9) (4,53.0) (5,63.5)};
        %\legend{PLR,PLR2n}
        \legend{Hermit,PLR}
      \end{semilogyaxis}
    \end{tikzpicture}
  \end{center}
  \vspace*{-10pt}
  \caption{Impact of interval number per simple policy -- large policies}
  \label{fig:hist2}
\end{figure}
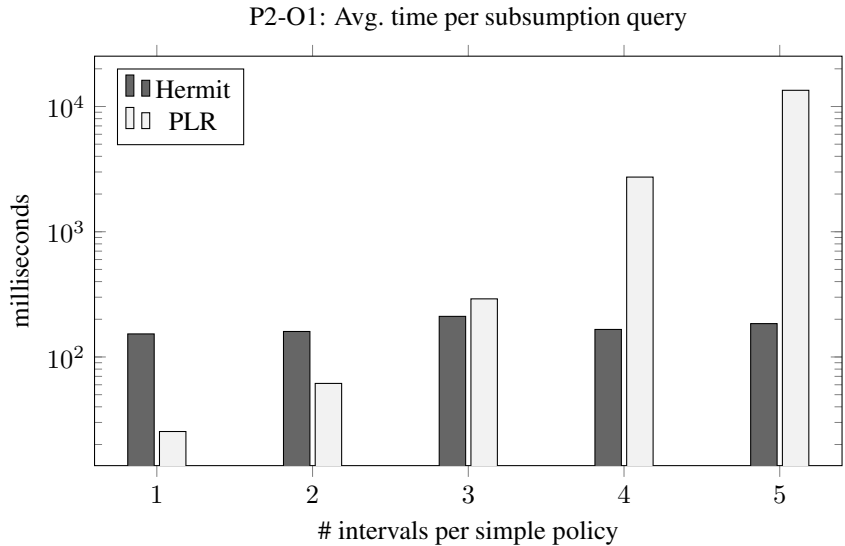

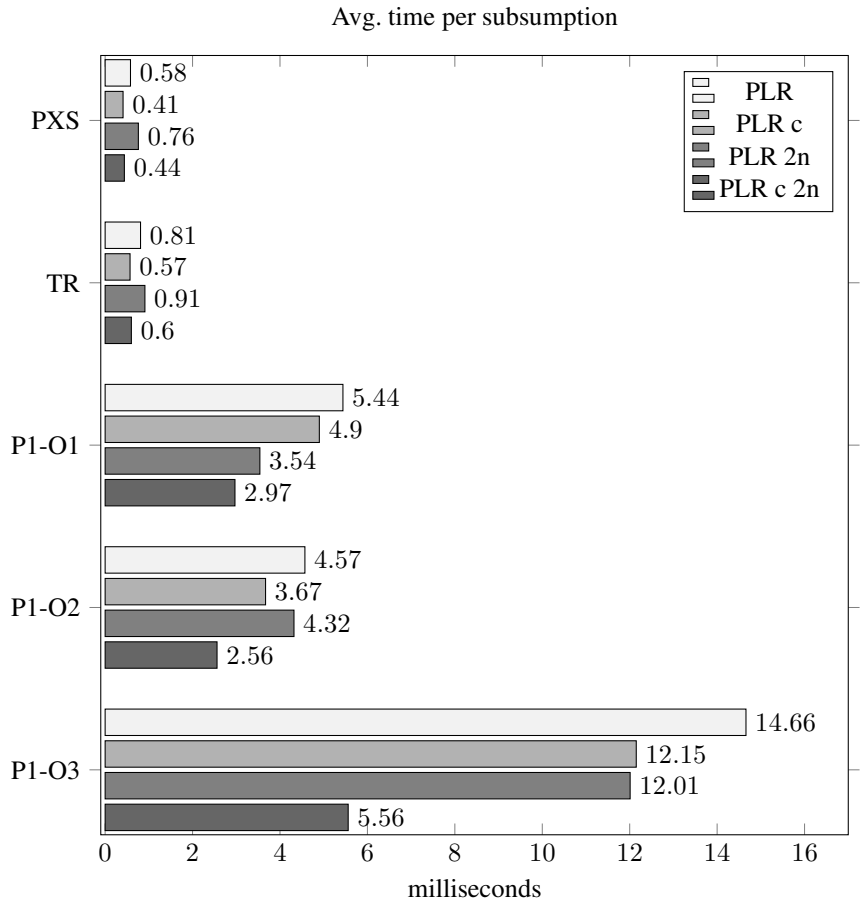
\begin{figure}[h]
  \begin{center}
    \begin{tikzpicture}
      \begin{axis}[
          title=Avg.\ time per subsumption,
          xbar,
          y dir=reverse,
          width=.95\textwidth,
          height=340pt,
          xmin=-0.1,
          xmax=17,
          xlabel={milliseconds},
          symbolic y coords={PXS,TR,P1-O1,P1-O2,P1-O3},
          reverse legend,
          ytick=data,
          nodes near coords, nodes near coords align={horizontal},
        ]
        \addplot [fill=black!60] coordinates {(0.44,PXS) (0.60,TR) (2.97,P1-O1) (2.56,P1-O2) (5.56,P1-O3)};
        \addplot [fill=black!50] coordinates {(0.76,PXS) (0.91,TR) (3.54,P1-O1) (4.32,P1-O2) (12.01,P1-O3)};
        \addplot [fill=black!30] coordinates {(0.41,PXS) (0.57,TR) (4.90,P1-O1) (3.67,P1-O2) (12.15,P1-O3)};
        \addplot [fill=black!05] coordinates {(0.58,PXS) (0.81,TR) (5.44,P1-O1) (4.57,P1-O2) (14.66,P1-O3)};
        \legend{PLR c 2n,PLR 2n,PLR c,PLR}
      \end{axis}
    \end{tikzpicture}
  \end{center}
  \vspace*{-10pt}
  \caption{Effectiveness of optimizations on small/medium policies}
  \label{fig:hist4}
\end{figure}

Then we analyzed the effects of the optimizations described in
Section~\ref{sec:prototype}.  Their effectiveness over small and
medium policies is illustrated by Figure~\ref{fig:hist4}.  The
normalization of consent policies (2n) brings no benefits with small
policies (actually, it slightly decreases the engine's performance, compare \PLR 2n
with \PLR, and \PLR c 2n with \PLR c).  Its benefits start to be
visible with medium policies.  The cache of normalized policies (\PLR
c) is the best option on small policies. On medium policies, the
combination of the caches with the normalization of consent policies
(\PLR c 2n) is the most effective optimization.

\begin{figure}
  \begin{center}
    \begin{tikzpicture}
      \begin{axis}[
          title=P2-O1: Avg.\ time per subsumption,
          xbar,
          y dir=reverse,
          width=.95\textwidth,
          height=350pt,
          xmin=-2,
          xmax=350,
          xlabel={milliseconds},
          ylabel={\# intervals per simple policy},
          reverse legend,
          legend pos =south east,
          ytick=data,
          nodes near coords, nodes near coords align={horizontal},
        ]
        \addplot [fill=black!60] coordinates {(24.2,1)(29.5,2)(44.3,3)(43.5,4)(54.5,5)};
        \addplot [fill=black!50] coordinates {(25.9,1)(31.5,2)(55.9,3)(53.0,4)(63.5,5)};
        \addplot [fill=black!30] coordinates {(24.6,1)(60.7,2)(302.6,3)(+inf,4)(+inf,5)};
        \addplot [fill=black!05] coordinates {(25.4,1)(61.5,2)(291.1,3)(+inf,4)(+inf,5)};
        \legend{PLR c 2n,PLR 2n,PLR c,PLR}
      \end{axis}
    \end{tikzpicture}
  \end{center}
  \caption{Effectiveness of optimizations on large policies and small ontologies}
  \label{fig:hist5}
\end{figure}
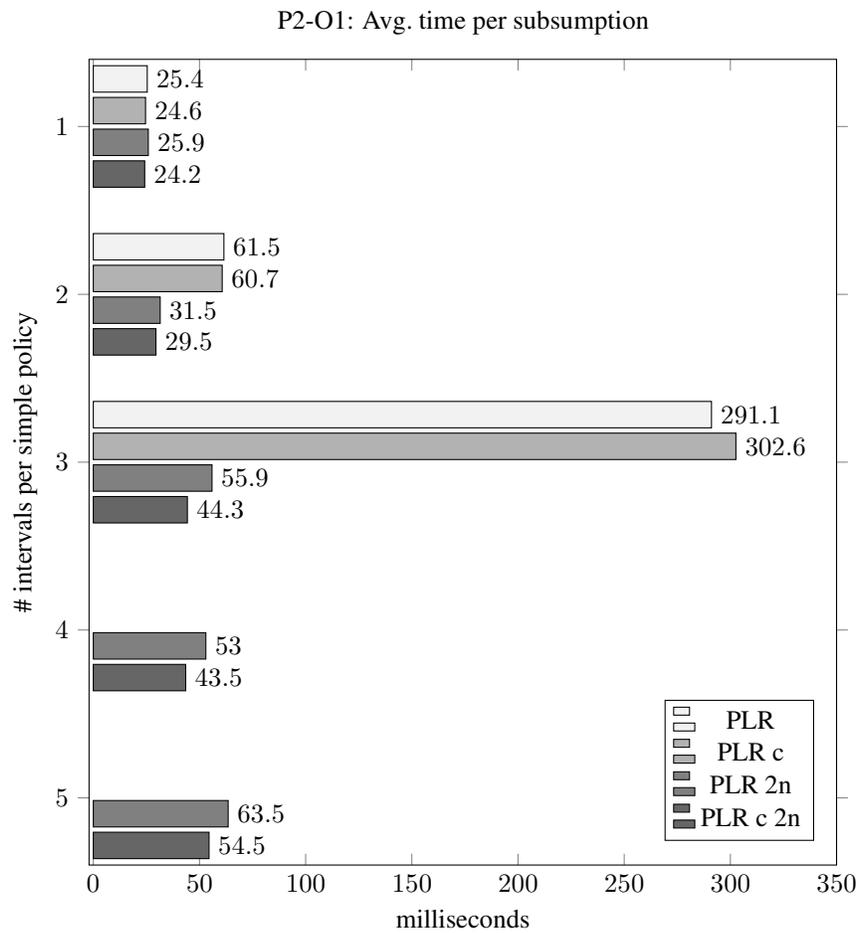

Over large policies (P2), the normalization of consent policies (2n)
is essential to mitigate the combinatorial explosion of $\splt{C}{D}$,
as shown in Figure~\ref{fig:hist5}.  The versions of \PLR that do not
normalize $D$ become impractical already for \ni=3, while the
computation time of \PLR 2n and \PLR c 2n moderately increases.  This
behavior can be explained by observing the effects of normalization
on this test set: after the application of the rewrite rules, the average number of intervals is about 10 times smaller, which reduces the probability of an exponential
growth of $\splt{C}{D}$.%\NOTE{calculate data}

%% \begin{figure}
%%   \begin{center}
%%     \begin{tikzpicture}
%%       \begin{semilogyaxis}[
%%           title=P2-O1: Avg.\ time per subsumption query,
%%           ybar,
%%           width=.95\textwidth,
%%           height=200pt,
%%           %bar width=16pt,
%%           %height=300pt,
%%           %enlarge x limits=0.5,
%%           %enlarge y limits=0.5,
%%           ymin=-0.2,
%%           ylabel={milliseconds},
%%           xlabel={\# intervals per simple policy},
%%           legend pos =north west,
%%           xtick=data,
%%           %nodes near coords, nodes near coords align={vertical},
%%         ]
%%         \addplot [fill=black!60] coordinates {(1,25.4) (2,61.5) (3,291.1) (4,2732.0) (5,13470.9)};
%%         \addplot [fill=black!05] coordinates {(1,25.9) (2,31.5) (3,55.9) (4,53.0) (5,63.5)};
%%         \legend{PLR,PLR2n}
%%       \end{semilogyaxis}
%%     \end{tikzpicture}
%%   \end{center}
%%   \caption{Impact of interval number per simple policy, and double normalization}
%%   \label{fig:hist2}
%% \end{figure}

\begin{figure}
  \begin{center}
    \begin{tikzpicture}
      \begin{axis}[
          title=P2-O2: Avg.\ time per subsumption,
          xbar,
          y dir=reverse,
          width=.95\textwidth,
          height=360pt,
          xmin=-2,
          xmax=310,
          xlabel={milliseconds},
          ylabel={\# intervals per simple policy},
          reverse legend,
          legend pos =north east,
          ytick=data,
          nodes near coords, nodes near coords align={horizontal},
        ]
        \addplot [fill=black!60] coordinates
        {(31.7,1)(33.0,2)(38.3,3)(54.8,4)(70.2,5)(96.4,6)(104.6,7)(272.4,8)(268.2,9)};
        %\addplot [fill=black!30] coordinates
        %{(46.1,1)(54.1,2)(63.2,3)(80.0,4)(95.3,5)(128.8,6)(146.1,7)(308.4,8)(321.7,9)};
        \addplot [fill=black!05] coordinates
        {(149.3,1)(150.2,2)(148.6,3)(150.0,4)(158.7,5)(149.4,6)(146.1,7)(152.9,8)(147.8,9)};
        %\legend{PLRc2n,PLR2n,Hermit}
        \legend{PLR c 2n,Hermit}
      \end{axis}
    \end{tikzpicture}
  \end{center}
  \caption{Hermit vs PLR with caches and double normalization}
  \label{fig:hist5b}
\end{figure}
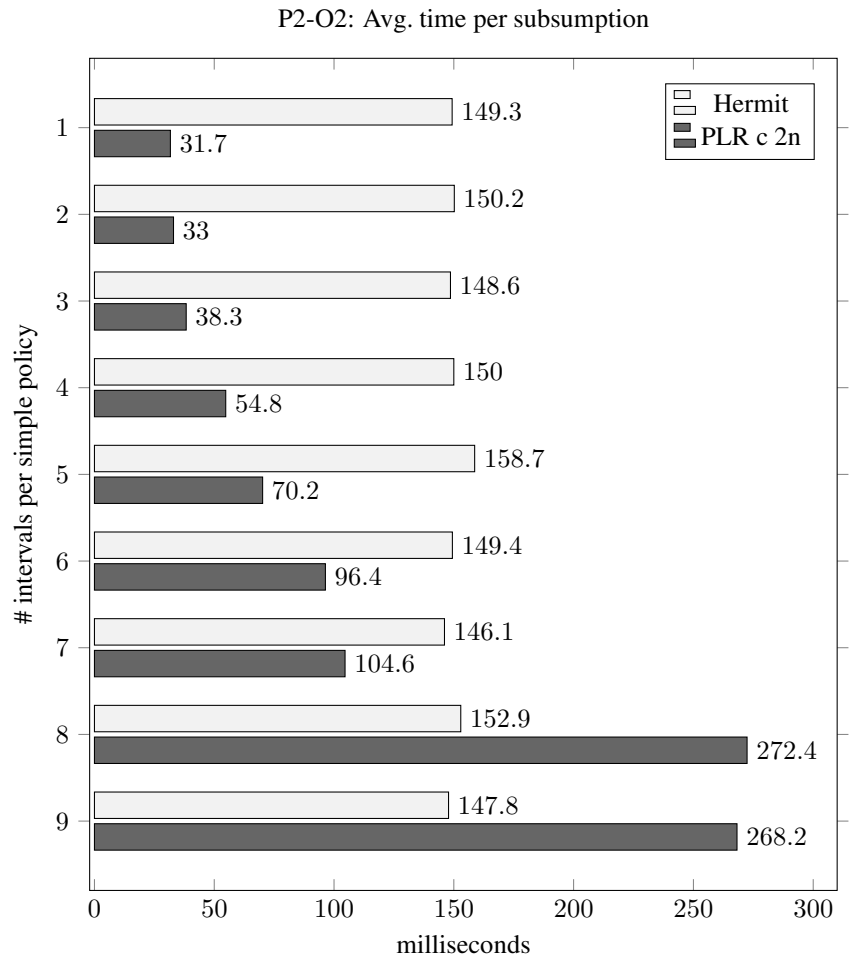

Next, in Figure~\ref{fig:hist5b}, we compare the best version of the
engine over medium/large policies (i.e.\ \PLR c 2n) with Hermit.  The optimizations delay the
effects of combinatorial explosions until \ni=7.  After this
threshold, Hermit becomes faster.

%% \begin{figure}
%%   \begin{center}
%%     \begin{tikzpicture}
%%       \begin{axis}[
%%           title=P2-O1: Avg.\ time per subsumption,
%%           xbar,
%%           y dir=reverse,
%%           width=.95\textwidth,
%%           height=280pt,
%%           %height=340pt,
%%           %height=300pt,
%%           %enlarge x limits=0.5,
%%           %enlarge y limits=0.5,
%%           xmin=-2,
%%           xlabel={milliseconds},
%%           ylabel={\# intervals per simple policy},
%%           reverse legend,
%%           legend pos =south east,
%%           ytick=data,
%%           nodes near coords, nodes near coords align={horizontal},
%%         ]
%%         %\addplot [fill=black!05] coordinates {(24.2,1)(29.5,2)(44.3,3)(43.5,4)(54.5,5)};
%%         \addplot [fill=black!30] coordinates {(25.9,1)(31.5,2)(55.9,3)(53.0,4)(63.5,5)};
%%         \addplot [fill=black!60] coordinates {(152.8,1)(159.8,2)(211.0,3)(165.9,4)(184.7,5)};
%%         \addplot [fill=black!80] coordinates {(25.4,1)(61.5,2)(291.1,3)(+inf,4)(+inf,5)};
%%         \legend{PLR2n,Hermit,PLR}
%%         %\legend{PLRc2n,PLR2n,Hermit,PLR}
%%       \end{axis}
%%     \end{tikzpicture}
%%   \end{center}
%%   \caption{Comparisons over large policies and small ontologies}
%%   \label{fig:hist3}
%% \end{figure}

\begin{figure}
  \begin{center}
    \begin{tikzpicture}
      \begin{axis}[
          title=Avg.\ time per subsumption,
          xbar,
          y dir=reverse,
          width=.95\textwidth,
          height=350pt,
          xmin=-0.4,
          xmax=65,
          xlabel={milliseconds},
          symbolic y coords={PXS,TR,P1-O1,P1-O2,P1-O3},
          reverse legend,
          ytick=data,
          nodes near coords, nodes near coords align={horizontal},
        ]
        \addplot [fill=black!60] coordinates
                 {(0.33,PXS) (0.47,TR) (2.28,P1-O1) (1.70,P1-O2) (4.15,P1-O3)};
        \addplot [fill=black!50] coordinates
                 {(0.44,PXS) (0.60,TR) (2.97,P1-O1) (2.56,P1-O2) (5.56,P1-O3)};
        \addplot [fill=black!30] coordinates
                 {(0.41,PXS) (0.57,TR) (4.90,P1-O1) (3.67,P1-O2) (12.15,P1-O3)};
        \addplot [fill=black!05] coordinates
                 {(3.7,PXS) (4,TR) (11.3,P1-O1) (16.1,P1-O2) (58.2,P1-O3)};
        \legend{PLR pre,PLR c 2n,PLR c,Hermit}
      \end{axis}
    \end{tikzpicture}
  \end{center}
  \caption{Effectiveness of business policy pre-normalization on small/medium policies}
  \label{fig:pre1}
\end{figure}
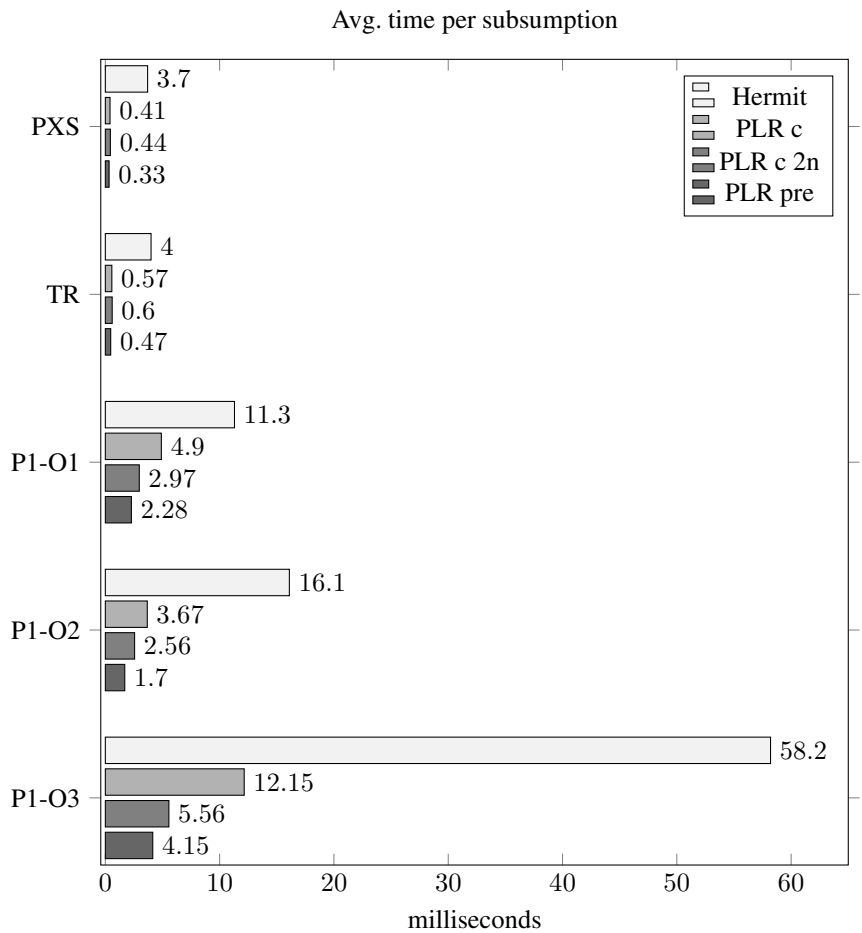

Finally, we analyzed the effectiveness of business policy
pre-normalization (pre). Recall that this approach is feasible in practice only if both
the business policies and the intervals that may occur in consent
policies are known in advance, and do not change frequently.
The effects of pre-normalization on small and medium policies is
remarkable: \PLR pre is approximately one order of magnitude faster
than Hermit, as shown in Figure~\ref{fig:pre1}. Over pilot-inspired
tests, pre-normalization brings the average time per subsumption query well below 500
$\mu$-seconds.

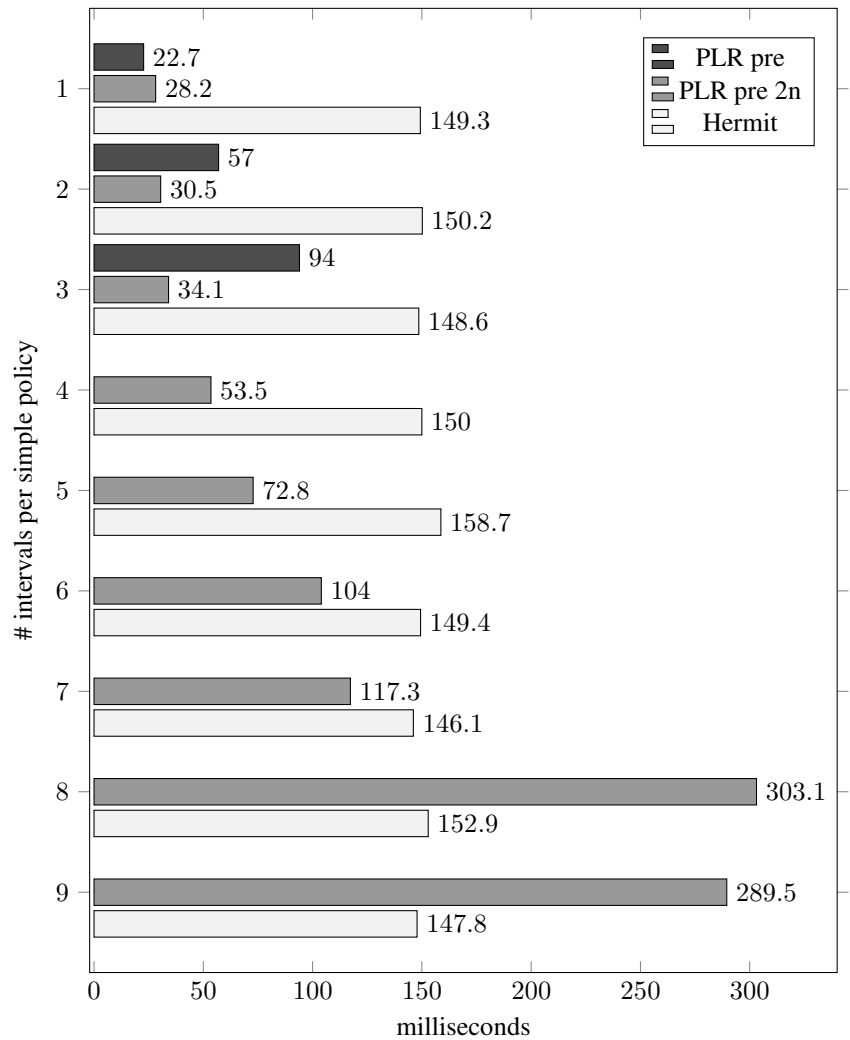
\begin{figure}
  \begin{center}
    \begin{tikzpicture}
      \begin{axis}[
          title=P2-O2: Avg.\ time per subsumption,
          xbar,
          y dir=reverse,
          width=.95\textwidth,
          height=410pt,
          xmin=-2,
          xmax=340,
          xlabel={milliseconds},
          ylabel={\# intervals per simple policy},
          reverse legend,
          legend pos =north east,
          ytick=data,
          nodes near coords, nodes near coords align={horizontal},
        ]
        %\addplot [fill=black!50] coordinates
        %{(31.7,1)(33.0,2)(38.3,3)(54.8,4)(70.2,5)(96.4,6)(104.6,7)(272.4,8)(268.2,9)};
        \addplot [fill=black!05] coordinates
        {(149.3,1)(150.2,2)(148.6,3)(150.0,4)(158.7,5)(149.4,6)(146.1,7)(152.9,8)(147.8,9)};
        \addplot [fill=black!40] coordinates
        {(28.2,1)(30.5,2)(34.1,3)(53.5,4)(72.8,5)(104.0,6)(117.3,7)(303.1,8)(289.5,9)};
        \addplot [fill=black!70] coordinates
        {(22.7,1)(57.0,2)(94.0,3)};
        \legend{Hermit,PLR pre 2n,PLR pre}
        %\legend{PLRpre2n,PLRc2n,Hermit}
      \end{axis}
    \end{tikzpicture}
  \end{center}
  \caption{Effectiveness of business policy pre-normalization on large policies. For $\ni\geq 4$ the response time of PLR pre exceeds Hermit's.}
  \label{fig:pre2}
\end{figure}

The effects of pre-normalization quickly disappear over large
policies. Figure~\ref{fig:pre2} shows that the explosion of
$\splt{C}{D}$ makes it necessary to apply also the normalization of
consent policies to delay combinatorial effects (cf.\ \PLR pre
2n). However, for \ni=8, \PLR~pre~2n is slower than Hermit, so
pre-normalization does not deal with the combinatorial explosion better
than \PLR~c~2n.

In order to assess the quality of \PLR's engineering, we compared it
with the specialized engine ELK on the test case based on Proximus'
policies.  ELK supports neither functionality axioms nor interval
constraints, so it is generally unable to handle \PL subsumption problems; however Proximus' policies make no use of intervals, and
never contain more than one expression $\exists R.C$ with the same
role $R$.  As a consequence, ELK computes the correct answer on this
test set.  The average time per subsumption query is 3.11
milliseconds; therefore all versions of \PLR are significantly
faster. In particular, \PLR pre is approximately one order of
magnitude faster.

We have also considered Konclude, a reasoner that is very competitive
on standard benchmarks \cite{DBLP:journals/ws/SteigmillerLG14}.
Konclude integrates a tableau algorithm with completion-based
saturation -- for pay-as-you-go behavior -- and adopts a wide range of optimizations.
The current version, however, is focussed on classification tasks; streams
of \PL subsumptions can be processed only at the cost of repeating
classification for each query. This prevents a fair comparison with Hermit and \PLR.

\section{Conclusions}
\label{sec:conclusions}

We have introduced the
description logic \PL in order to formalize the data usage policies
adopted by controllers as well as the consent to data processing
granted by data subjects. Checking whether the controllers' policies
comply with the available consent boils down to subsumption checking
between \PL concepts.  \PL can also formalize parts of the GDPR; then,
by means of subsumption checking, one can automatically check several
constraints on usage policies such as, for example:
\begin{itemize}
\item Are all the required policy properties specified?
\item Are all the required obligations specified?
\item Is the policy compatible with GDPR's constraints on cross-border data transfers?
\end{itemize}

\noindent
\PL supports interval constraints of the form $\exists f.[\ell,u]$ in
order to model limitations on data storage duration.  This feature
affects convexity, and places \PL outside the space of Horn DLs -- including the
tractable profiles of OWL2.

The frequency of compliance checks can be high, so \PL has been
designed to address scalability requirements by making the language as
simple as possible.  Despite this, general subsumption checking in \PL
is co\NP complete, due to the interplay of interval constraints and
concept union. However, reasoning becomes tractable if the number of
interval constraints in each simple policy on the left of subsumption
is bounded by a constant, as it happens in SPECIAL's usage policies, consent
policies, and in the formalization of the GDPR.  Under this
assumption, subsumption checking can be split into a polynomial-time
normalization phase and a subsequent subsumption check that can be
carried out by a fast, structural subsumption algorithm (\SSA).

The scalability of the complete algorithm (\PLR) has been
experimentally assessed.  Some of the test sets consist of realistic
policies and ontologies, derived from SPECIAL's pilots. Such policies
and ontologies are small, so we generated synthetic stress tests,
where policies and ontologies are significantly larger than what we
expect in real GDPR compliance scenarios.
Our tests show that \PLR is significantly faster than Hermit on small
and medium policies. Moreover, \PLR's performance can be improved by
caching normalized policies (\mbox{\PLR\!c}).  With this solution,
\PLR takes around 500\,$\mu$ seconds per subsumption check, over the
test sets inspired by SPECIAL's pilots (PXS and TR).  By
pre-normalizing business policies (\PLR\!pre), the average cost per
subsumption check can be further reduced to 333\,$\mu$sec (PXS) and
487\,$\mu$sec (TR).

Over large policies (P2), the probability of observing a combinatorial
explosion during interval splitting grows, and the performance of \PLR
exhibits an exponential decrease as \ni (the average number of
intervals per simple policy measured after applying the seven rewrite
rules) increases.  This phenomenon is unavoidable, unless
$\mathsf{P}=\NP$, because \PL subsumption checking is co\NP-hard if
\ni is unrestricted. However, by normalizing also consent policies,
combinatorial effects are mitigated, and \PLR\!c\,2n turns out to be faster than
Hermit for $\ni < 8$.

In perspective, the expressiveness needed to encode the vocabularies
of data categories, purposes, recipients, etc.\ is going to exceed the
capabilities of \PL.  For this reason, we have shown how to integrate
the compliance checking method based on \PLR with reasoners for
logics more expressive than \PL.  The integration is based on the
\emph{import by query} approach.  If the ``external'' ontology \O that
defines vocabulary terms is in Horn-\SRIQ, and if the main knowledge
base \K and the given subsumption query share only concept names with
\O, then algorithm \PLRO\ -- an adaptation of \PLR that calls a
reasoner for \O\ -- is sound and complete.  If \O additionally belongs
to a tractable DL, then subsumption checking is tractable in the IBQ
framework, too. The restriction on roles can be partly lifted by
allowing queries to mention \O's roles, provided that if
$R\in\sig{\O}$, then the existential restrictions $\exists R.C$ may
contain only roles in $\sig{\O}$.

We have also illustrated a different implementation strategy, based on
a pre-com\-pilation of \K and \O into a single \PL knowledge base
$\comp(\K,\O)$, whose size is polynomial in the size of $\K\cup\O$ and
in the number of business policies.
Compliance checks are computed in polynomial time, after compilation,
even if \O belongs to an intractable logic. Moreover, pre-compilation
allows to exploit the implementation of \PLR, whose scalability has
been assessed in Section~\ref{sec:experiments}.
This approach works well in SPECIAL's use cases because the number of
business policies is usually small, and \K, \O, and the business
policies are relatively stable and persistent.  Unfortunately, the
above assumptions cannot be made in general, for all potential
applications of \PL.

Such applications include also the representation of licenses, which
are a fundamental element of data markets.  The application context is
in some respect analogous to SPECIAL's: \PL concepts should encode the
usage restrictions that apply to datasets, multimedia content, and so
on.  In this case, however, the policies that can be reasonably
assumed to belong to a limited set are those associated to sellers,
that occur on the right-hand side of subsumptions, while the left-hand
side can hardly be restricted.  This hinders the compilation-based
approach, and may require a direct implementation of \PLRO, that is, the
general IBQ reasoner for \PL.  Such implementation and its
experimental assessment are interesting topics for further research.

\PL can also naturally encode electronic health records (EHRs). In
this case, the top-level properties of \PL queries encode the sections
of EHRs -- according, say, to the HL7 standard -- while some of the
sections' contents can be specified with SNOMED terms.  The IBQ
framework allows to process \PL queries with \PLRO, and reduce the
cost of SNOMED to oracle calls, consisting of linear time visits to
its classification graph. The efficiency of the structural subsumption
reasoner is very promising in this context, that is challenging for
all engines due to the remarkable size of SNOMED.  We plan to try
\PLRO to increase the performance of the secure view construction
reported in \cite{DBLP:conf/dlog/BonattiPS15}.

The simplicity of \PLR makes it possible to embed \PL reasoning in
objects with limited scripting capabilities. For example,
one of SPECIAL's partners has programmed \PL compliance checking as a
smart contract in an Ethereum blockchain.  In this way, the creation
of new entries in the blockchain is subject to compliance with a
specified policy.

SPECIAL's deliverables comprise dashboards for controllers, data
subjects, and data protection officers. We are going to support these
interfaces by developing explanation algorithms for helping users in
understanding policies and their decisions. The idea is leveraging the
simple structure of \PL concepts and axioms to generate high-level,
user-friendly explanations.

On the theoretical side, \PL and its combination with
\ELp and \DLLh constitute new tractable fragments of OWL2.  The
negative result on oracles with nominals
(Theorem~\ref{thm:IBQ-incompleteness}) extends a result of
\cite{DBLP:journals/jair/GrauM12} to logics that (like \PL) enjoy the
finite model property, and to IBQ mechanism where the axioms of the main
knowledge base \K may be shifted to the imported ontology \O.

Further interesting topics for future work include: an analysis of the
effects of dropping the requirement that $(\sig{\K} \cup \sig{q})
\setminus \sig{\O} \subseteq \NC$, and a complexity analysis of the
extensions of \PL obtained by adding CLASSIC's constructs, such as
number restrictions and role-value maps.

%% FUTURE WORK: IBQ FOR EFFICINECY IN OTHER LOGICS

\subsection*{Acknowledgments}

This research is funded by the European Union’s Horizon 2020 research and innovation programme under grant agreement N.~\grant.
The GDPR compliance use case -- here sketched with (\ref{gdpr1}), (\ref{gdpr2}), and Example~\ref{ex:GDPR-checking} -- is due to Benedict Whittam Smith (Thomson Reuters).

%\newpage

\bibliography{biblio,oracle}

\begin{thebibliography}{10}

\bibitem{DBLP:journals/jair/ArtaleCKZ09}
A.~Artale, D.~Calvanese, R.~Kontchakov, and M.~Zakharyaschev.
\newblock The {DL-Lite} family and relations.
\newblock {\em J. Artif. Intell. Res.}, 36:1--69, 2009.

\bibitem{DBLP:conf/ijcai/BaaderBL05}
F.~Baader, S.~Brandt, and C.~Lutz.
\newblock Pushing the {EL} envelope.
\newblock In {\em IJCAI-05}, pages 364--369. Professional Book Center, 2005.

\bibitem{DBLP:conf/dlog/2003handbook}
F.~Baader, D.~Calvanese, D.~L. McGuinness, D.~Nardi, and P.~F. Patel-Schneider,
  editors.
\newblock {\em The Description Logic Handbook: Theory, Implementation, and
  Applications}. Cambridge University Press, 2003.

\bibitem{DBLP:conf/datalog/Bonatti10}
P.~A. Bonatti.
\newblock Datalog for security, privacy and trust.
\newblock In O.~de~Moor, G.~Gottlob, T.~Furche, and A.~J. Sellers, editors,
  {\em Datalog Reloaded - First International Workshop, Datalog 2010, Oxford,
  UK, March 16-19, 2010. Revised Selected Papers}, volume 6702 of {\em Lecture
  Notes in Computer Science}, pages 21--36. Springer, 2010.

\bibitem{DBLP:conf/ijcai/Bonatti18}
P.~A. Bonatti.
\newblock Fast compliance checking in an {OWL2} fragment.
\newblock In J.~Lang, editor, {\em Proceedings of the Twenty-Seventh
  International Joint Conference on Artificial Intelligence, {IJCAI} 2018, July
  13-19, 2018, Stockholm, Sweden.}, pages 1746--1752. ijcai.org, 2018.

\bibitem{DBLP:conf/semweb/BonattiBDFKPPW18}
P.~A. Bonatti, B.~Bos, S.~Decker, J.~D. Fern{\'{a}}ndez, S.~Kirrane,
  V.~Peristeras, A.~Polleres, and R.~Wenning.
\newblock Data privacy vocabularies and controls: Semantic web for transparency
  and privacy.
\newblock In K.~K. Waterman, editor, {\em Proceedings of the Workshop on
  Semantic Web for Social Good co-located with 17th International Semantic Web
  Conference, SW4SG@ISWC 2018, Monterey, California, USA, October 9, 2018.,
  Monterey, California, USA, October 9, 2018.}, volume 2182 of {\em {CEUR}
  Workshop Proceedings}. CEUR-WS.org, 2018.

\bibitem{DBLP:journals/tkde/BonattiCOS10}
P.~A. Bonatti, J.~L.~D. Coi, D.~Olmedilla, and L.~Sauro.
\newblock A rule-based trust negotiation system.
\newblock {\em {IEEE} Trans. Knowl. Data Eng.}, 22(11):1507--1520, 2010.

\bibitem{DBLP:journals/tissec/BonattiVS02}
P.~A. Bonatti, S.~D.~C. di~Vimercati, and P.~Samarati.
\newblock An algebra for composing access control policies.
\newblock {\em {ACM} Trans. Inf. Syst. Secur.}, 5(1):1--35, 2002.

\bibitem{DBLP:conf/safecomp/BonattiKPW17}
P.~A. Bonatti, S.~Kirrane, A.~Polleres, and R.~Wenning.
\newblock Transparent personal data processing: The road ahead.
\newblock In S.~Tonetta, E.~Schoitsch, and F.~Bitsch, editors, {\em Computer
  Safety, Reliability, and Security - {SAFECOMP} 2017 Workshops, ASSURE,
  DECSoS, SASSUR, TELERISE, and TIPS, Trento, Italy, September 12, 2017,
  Proceedings}, volume 10489 of {\em Lecture Notes in Computer Science}, pages
  337--349. Springer, 2017.

\bibitem{DBLP:conf/dlog/BonattiPS15}
P.~A. Bonatti, I.~M. Petrova, and L.~Sauro.
\newblock Optimized construction of secure knowledge-base views.
\newblock In D.~Calvanese and B.~Konev, editors, {\em Proceedings of the 28th
  International Workshop on Description Logics, Athens,Greece, June 7-10,
  2015.}, volume 1350 of {\em {CEUR} Workshop Proceedings}. CEUR-WS.org, 2015.

\bibitem{DBLP:journals/jair/BorgidaP94}
A.~Borgida and P.~F. Patel{-}Schneider.
\newblock A semantics and complete algorithm for subsumption in the {CLASSIC}
  description logic.
\newblock {\em J. Artif. Intell. Res.}, 1:277--308, 1994.

\bibitem{DBLP:journals/jar/GlimmHMSW14}
B.~Glimm, I.~Horrocks, B.~Motik, G.~Stoilos, and Z.~Wang.
\newblock Hermit: An {OWL} 2 reasoner.
\newblock {\em J. Autom. Reasoning}, 53(3):245--269, 2014.

\bibitem{DBLP:journals/jair/GrauM12}
B.~C. Grau and B.~Motik.
\newblock Reasoning over ontologies with hidden content: The import-by-query
  approach.
\newblock {\em J. Artif. Intell. Res.}, 45:197--255, 2012.

\bibitem{DBLP:conf/ijcai/GrauMK09}
B.~C. Grau, B.~Motik, and Y.~Kazakov.
\newblock Import-by-query: Ontology reasoning under access limitations.
\newblock In C.~Boutilier, editor, {\em {IJCAI} 2009, Proceedings of the 21st
  International Joint Conference on Artificial Intelligence, Pasadena,
  California, USA, July 11-17, 2009}, pages 727--732, 2009.

\bibitem{DBLP:conf/ecai/HaaseL08}
C.~Haase and C.~Lutz.
\newblock Complexity of subsumption in the {\EL} family of description logics:
  Acyclic and cyclic tboxes.
\newblock In M.~Ghallab, C.~D. Spyropoulos, N.~Fakotakis, and N.~M. Avouris,
  editors, {\em {ECAI} 2008 - 18th European Conference on Artificial
  Intelligence, Patras, Greece, July 21-25, 2008, Proceedings}, volume 178 of
  {\em Frontiers in Artificial Intelligence and Applications}, pages 25--29.
  {IOS} Press, 2008.

\bibitem{DBLP:conf/kr/HorrocksKS06}
I.~Horrocks, O.~Kutz, and U.~Sattler.
\newblock The even more irresistible {SROIQ}.
\newblock In P.~Doherty, J.~Mylopoulos, and C.~A. Welty, editors, {\em
  Proceedings, Tenth International Conference on Principles of Knowledge
  Representation and Reasoning, Lake District of the United Kingdom, June 2-5,
  2006}, pages 57--67. {AAAI} Press, 2006.

\bibitem{DBLP:journals/tods/JajodiaSSS01}
S.~Jajodia, P.~Samarati, M.~L. Sapino, and V.~S. Subrahmanian.
\newblock Flexible support for multiple access control policies.
\newblock {\em {ACM} Trans. Database Syst.}, 26(2):214--260, 2001.

\bibitem{rei}
L.~Kagal, T.~W. Finin, and A.~Joshi.
\newblock A policy language for a pervasive computing environment.
\newblock In {\em 4th IEEE International Workshop on Policies for Distributed
  Systems and Networks (POLICY)}, pages 63--, Lake Como, Italy, June 2003. IEEE
  Computer Society.

\bibitem{DBLP:journals/jar/KazakovKS14}
Y.~Kazakov, M.~Kr{\"{o}}tzsch, and F.~Simancik.
\newblock The incredible {ELK} - from polynomial procedures to efficient
  reasoning with {EL} ontologies.
\newblock {\em J. Autom. Reasoning}, 53(1):1--61, 2014.

\bibitem{DBLP:conf/esws/KirraneFDMPBWDR18}
S.~Kirrane, J.~D. Fern{\'{a}}ndez, W.~Dullaert, U.~Milosevic, A.~Polleres,
  P.~A. Bonatti, R.~Wenning, O.~Drozd, and P.~Raschke.
\newblock A scalable consent, transparency and compliance architecture.
\newblock In A.~Gangemi, A.~L. Gentile, A.~G. Nuzzolese, S.~Rudolph,
  M.~Maleshkova, H.~Paulheim, J.~Z. Pan, and M.~Alam, editors, {\em The
  Semantic Web: {ESWC} 2018 Satellite Events - {ESWC} 2018 Satellite Events,
  Heraklion, Crete, Greece, June 3-7, 2018, Revised Selected Papers}, volume
  11155 of {\em Lecture Notes in Computer Science}, pages 131--136. Springer,
  2018.

\bibitem{DBLP:conf/dlog/MatentzogluBP13}
N.~Matentzoglu, S.~Bail, and B.~Parsia.
\newblock A corpus of {OWL} {DL} ontologies.
\newblock In T.~Eiter, B.~Glimm, Y.~Kazakov, and M.~Kr{\"{o}}tzsch, editors,
  {\em Informal Proceedings of the 26th International Workshop on Description
  Logics, Ulm, Germany, July 23 - 26, 2013}, volume 1014 of {\em {CEUR}
  Workshop Proceedings}, pages 829--841. CEUR-WS.org, 2013.

\bibitem{DBLP:journals/jair/MotikSH09}
B.~Motik, R.~Shearer, and I.~Horrocks.
\newblock Hypertableau reasoning for description logics.
\newblock {\em J. Artif. Intell. Res.}, 36:165--228, 2009.

\bibitem{DBLP:conf/kr/OrtizRS10}
M.~Ortiz, S.~Rudolph, and M.~Simkus.
\newblock Worst-case optimal reasoning for the horn-dl fragments of {OWL} 1 and
  2.
\newblock In F.~Lin, U.~Sattler, and M.~Truszczynski, editors, {\em Principles
  of Knowledge Representation and Reasoning: Proceedings of the Twelfth
  International Conference, {KR} 2010, Toronto, Ontario, Canada, May 9-13,
  2010}. {AAAI} Press, 2010.

\bibitem{DBLP:conf/ijcai/OrtizRS11}
M.~Ortiz, S.~Rudolph, and M.~Simkus.
\newblock Query answering in the horn fragments of the description logics
  {SHOIQ} and {SROIQ}.
\newblock In T.~Walsh, editor, {\em {IJCAI} 2011, Proceedings of the 22nd
  International Joint Conference on Artificial Intelligence, Barcelona,
  Catalonia, Spain, July 16-22, 2011}, pages 1039--1044. {IJCAI/AAAI}, 2011.

\bibitem{DBLP:journals/ws/SteigmillerLG14}
A.~Steigmiller, T.~Liebig, and B.~Glimm.
\newblock Konclude: System description.
\newblock {\em J. Web Semant.}, 27-28:78--85, 2014.

\bibitem{DBLP:conf/policy/UszokBJSHBBJKL03}
A.~Uszok, J.~M. Bradshaw, R.~Jeffers, N.~Suri, P.~J. Hayes, M.~R. Breedy,
  L.~Bunch, M.~Johnson, S.~Kulkarni, and J.~Lott.
\newblock {KAoS} policy and domain services: Towards a description-logic
  approach to policy representation, deconfliction, and enforcement.
\newblock In {\em 4th IEEE International Workshop on Policies for Distributed
  Systems and Networks (POLICY)}, pages 93--96, Lake Como, Italy, June 2003.
  IEEE Computer Society.

\bibitem{DBLP:journals/csec/WooL93}
T.~Y.~C. Woo and S.~S. Lam.
\newblock Authorizations in distributed systems: {A} new approach.
\newblock {\em Journal of Computer Security}, 2(2-3):107--136, 1993.

\end{thebibliography}
\bibliographystyle{abbrv}

\end{document}